\setlist[enumerate]{topsep=1pt,itemsep=-0.5pt,parsep=2pt}
\newtheorem{thm}{Theorem}[section]
\newtheorem{cor}{Corollary}[section]
\newtheorem{lem}{Lemma}[section]
\newtheorem{prop}{Proposition}[section]
\newtheorem{assumption}{Assumption}[section]
\newtheorem{rem}{Remark}[section]
\numberwithin{equation}{section}
\newenvironment{proofof}[1]{\par\noindent{\bfseries\upshape
  Proof of #1\ }}{\hfill\BlackBox\\*[-0.2mm]}
\def\re{\mathbb{R}} 
 \def\rn{\mathbb{R}^n}
\def\e{e}
\def\F{F}
\def\fe{\phi}
\def\Fe{\Phi}
\def\i{i}
\def\bi{d_{\pi^o,\i}}
\def\1{\mathbf{1}}
\def\0{\mathbf{0}}
\def\w{\theta}
\def\Z{Z}
\def\M{M}
\def\bM{{\bar M}}
\def\P{P_\pi}
\def\PL{P_{\pi,\gamma}^\lambda}
\def\Pb{P_{\pi^o}}
\def\Po{\mathbf{P}^o}
\def\Gm{\Gamma}
\def\Lm{\Lambda}
\def\r{r_\pi}
\def\rl{r_{\pi,\gamma}^\lambda}
\def\d{d_{\pi^o}}
\def\bb{b}
\def\bA{C}
\def\G{G}
\def\C{L}
\def\wtld{\widetilde}
\def\E{\mathbb{E}}
\def\A{\mathcal{A}}
\def\S{\mathcal{S}}
\def\H{B}
\def\asto{\overset{a.s.}{\to}}
\def\J{\mathcal{J}}
\def\mF{G}
\def\mC{H}
\def\nr{\mathbf{r}}
\def\la{($\lambda$)\xspace}
\title{On Convergence of Emphatic Temporal-Difference Learning%
\thanks{This research was supported by a grant from Alberta Innovates--Technology Futures. A shorter (28-page) version of the first version of this paper appeared at \emph{the 28th Annual Conference on Learning Theory} (COLT), Paris, France, 2015.}%
\thanks{Here are the corrections made since the first version (the conclusions of the paper have not been affected): $\qquad \qquad$ \hfill
(i)~In the second version we corrected an oversight in a proof in the first version: the original Prop.~C.1 needs to be proved based on the proof of the first part of Prop.~C.2. Corrections were made in the statement of Prop.~C.1, the last paragraph of the proof of Prop.~C.2, and the references to these two propositions in Appendix C and Section 2.2.\hfill
(ii)~In this third version we rewrote Footnote 17 on page 28 to correct a proof argument used in Appendix A.4 of the second version.
}}
\begin{document}

\maketitle

\begin{abstract}
We consider emphatic temporal-difference learning algorithms for policy evaluation in discounted Markov decision processes with finite spaces.
Such algorithms were recently proposed by \citet*{SuMW14} as an improved solution to the problem of divergence of off-policy temporal-difference learning with linear function approximation. We present in this paper the first convergence proofs for two emphatic algorithms, ETD($\lambda$) and ELSTD($\lambda$). We prove, under general off-policy conditions, the convergence in $L^1$ for ELSTD($\lambda$) iterates, and the almost sure convergence of the approximate value functions calculated by both algorithms using a single infinitely long trajectory. Our analysis involves new techniques with applications beyond emphatic algorithms leading, for example, to the first proof that standard TD($\lambda$) also converges under off-policy training for $\lambda$ sufficiently large.
\end{abstract}

\bigskip
\bigskip
\begin{keywords}
Markov decision processes; approximate policy evaluation; reinforcement learning; temporal difference methods; importance sampling; stochastic approximation; convergence
\end{keywords}

\clearpage
\tableofcontents

\clearpage

\section{Introduction} \label{sec-intro}
We consider discounted finite-spaces Markov decision processes (MDPs) and the problem of learning an approximate value function for a given policy from \emph{off-policy} data, that is, from data due to a different policy. The first policy is called the \emph{target} policy and the second is called the \emph{behavior} policy. For example, one may want to learn value functions for many target policies in parallel from one (exploratory) behavior; this requires off-policy learning.

We focus on temporal-difference (TD) methods with linear function approximation \citep{Sut88}.
Such methods are typically convergent when the target and behavior policies are the same (the \emph{on-policy} case), but not in the off-policy case~\citep{tr-disc}.
This difficulty is intrinsic to sampling states according to an arbitrary policy.\footnote{See the papers \citep{Baird95,tr-disc,SuMW14} and the books \citep{BET,SuB} for related examples and discussion.} Gradient-based or least squares-based approaches have been used to avoid this difficulty.\footnote{See e.g., \citep{maei11,by08,bruno14,dnp14}.}

Recently, Sutton, Mahmood, and White \citeyearpar{SuMW14} proposed a new approach to address this issue more directly. They introduced an \emph{emphatic TD\la} algorithm, or \emph{ETD\la} as we call it here.  
The approach is related to the early work on episodic off-policy TD($\lambda$) \citep{offpolicytd-psd}, and is based on the idea of re-weighting the states when forming the eligibility traces in TD($\lambda$), so that the weights reflect the occupation frequencies of the target policy rather than the behavior policy. The result of this weighting scheme is that the ``mean updates'' associated with ETD($\lambda$) now involve a negative definite matrix, similar to the convergent on-policy TD algorithms. This is a salient feature of the emphatic TD method.

The purpose of this paper is to investigate the convergence properties of ETD($\lambda$) and its least-squares version, ELSTD($\lambda$).
Under general conditions, we show that (see Theorems~\ref{thm-lstd},~\ref{thm-td-unconstrained}):
\begin{enumerate}
\item[(i)] for stepsizes decreasing as $t^{-c}, c \in (1/2, 1]$, the matrix and vector iterates generated by ELSTD($\lambda$) converge in $L^1$ to the desired limits, which define a projected Bellman equation;
\item[(ii)] for stepsizes decreasing as $t^{-1}$, both algorithms generate approximate value functions that converge almost surely to the desired solution of an associated projected Bellman equation.
\end{enumerate}
These results show that the new emphatic TD algorithms are sound for off-policy learning.

Regarding proof techniques, we note that although the ``mean updates'' of ETD($\lambda$) involve a negative definite matrix, it is still difficult to directly apply results from stochastic approximation theory to establish rigorously the association between the ``mean updates'' and the ETD($\lambda$) iterates, thereby obtaining the desired convergence. The stability criterion of \citep{BorM00} (see also \citep[Chap.~3]{Bor08}) and the ``natural averaging'' argument in \citep[Chap.~6]{Bor08} seem suitable, but they require a certain tightness condition that is hard to verify in the general off-policy learning setting where the variances of the trace iterates can grow to infinity with time.\footnote{Related examples can be found in \citep{gi-sampling,rj-tdimportance,SuMW14}.}
The analysis of \citep{tr-disc} has a strong condition (Condition (6), p.\ 683, in particular), which is difficult to satisfy unless the trace iterates are uniformly bounded. But in general, this would impose a strong restriction on the behavior policy~\cite[cf.][Prop.\ 3.1, Footnote 3, and the discussion in p.\ 3320-3322]{Yu-siam-lstd}.

For regular off-policy LSTD($\lambda$) and TD($\lambda$) \citep{by08}, it has been shown by \citet{Yu-siam-lstd} that the associated joint process of states and trace iterates exhibit useful properties, by which convergence results for LSTD($\lambda$) can be derived. Subsequently, the results can be used to furnish the conditions of a convergence theorem from stochastic approximation theory \citep{KuY03} and yield convergence results for TD($\lambda$). In this paper we will take the proof approach used in \citep{Yu-siam-lstd}. We note, however, that most of the intermediate results needed in our case require different and more involved proofs, due to the complexity of the emphatic TD method. Furthermore, we will give a new argument to prove the almost sure convergence of ETD($\lambda$), which applies also to the regular off-policy TD($\lambda$) of \citep{by08} for $\lambda$ near $1$. This improves a result of \citep{Yu-siam-lstd}, which only dealt with a constrained version of TD($\lambda$) that restricts the iterates to lie in a bounded set.

This paper is organized as follows. In Section~\ref{sec-alg} we formulate the approximate policy evaluation problem, and we describe the ETD($\lambda)$ and ELSTD($\lambda$) algorithms, and the approximate Bellman equations they aim to solve. We also state our main convergence results in this section. In Section~\ref{sec-elstd} we prove our convergence theorem for ELSTD($\lambda$), and prepare results needed for analyzing ETD($\lambda$) with a ``mean ODE''
\footnote{ODE stands for ordinary differential equation.} 
method. In Section~\ref{sec-etd} we prove our convergence theorem for ETD($\lambda$).
We collect long proofs, technical lemmas and other related results in Appendices~\ref{appsec-a}-\ref{appsec-ndef}.

\section{Emphatic TD Algorithms: ETD($\lambda$) and ELSTD($\lambda$)} \label{sec-alg}

\subsection{A Policy Evaluation Problem in Off-Policy Learning}
Let $\S = \{ 1, \ldots, N\}$ be the state space, and let $\A$ be a finite set of actions. We assume, without loss of generality, that for every state, all actions are feasible. If we take action $a \in \A$ at state $s \in \S$, the system moves from state $s$ to state $s'$ with probability $p(s' \!\mid s, a)$, and we receive a random reward with mean $r(s,a,s')$ and bounded variance, according to a probability distribution $q(\cdot \mid s, a, s')$.

We are interested in evaluating the performance of a given stationary policy
\footnote{A \emph{stationary policy} is a decision rule that specifies the probability of taking action $a$ at state $s$ for every $s \in \S$.}
$\pi$, the target policy, without knowledge of the MDP model.
The evaluation is to be done by using just observations of state transitions and rewards, while following a stationary policy $\pi^o \not= \pi$, the behavior policy.

Starting from time $t=0$, applying $\pi$ would generate a sequence of rewards $R_0, R_1, \ldots$. 
The performance of $\pi$ will be measured in terms of the expected total rewards attained under $\pi$ up to a random termination time $\tau \geq 1$ that depends on the states in a Markovian way.
In particular, if at time $t \geq 1$, the state is $s$ and termination has not occurred yet, then the probability of $\tau = t$ (terminating at time $t$) is $1 - \gamma(s)$, for a given parameter $\gamma(s) \in [0,1]$.

Let $\P$ denote the transition matrix of the Markov chain on $\S$ induced by $\pi$.  Let $\Gm$ denote the $N \times N$ diagonal matrix with diagonal entries $\gamma(s), s \in \S$. Denote by $\pi(a \!\mid s)$ and $\pi^o(a \!\mid s)$ the probability of taking action $a$ at state $s$ under the policy $\pi$ and $\pi^o$, respectively.

\begin{assumption}[Conditions on the target and behavior policies] \label{cond-bpolicy} \hfill
\begin{enumerate}
\item[{\rm (i)}] The target policy $\pi$ is such that $(I - \P \Gm)^{-1}$ exists (equivalently, termination occurs with probability $1$ under $\pi$, for any initial state).
\item[{\rm (ii)}] The behavior policy $\pi^o$ induces an irreducible Markov chain on $\S$, and moreover, for all $(s,a) \in \mathcal{S} \times \mathcal{A}$, $\pi^o(a \!\mid s) > 0$ if $\pi(a \!\mid s) > 0$.
\end{enumerate}
\end{assumption}

Under Assumption~\ref{cond-bpolicy}(i), we define the \emph{value function} of the target policy $\pi$ by $v_\pi: \S \to \re$,
$v_{\pi}(s) = \E_\pi \left[ \, \sum_{t=0}^{\tau-1} R_t  \, \Big| \, S_0 = s \right]$,
where $\E_{\pi}$ denotes expectation with respect to the probability distribution of the process of states, actions and rewards, $(S_t, A_t, R_t)$, $t \geq 0$, induced by the policy $\pi$.
Let $r_{\pi}$ be the expected one-stage reward function under $\pi$; i.e., $r_{\pi}(s) = \E_\pi \big[ R_0 \mid S_0=s \big]$ for $s \in \S$. 
Then the desired function $v_{\pi}$ can be seen to satisfy uniquely the Bellman equation
\footnote{One can verify this Bellman equation directly. It also follows from the standard MDP theory \citep[see e.g.,][]{puterman94}, as by definition $v_\pi$ here can be related to a value function in a discounted MDP where the discount factors depend on state transitions, similar to discounted semi-Markov decision processes.}
$$v_\pi = r_{\pi} + \P \Gm \, v_\pi, \qquad \text{i.e.}, \quad v_\pi = (I - \P \Gm)^{-1} r_\pi.$$

\subsection{Algorithms}

We consider computing $v_\pi$ with the ETD($\lambda$) algorithm \citep{SuMW14} and its least-squares version, ELSTD$(\lambda)$, 
using linear function approximation, while following the behavior policy $\pi^o$. 
Let $E \subset \re^N$ be the approximation subspace of dimension $n$, and let $\Fe$ be an $N \times n$ matrix whose columns form a basis of $E$. 
The approximation problem is to find a parameter vector $\w \in \rn$ such that $v = \Fe \w \in E$ approximates $v_{\pi}$ well. 

We express $v = \Fe\w$ as $v(s) = \fe(s)^\top \w$, $s \in \S$, where the superscript $\text{}^\top$ stands for transpose, and $\fe(s) \in \rn$ is the transposed $s$-th row of $\Fe$ and represents the ``features'' of state $s$. 
Like standard TD($\lambda$), if a transition $(s,s')$ occurs with reward $r'$, ETD($\lambda$) and ELSTD$(\lambda)$ use the ``temporal difference'' term, $r' + \gamma(s') \fe(s')^\top \w - \fe(s)^\top \w$,
to adjust the parameter $\w$ for the approximate value function. Also like standard TD($\lambda$), these algorithms aim to solve a projected (single-step or multistep) Bellman equation; but we shall defer the discussion of this until after describing the ETD($\lambda$) algorithm.

We focus on a general form of the ETD($\lambda$) algorithm, which uses state-dependent $\lambda$ values specified by a function $\lambda: \S \to [0,1]$. 
Inputs to the algorithm are the states, actions and rewards, $\{(S_t, A_t, R_{t}), t\geq 0 \}$, generated under the behavior policy $\pi^o$,
where $R_{t}$ is the random reward received upon the transition from state $S_t$ to $S_{t+1}$ with action $A_t$.
The algorithm can access the following functions, in addition to the features $\fe(s)$: 
\begin{enumerate}
\item[(i)] $\gamma: \S \to [0,1]$, which specifies the termination probabilities (or equivalently, the state-dependent discount factors) that define $v_{\pi}$, as described earlier;
\item[(ii)] $\lambda: \S \to [0,1]$, which determines the single or multi-step Bellman equation for the algorithm [cf.\ the subsequent Eqs.~(\ref{eq-bellman})-(\ref{eq-bellman-def})]; 
\item[(iii)] $\rho: \mathcal{S} \times \mathcal{A} \to \re_+$ given by $\rho(s, a) = \pi(a \!\mid s)/ \pi^o(a \!\mid s)$ (with $0/0=0$),  which gives the likelihood ratios for action probabilities that can be used to compensate for sampling states and actions according to the behavior policy $\pi^o$ instead of the target policy $\pi$;
\item[(iv)] $\i: \mathcal{S} \to \re_+$, which gives the algorithm additional flexibility to weigh states according to the degree of ``interest'' indicated by $\i(s)$.
\end{enumerate}

The ETD($\lambda$) algorithm does the following. For each $t \geq 0$, let $\alpha_t \in (0,1]$ be a stepsize parameter, and to simplify notation, let
$$ \rho_t = \rho(S_t, A_t), \qquad \gamma_t = \gamma(S_t), \qquad \lambda_t = \lambda(S_t).   $$
ETD($\lambda$) calculates recursively $\w_t \in \rn$, $t \geq 0$, according to
\begin{equation} \label{eq-emtd0}
  \w_{t+1} = \w_t + \alpha_t \, \e_t \cdot \rho_t \, \big( R_{t} + \gamma_{t+1} \fe(S_{t+1})^\top \w_t - \fe(S_t)^\top \w_t \big),
\end{equation}
where $\e_t \in \rn$ (called the ``eligibility trace'') is calculated together with two nonnegative scalar iterates $(\F_t, \M_t)$ according to:
\footnote{For insights about ETD($\lambda$), see \citep{SuMW14,MYWS15}. Our definition (\ref{eq-td1}) of $\{\e_t\}$ differs slightly from its original definition, but the two are equivalent; ours appears to be more convenient for our analysis.}
\begin{align}
   \F_t & = \gamma_t \, \rho_{t-1}  \,  \F_{t-1} + \i(S_t),    \label{eq-td3} \\
   \M_t & = \lambda_t \, \i(S_t) + ( 1 - \lambda_t ) \, \F_t, \label{eq-td2} \\
   \e_t & =  \lambda_t \, \gamma_t \,  \rho_{t-1} \, \e_{t-1} + \M_t \, \fe(S_t). \label{eq-td1} 
\end{align}
For $t = 0$, $(\e_0, \F_0, \w_0)$ are given as an initial condition of the algorithm.

We recognize that the iteration (\ref{eq-emtd0}) has the same form as standard TD, but the trace $\e_t$ is calculated differently, involving an ``emphasis'' weight $\M_t$ on the state $S_t$, which itself evolves along with the iterate $\F_t$, called the ``follow-on'' trace. If $\M_t$ is always set to $1$ regardless of $\F_t$ and $\i(\cdot)$, then the iteration (\ref{eq-emtd0}) reduces to the standard TD($\lambda)$ in the case where $\gamma$ and $\lambda$ are constants.

To explain at a high level what ETD($\lambda$) aims to achieve with the weighting scheme (\ref{eq-td3})-(\ref{eq-td1}), let us discuss the approximate Bellman equation it aims to solve. Associated with ETD($\lambda$) is a generalized Bellman equation of which $v_\pi$ is the unique solution \citep{Sut95}:\footnote{For the details of this Bellman equation, we refer the readers to the early work \citep{Sut95,SuB} and the recent work \citep{SuMW14}.}
\begin{equation} \label{eq-bellman}
  v = \rl + \PL \, v.
\end{equation}  
Here $\PL$ is an $N \times N$ substochastic matrix, and $\rl \in \re^N $ is a vector of expected total rewards attained by $\pi$ up to some random time depending on the functions $\gamma$ and $\lambda$, given by
\begin{equation} \label{eq-bellman-def}
  \PL  = I - (I - \P \Gm \Lm)^{-1} \, (I - \P \Gm), \qquad \rl  = (I - \P \Gm \Lm)^{-1} \, \r,
\end{equation}  
where $\Lm$ is a diagonal matrix with diagonal entries $\lambda(s), s \in \S$.
ETD($\lambda$) aims to solve a projected version of the Bellman equation (\ref{eq-bellman}) \citep[see][]{SuMW14}:
\begin{equation} \label{eq-proj-bellman}
       v = \Pi \big( \rl + \PL \, v \big),  \quad v \in E,  \qquad \Longleftrightarrow \qquad \bA \w + \bb = 0, \quad \w \in \rn.
\end{equation}       
In the above, $\Pi$ is the projection onto $E$ with respect to a weighted Euclidean norm or seminorm. The weights that define this norm also define the diagonal entries of a diagonal matrix $\bM$, and are given by
\begin{equation} \label{eq-m}
  diag(\bM)= \bi^{\top} (I - \PL)^{-1}, \qquad
\text{with} \  \ \ \bi \in \re^N, \, \bi(s) = d_{\pi^o}(s) \cdot  \i(s), \ s \in \S,
\end{equation}
where $d_{\pi^o}(s) > 0$ denotes the steady state probability of state $s$ for the behavior policy $\pi^o$, under Assumption~\ref{cond-bpolicy}(ii).
For the corresponding linear equation in the $\w$-space in Eq.~(\ref{eq-proj-bellman}), 
\begin{align}  \label{eq-Ab}
    \bA    =  - \Fe^\top  \bM \, (I - \PL) \,  \Fe , \qquad \quad
    \bb   =  \Fe^\top  \bM \,  \rl.
\end{align}  

Important for the convergence of ETD($\lambda$) is the negative definiteness of $\bA$. It can be shown that under Assumption~\ref{cond-bpolicy}, $\bA$ is negative definite whenever $\bA$ is nonsingular (Prop.~\ref{prp-ndef}, Appendix~\ref{appsec-ndef}).
\footnote{Prior to our work, \citet{SuMW14} proved the negative definiteness of $\bA$ for positive $\i(\cdot)$ under Assumption~\ref{cond-bpolicy}.} 
By comparison, if we set $\M_t=1$ regardless of $\F_t$ and $\i(\cdot)$, the weights that define the projection norm and $diag(\bM)$ would simply become $d_{\pi^o}$, the same as in the regular off-policy TD($\lambda$). If we set $\M_t=\i(s)$, then the weights are given by $\bi$. Neither of these cases guarantees $C$ to be negative definite, unless $\lambda$ is sufficiently close to $1$.
Having the desirable negative definiteness property of $C$ 
is one of the motivations for introducing the weighting scheme (\ref{eq-td3})-(\ref{eq-td1}) in ETD($\lambda$) \citep{SuMW14}. 

For the convergence analysis in this paper, we shall assume:

\begin{assumption}[Nonsingularity condition] \label{cond-A}
The matrix $\bA$ given in Eq.~(\ref{eq-Ab}) is nonsingular.
\end{assumption}

We remark that for ETD($\lambda$) under Assumption~\ref{cond-bpolicy}, $C$ is always negative semidefinite \citep{SuMW14} (cf.\ our Prop.~\ref{prp-ndef0}, Appendix~\ref{appsec-ndef}), and the nonsingularity condition above is indeed equivalent to $C$ being negative definite (Prop.~\ref{prp-ndef}). This condition is fairly mild and allows $\i(s)=0$ for some states $s$. Specifically, as we prove in Appendix~\ref{appsec-ndef} (see Prop.~\ref{prp-ndef}), Assumption~\ref{cond-A} is equivalent to a condition on the approximation subspace, which requires merely that the set of feature vectors of those states with positive emphasis weights contains $n$ linearly independent vectors (cf.\ Remark~\ref{rmk-seminorm2}). Moreover, this requirement can be fulfilled easily without knowledge of the model (see Cor.~\ref{cor-ndef}, Remark~\ref{rmk-seminorm2}). We also note that when $\bA$ is negative definite, the projection $\Pi$ in Eq.~(\ref{eq-proj-bellman}) is well-defined (with respect to a seminorm if in Eq.~(\ref{eq-m}) some diagonal entries of $\bM$ equal zero), the projected Bellman equation (\ref{eq-proj-bellman}) has a unique solution, and bounds on the approximation error of ETD($\lambda$) can be derived using the approach of \citet{bruno-oblproj}. (For details of this discussion, see Remark~\ref{rmk-seminorm1} in Appendix~\ref{appsec-ndef}.) 

The ELSTD($\lambda$) algorithm aims to solve the same projected Bellman equation (\ref{eq-proj-bellman}) as ETD($\lambda$).
ELSTD($\lambda$) 
calculates iteratively an $n \times n$ matrix $\bA_t$ and a vector $\bb_t \in \rn$ according to
\begin{align}
  \bA_{t+1}  & = ( 1 - \alpha_t) \, \bA_t + \alpha_t \, \e_t \cdot \rho_t \big( \gamma_{t+1} \fe(S_{t+1})^\top - \fe(S_t)^\top \big),  \label{eq-lstd1} \\
  \bb_{t+1}  & =  ( 1 - \alpha_t) \, \bb_t + \alpha_t \,  \e_t \cdot \rho_t \, R_{t}, \label{eq-lstd2} 
\end{align}  
where the trace $\e_t$ is calculated according to Eqs.~(\ref{eq-td3})-(\ref{eq-td1}) as in ETD($\lambda$).
ELSTD$(\lambda)$ sets $\w_t = - \bA_{t}^{-1} \bb_t$, the solution to $\bA_{t} \w + \bb_t = 0$, when $\bA_t$ is invertible.

Like ETD($\lambda$), without the weighting scheme~(\ref{eq-td3})-(\ref{eq-td1}), ELSTD($\lambda$) would reduce essentially to the regular LSTD($\lambda$) (see e.g., \citep{lstd,Yu-siam-lstd} for on-policy and off-policy LSTD($\lambda$)).

\subsection{Convergence Results} 
We analyze ETD($\lambda$) and ELSTD($\lambda$) with diminishing stepsizes. Summarized below are their convergence properties,
which we will establish in the rest of this paper. 
In what follows, we denote by $\| \cdot\|$ the infinity norm for both vectors and matrices (viewed as vectors). For different stepsize conditions, our results will involve different convergence modes: convergence in $L^1$,
\footnote{For vector-valued random variables $X$, $X_t, t \geq 0$, 
by ``$\{X_t\}$ converges to $X$ in $L^1$'' we mean $\E [ \| X_t - X\| ] \overset{t \to \infty}{\to} 0$.}
in probability, or almost sure (a.s.) convergence (we write $\asto$ for ``converges almost surely''). 
First, we state a general stepsize condition that we will use.

\begin{assumption}[Stepsize condition] \label{cond-stepsize}
The stepsize sequence $\{ \alpha_t\}$ is deterministic and eventually nonincreasing, and satisfies $\alpha_t \in (0,1]$, $\sum_t \alpha_t = \infty$, $\sum_t \alpha_t^2 < \infty$.
\end{assumption}

Under the above condition we may take $\alpha_t = t^{-c}, c \in (1/2,1]$. However, stepsizes decreasing as $t^{-1}$ will be required in our almost sure convergence results; some cases will require $\alpha_t = O(1/t)$ with $\tfrac{\alpha_t - \alpha_{t+1}}{\alpha_t}= O(1/t)$.
\footnote{We write $\delta_t = O(1/t)$ for a scalar sequence $\{\delta_t\}$, if for some $c > 0$, $0 \leq \delta_t \leq c/t$ for all $t$.} 
(For instance, $\alpha_t = c_1/(c_2 +t)$ for some constants $c_1, c_2 > 0$.)

Our results are as follows. Let $\w^*$ denote the desired limit for ETD($\lambda$):
$$\w^* = - \bA^{-1} \bb, \qquad \text{for  $\bA$,   $\bb$  defined by Eq.~(\ref{eq-Ab}) under Assumptions~\ref{cond-bpolicy},~\ref{cond-A}.}  $$

\begin{thm}[$L^1$ and almost sure convergence of ELSTD($\lambda$) Iterates] 
\label{thm-lstd} \hfill \\
Under Assumptions~\ref{cond-bpolicy},~\ref{cond-stepsize}, for any given initial $(\e_0,\F_0, \bA_0,\bb_0)$, the sequence $\{(\bA_t, \bb_t)\}$ generated by the ELSTD($\lambda$) algorithm (\ref{eq-lstd1})-(\ref{eq-lstd2}) converges in $L^1$:
$$ \lim_{t \to \infty} \E \big[ \big\| \bA_t - \bA \big\| \big] = 0,   \qquad  \lim_{t \to \infty} \E \big[ \big\| \bb_t - \bb \big\| \big] = 0.$$
If in addition the stepsize is given by $\alpha_t = 1/(t+1)$, then 
$\bA_t \asto \bA$, $\bb_t \asto \bb.$
\end{thm}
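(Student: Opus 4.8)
The plan is to view the ELSTD($\lambda$) recursions as stochastic averaging schemes driven by the joint Markov chain $\Z_t = (S_t, \e_t, \F_t)$, whose one-step transition also produces $A_t$, $S_{t+1}$ and $R_t$, and to reduce both assertions to an ergodic analysis of this chain. Writing the per-step increments as $Y_t = \e_t\, \rho_t\big(\gamma_{t+1}\fe(S_{t+1})^\top - \fe(S_t)^\top\big)$ and $y_t = \e_t\,\rho_t R_t$, the updates (\ref{eq-lstd1})--(\ref{eq-lstd2}) read $\bA_{t+1} = (1-\alpha_t)\bA_t + \alpha_t Y_t$ and $\bb_{t+1} = (1-\alpha_t)\bb_t + \alpha_t y_t$. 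For $\alpha_t = 1/(t+1)$ (so $\alpha_0=1$) these telescope into the exact empirical averages $\bA_{t+1} = \tfrac{1}{t+1}\sum_{k=0}^{t} Y_k$ and $\bb_{t+1} = \tfrac{1}{t+1}\sum_{k=0}^{t} y_k$, erasing the initial condition; the almost sure claim is then precisely a strong law of large numbers for the Markov-modulated sequences $\{Y_k\}$ and $\{y_k\}$, with the limits $\bA$, $\bb$ to be identified as their stationary expectations.

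First I would pin down the first-moment behavior of the traces. Although $\{S_t\}$ is a finite irreducible chain with stationary distribution $\d$, the iterates $\F_t$ and $\e_t$ live on an unbounded space and their second moments may diverge; nonetheless their first moments remain controlled. Introducing the state-weighted means $f_t(s) = \E[\F_t\,\1_{\{S_t=s\}}]$ and using the likelihood-ratio identity $\rho(s,a)\,\pi^o(a \!\mid s) = \pi(a \!\mid s)$, a short conditioning argument gives the closed linear recursion $f_t = \Gm\P^\top f_{t-1} + diag(\i)\,d_t$, where $d_t$ is the time-$t$ state distribution. Since $\P\Gm$ has spectral radius less than $1$ under Assumption~\ref{cond-bpolicy}(i) and $d_t \to \d$, the means $f_t$ converge; an analogous but heavier computation, now coupling $\F_t$, $\M_t$ and $\e_t$ through (\ref{eq-td3})--(\ref{eq-td1}) and governed by the contraction $\P\Gm\Lm$, shows the state-weighted means of $\e_t$ converge as well. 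Taking expectations in the averaging recursions and assembling these limits then yields $\E[\bA_t]\to\bA$ and $\E[\bb_t]\to\bb$, matching Eq.~(\ref{eq-Ab}). This step is where the emphatic weighting $\M_t$ makes the bookkeeping heavier than for ordinary LSTD($\lambda$), but it stays tractable because only first moments enter.

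The substantive part is to upgrade convergence of the means to $L^1$ and almost sure convergence, i.e.\ to control the fluctuations of the averages. Following and extending the program of \citep{Yu-siam-lstd}, I would show that $\Z_t$ admits a unique invariant probability measure $\zeta$, that the trace has finite first moment under $\zeta$, and that $\Z_t$ approaches $\zeta$ strongly enough to support an ergodic theorem for functions that are only first-moment (not second-moment) integrable. Applied to $Y_k$ and $y_k$ with $\alpha_t = 1/(t+1)$, this ergodic theorem delivers the strong law and hence $\bA_t \asto \bA$, $\bb_t \asto \bb$. For general stepsizes under Assumption~\ref{cond-stepsize}, I would instead split $\bA_t - \bA = (\bA_t - \E[\bA_t]) + (\E[\bA_t]-\bA)$: the second term vanishes by the moment analysis above, while for the first I would bound its $L^1$ norm by truncating each trace into a bounded part---to which standard weak-law estimates using the mixing of $\Z_t$ and $\sum_t\alpha_t^2 < \infty$ apply---plus a tail whose contribution is uniformly small in $L^1$ thanks to the finite first moments.

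The main obstacle is precisely this absence of $L^2$ control: in the general off-policy regime the variances of $\F_t$ and $\e_t$ can grow without bound, so neither the stability criterion of \citep{BorM00} nor the usual stochastic-approximation and law-of-large-numbers tools---all of which presuppose uniform integrability or bounded second moments---apply directly. The resolution rests on two structural features: the finite, regenerative state chain, and the contraction induced by $\P\Gm$ (respectively $\P\Gm\Lm$) after the change of measure, which keeps the first moments of the traces finite even as higher moments blow up. Establishing the invariant-measure and ergodicity properties of $\Z_t$ carefully enough that an ergodic theorem valid under only first-moment integrability can be invoked---while handling the coupled $(\F_t,\M_t,\e_t)$ dynamics of the emphatic scheme---is the technically demanding core of the argument, and is where the proof departs most from the ordinary-LSTD treatment of \citep{Yu-siam-lstd}.
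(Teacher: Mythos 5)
Your overall architecture matches the paper's (and its antecedent in \citep{Yu-siam-lstd}): control the first moments of the traces, identify the limits $\bA,\bb$ as stationary expectations, use an ergodic theorem for stationary processes to get the strong law when $\alpha_t=1/(t+1)$, and a truncation argument for the $L^1$ claim under general stepsizes. Your first-moment recursion $f_t=\Gm \P^\top f_{t-1}+diag(\i)\,d_t$ is correct and is a clean alternative route to the limit identification done in the paper's Lemma~\ref{lma-lstd}. However, there is a genuine gap in your $L^1$ step. You propose to truncate each trace \emph{in magnitude} into a bounded part plus a tail, and assert the tail is ``uniformly small in $L^1$ thanks to the finite first moments.'' That implication is false: $\sup_t \E[\|\e_t\|]<\infty$ gives tightness but not uniform integrability, i.e.\ it does not give $\sup_t \E\big[\|\e_t\|\,\mathbb{1}(\|\e_t\|>M)\big]\to 0$ as $M\to\infty$. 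Moreover, the bounded part $\e_t\,\mathbb{1}(\|\e_t\|\le M)$ still depends on the entire past, so it is not clear what ``standard weak-law estimates using the mixing of $Z_t$'' would apply to it --- and in the paper the ergodicity/mixing of the infinite-state chain $\{Z_t\}$ is itself \emph{derived from} the $L^1$ convergence theorem (via Theorem~\ref{thm-l1conv} and Prop.~\ref{prp-2}), so invoking it to prove the $L^1$ result risks circularity.

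The missing device is the \emph{temporal} truncation of Prop.~\ref{prp-3}: define $(\tilde\e_{t,K},\tilde\F_{t,K})$ from only the last $2K$ transitions, and prove $\E[\|Y_t-Y_{t,K}\|]\le \C_K$ uniformly in $t$ with $\C_K\downarrow 0$, using the supermartingale/contraction estimates coming from $(\P\Gm)^k,(\P\Gm\Lm)^k\to 0$. This does two jobs at once: it supplies exactly the uniform $L^1$ tail bound your magnitude truncation lacks (and, as a byproduct, uniform integrability of the traces), and it makes the truncated increment a function of the \emph{finite-state} chain $(S_{t-2K},A_{t-2K},\ldots,S_{t+1})$, to which standard finite-chain stochastic-approximation results apply directly under Assumption~\ref{cond-stepsize} --- no mixing of the infinite-state chain is needed. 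With that lemma in hand, your three-way split $\bA_t-\tilde\G_{t,K}$, $\tilde\G_{t,K}-\G^*_K$, $\G^*_K-\bA$ closes the $L^1$ argument, and the ergodicity needed for the almost sure part can then be established afterwards (the extension from $\zeta$-a.e.\ initial condition to every initial $(\e_0,\F_0)$ also requires the a.s.\ merging of traces from different initializations, Prop.~\ref{prp-2}, which you should state explicitly). Finally, the reward noise $R_t-r(S_t,A_t,S_{t+1})$ needs a separate martingale treatment in the $L^1$ part (Prop.~\ref{prp-noise}), since it is not covered by a Lipschitz-in-$y$ function of $(Y_t,S_t,A_t,S_{t+1})$.
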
 

The preceding theorem yields immediately the convergence of the parameter sequence $\{\w_t\}$ generated by ELSTD($\lambda$):

\begin{cor}[Convergence of ELSTD($\lambda$)]
Let Assumptions~\ref{cond-bpolicy}-\ref{cond-stepsize} hold. Let $\{\w_t\}$ be generated by the ELSTD($\lambda$) algorithm (\ref{eq-lstd1})-(\ref{eq-lstd2}) as $\w_t=-\bA_t^{-1} \bb_t$. Then for any given initial $(\e_0,\F_0, \bA_0,\bb_0)$, $\{\w_t\}$ converges to $\w^*$ in probability; if in addition $\alpha_t=1/(t+1)$, then $\w_t \asto \w^*$.
\end{cor}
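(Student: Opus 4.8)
The plan is to derive the corollary directly from Theorem~\ref{thm-lstd} by exploiting the continuity of the matrix-inversion map at the nonsingular limit $\bA$. Under Assumption~\ref{cond-A} the matrix $\bA$ is nonsingular (indeed negative definite), so there is an open neighborhood $\mathcal{U}$ of $\bA$ in the space of $n \times n$ matrices on which every matrix is invertible and on which $A \mapsto A^{-1}$ is continuous. Consequently the map $g(A,b) = -A^{-1} b$ is well-defined and continuous on $\mathcal{U} \times \rn$, with $g(\bA,\bb) = \w^*$. The whole argument then reduces to pushing the convergence of $(\bA_t,\bb_t)$ established in Theorem~\ref{thm-lstd} through $g$.

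For the convergence-in-probability claim, I would first observe that $L^1$ convergence implies convergence in probability: by Markov's inequality, $\E[\|\bA_t - \bA\|] \to 0$ gives $\bA_t \to \bA$ in probability, and likewise $\bb_t \to \bb$ in probability, so the pair $(\bA_t,\bb_t)$ converges to $(\bA,\bb)$ in probability. I would then invoke the continuous mapping theorem with the function $g$: since $g$ is continuous at $(\bA,\bb) \in \mathcal{U} \times \rn$, it follows that $\w_t = g(\bA_t,\bb_t) = -\bA_t^{-1}\bb_t$ converges to $g(\bA,\bb) = \w^*$ in probability.

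The one point requiring care, and the only thing resembling an obstacle, is the well-definedness of $\w_t$, since $\w_t = -\bA_t^{-1}\bb_t$ is defined only when $\bA_t$ is invertible. This is harmless because $(\bA_t,\bb_t)$ concentrates near $(\bA,\bb)$: as $\bA_t \to \bA$ in probability and $\bA \in \mathcal{U}$, we have $\Pr(\bA_t \notin \mathcal{U}) \to 0$, so with probability tending to $1$ the iterate $\w_t$ is well-defined, and the vanishing-probability event on which $\bA_t$ is singular does not affect the continuous-mapping conclusion. For the almost-sure claim under $\alpha_t = 1/(t+1)$, Theorem~\ref{thm-lstd} gives $\bA_t \asto \bA$ and $\bb_t \asto \bb$, and I would argue pathwise: on the probability-one event where both convergences hold, $\bA_t \in \mathcal{U}$ for all sufficiently large $t$, so $\w_t$ is eventually well-defined along that sample path, and the continuity of $g$ at $(\bA,\bb)$ yields $-\bA_t^{-1}\bb_t \to -\bA^{-1}\bb = \w^*$. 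Hence $\w_t \asto \w^*$, completing the proof.
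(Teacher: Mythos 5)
Your proposal is correct and follows essentially the same route as the paper, which states the corollary as an immediate consequence of Theorem~\ref{thm-lstd}: the $L^1$ (hence in-probability) and almost sure convergence of $(\bA_t,\bb_t)$ to $(\bA,\bb)$, combined with the nonsingularity of $\bA$ under Assumption~\ref{cond-A} and the continuity of $(A,b)\mapsto -A^{-1}b$ near $(\bA,\bb)$, yields the claim. Your explicit handling of the eventual well-definedness of $\w_t$ is the right way to fill in the detail the paper leaves implicit.
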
 

\begin{thm}[Almost sure convergence of ETD($\lambda$)] 
\label{thm-td-unconstrained}
Let Assumptions~\ref{cond-bpolicy}-\ref{cond-stepsize} hold. Let $\{\w_t\}$ be generated by the ETD($\lambda$) algorithm (\ref{eq-emtd0})  with stepsizes satisfying $\alpha_t = O(1/t) $ and $\tfrac{\alpha_t - \alpha_{t+1}}{\alpha_t}= O(1/t)$.
Then for any given initial $(\e_0, \F_0, \w_0)$,  $\w_t \asto \w^*$.
\end{thm}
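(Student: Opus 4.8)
The plan is to treat the ETD($\lambda$) recursion (\ref{eq-emtd0}) as a linear stochastic approximation scheme and to verify, through the ergodic properties of the joint state--trace process prepared in Section~\ref{sec-elstd}, the hypotheses of a mean-ODE convergence theorem from stochastic approximation theory \citep{KuY03}. Writing $\bA_t = \e_t \rho_t(\gamma_{t+1}\fe(S_{t+1})^\top - \fe(S_t)^\top)$ and $\bb_t = \e_t \rho_t R_t$ for the per-step quantities, the iteration takes the form $\w_{t+1} = \w_t + \alpha_t(\bA_t \w_t + \bb_t)$, whose mean field is $h(\w) = \bA\w + \bb$. Since Assumptions~\ref{cond-bpolicy} and~\ref{cond-A} render $\bA$ negative definite (Prop.~\ref{prp-ndef}), the mean ODE $\dot\w = \bA\w + \bb$ has $\w^* = -\bA^{-1}\bb$ as its unique, globally asymptotically stable equilibrium, with quadratic Lyapunov function $V(\w) = \tfrac12(\w - \w^*)^\top(\w - \w^*)$; this is the target toward which the analysis must steer the iterates.

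First I would record the averaging facts that drive the proof. The ELSTD($\lambda$) analysis behind Theorem~\ref{thm-lstd} shows that the Ces\`aro averages $\tfrac1t\sum_{k<t}\bA_k$ and $\tfrac1t\sum_{k<t}\bb_k$ converge to $\bA$ and $\bb$ both in $L^1$ and almost surely; more generally, the chain $(S_t, \e_t, \F_t)$ admits a unique invariant distribution under which $\e_t,\F_t$ are integrable, so ergodic averages of the relevant functions are available even when the second moments of the traces diverge. Setting $u_t = \w_t - \w^*$ and using $\bA\w^* + \bb = 0$, the recursion becomes $u_{t+1} = u_t + \alpha_t\big[\bA u_t + (\bA_t - \bA)u_t + \xi_t\big]$ with $\xi_t = (\bA_t - \bA)\w^* + (\bb_t - \bb)$. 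The perturbations $(\bA_t - \bA)$ and $\xi_t$ are centered in the ergodic sense, but they are not square-integrable, so standard martingale bounds are unavailable and they must be controlled through the asymptotic behavior of their weighted partial sums.

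The main obstacle, and the step that needs the new argument, is establishing almost sure boundedness of $\{\w_t\}$ directly, without confining the iterates to a bounded set as in \citep{Yu-siam-lstd}. Here I would pair the Lyapunov drift from the negative definiteness of $\bA$ --- which contributes a term at most $-2\alpha_t\mu\|u_t\|^2$ to $V(u_{t+1}) - V(u_t)$ for some $\mu > 0$ --- with an Abel/summation-by-parts treatment of the noise, using the refined ergodic estimates from Section~\ref{sec-elstd} rather than variance bounds. The stepsize hypotheses $\alpha_t = O(1/t)$ and $\tfrac{\alpha_t-\alpha_{t+1}}{\alpha_t} = O(1/t)$ are precisely what summation by parts requires: the boundary term $\alpha_n\sum_{k<n}\xi_k$ and the weighted-increment term $\sum_k(\alpha_{k-1}-\alpha_k)\sum_{j<k}\xi_j$ must both be shown negligible relative to the contraction, which hinges on controlling the growth rate of the partial sums $\sum_{k<t}(\bA_k - \bA)$ and $\sum_{k<t}\xi_k$, not merely their Ces\`aro limits. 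The delicate point throughout is the coupling term $(\bA_t - \bA)u_t$, which is quadratic in the iterate and carries unbounded-variance noise; the estimates must be arranged so that the contraction furnished by $\bA$ absorbs this term before $u_t$ can escape to infinity.

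Once almost sure boundedness is secured, the residual noise satisfies the Kushner--Clark-type conditions along almost every sample path, so the mean-ODE theorem of \citep{KuY03} applies with the globally stable equilibrium $\w^*$, yielding $u_t \to 0$ and hence $\w_t \asto \w^*$. I expect the stability step to be where essentially all the difficulty lies, since convergence given boundedness is by now standard, whereas removing the boundedness assumption under divergent trace variances is exactly the gap left open by earlier work.
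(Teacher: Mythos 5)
Your high-level framing (negative definite mean field, mean-ODE theorem, the stability of $\{\w_t\}$ as the crux) is right, but the proposal leaves the crux unproved, and the specific mechanism you suggest for it runs into exactly the obstruction that forces the paper onto a different route. The per-step Lyapunov decrement for $V(u_t)=\tfrac12\|u_t\|_2^2$ contains the coupling term $\alpha_t\, u_t^\top(\bA_t-\bA)u_t$ and a second-order term of size $\alpha_t^2\|\bA_t\|^2\|u_t\|^2$, where $\|\bA_t\|$ is proportional to $\|\e_t\|$. Under Assumption~\ref{cond-bpolicy} alone the traces are almost surely unbounded and $\E[\|\e_t\|^2]$ can grow geometrically (Remark~\ref{rmk-behavior-ite}), so $\|\bA_t-\bA\|$ exceeds the drift constant $2\mu$ infinitely often and $\sum_t\alpha_t^2\|\e_t\|^2$ is not summable in any usable sense; the contraction cannot absorb these terms pathwise without first knowing $\|u_t\|$ stays bounded, which is what you are trying to prove. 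Averaging over blocks to exploit the Ces\`aro convergence of $\tfrac1t\sum_k\bA_k$ reintroduces the within-block variation of $u_k$, which again carries $\|\e_k\|$. Only Ces\`aro limits of $\bA_t-\bA$ and $\xi_t$ are available from Section~\ref{sec-elstd} (no rate on the partial sums), so the summation-by-parts bookkeeping you outline has no quantitative input to work with. In short, the proposal restates the open difficulty rather than resolving it.

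The paper circumvents this entirely: it first proves a.s.\ convergence of a \emph{constrained} version of the algorithm, projected onto a large ball $\H$ (Theorem~\ref{thm-ctd}, via \cite[Theorem 6.1.1]{KuY03} and Lemma~\ref{lma-pode}), where boundedness is automatic. It then exploits the \emph{linearity} of the recursion: running the constrained algorithm with zero rewards from an arbitrary initial vector $y$ at time $\bar t$ shows that the matrix products $\prod_{k\geq\bar t}^t(I+\alpha_k \mC_k)$ converge a.s.\ to the zero matrix (Lemma~\ref{lma-pmtrx}), because the constrained iterates converge to $\0$ in the interior of $\H$ and the projection is eventually inactive. Since the unconstrained and constrained iterates driven by the same trajectory eventually differ by exactly such a product applied to a fixed vector, their difference vanishes and $\w_t\to\w^*$. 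If you want to salvage a direct stability proof, you would need either additional restrictions on the behavior policy that bound the traces (which the paper deliberately avoids), or you should adopt the constrained-comparison argument, which is where the actual new idea of this theorem lives.
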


\begin{rem}[On stepsizes] \rm
We believe that the range of stepsizes for the a.s.\ convergence of ELSTD($\lambda$) can be enlarged. If additional conditions on the behavior policy are imposed to restrict the variances of the trace iterates, it should also be possible to enlarge the range of stepsizes for ETD($\lambda$). These topics, as well as the use of random stepsizes, are under active investigation.
\end{rem}

\begin{rem}[On variances] \label{rmk-var} \rm
The preceding convergence results hold under almost minimal conditions on the behavior policy (Assumption~\ref{cond-bpolicy}(ii)). However, unless we  restrict sufficiently the behavior policy (which is difficult to do without knowledge of the model, when $\gamma \not<1$), the variances of the trace iterates can grow unboundedly (cf.\ Remark~\ref{rmk-behavior-ite}), significantly affecting the speed of convergence. 
This is a main difficulty in off-policy methods in general. Further research is required to overcome it. For a recent work in this direction, see \citep{wis14}.\end{rem}

\section{Properties of Trace Iterates and Convergence Analysis of ELSTD($\lambda$)} \label{sec-elstd}

In this section we analyze the trace iterates and convergence properties of ELSTD($\lambda$) iterates. The analysis not only leads to Theorem~\ref{thm-lstd} on the convergence of ELSTD($\lambda$), but also prepares the stage for the subsequent ODE-based convergence proof for ETD($\lambda$), by ensuring that ``local averaging'' gives the desired ``mean dynamics,'' as will be seen in Section~\ref{sec-etd}. 

The structure of our analysis will be similar to that of \citep{Yu-siam-lstd} for regular off-policy LSTD($\lambda$), but the proofs at intermediate steps are new and more involved. We will explain the key proof arguments in this section, and give the proof details and related results in Appendix~\ref{appsec-a}.

\subsection{Properties of Trace Iterates} \label{sec-elstd1}

Let $Z_t = (S_t, A_t, \e_t, \F_t)$ for $t \geq 0$; they form a Markov chain on $\S \times \A \times \re^{n+1}$.
First, we observe several important properties of the trace iterates $\{(\e_t, \F_t)\}$ and the Markov chain $\{Z_t\}$, under Assumption~\ref{cond-bpolicy}:
\begin{enumerate}
\item[(i)] For any given initial $(\e_0, \F_0)$, $\sup_{t \geq 0} \E \big[ \big\| (\e_t, \F_t) \big\| \big] < \infty$. (See Prop.~\ref{prp-bdtrace}.)
\item[(ii)] Let $\{(\e_t, \F_t)\}$ and $\{({\hat \e}_t, {\hat \F}_t)\}$ be defined by the same recursion (\ref{eq-td3})-(\ref{eq-td1}), using the same state and action random variables, but with different initial conditions $(\e_0, \F_0) \not=(\hat \e_0, \hat{\F}_0)$. Then
$\F_t - \hat{\F}_t \asto 0$ and $ \e_t - \hat{\e}_t  \asto \0$ (the zero vector in $\rn$).
(See Prop.~\ref{prp-2}.)
\item[(iii)] We can approximate the traces $(\e_t, \F_t)$, which depend on the entire history of past states and actions, by similarly defined ``truncated traces'' $(\tilde{\e}_{t,K}, \tilde{F}_{t,K})$ which depend on the most recent $2K$ states and actions only [cf.~Eqs.~(\ref{eq-tF})-(\ref{eq-te})]. The expected approximation ``error'' can be bounded uniformly in $t$, by a constant $\C_K$ which decreases to $0$ as $K \to \infty$. (See Prop.~\ref{prp-3}.) 
\item[(iv)] $\{Z_t\}$ is a weak Feller Markov chain
\footnote{A Markov chain $\{X_t\}$ on a metric space is \emph{weak Feller} if $\E [ f(X_1) \mid X_0 = x]$ is continuous in $x$ for every bounded continuous function $f$ on the state space \cite[Prop.~6.1.1(i)]{MeT09}. Using this and the fact that $(\e_1, \F_1)$ depends continuously on $(\e_0, \F_0)$ [cf.\ Eqs.~(\ref{eq-td3})-(\ref{eq-td1})], the weak Feller property of $\{Z_t\}$ can be seen.\label{footnote-weakFeller}}
and bounded in probability,
\footnote{A Markov chain $\{X_t\}$ on a topological space is \emph{bounded in probability} if, for each initial state $x$ and each $\epsilon > 0$, there exists a compact subset $D$ of the state space such that $\liminf_{t \to \infty} \mathbf{P}_x(X_t \in D) \geq 1 - \epsilon$, where $\mathbf{P}_x$ denotes the probability of events conditional on $X_0=x$ \cite[p.~142]{MeT09}. In our case, since $\S$ and $\A$ are finite, the property (i) above together with the Markov inequality implies that $\{Z_t\}$ is bounded in probability \citep[cf.][Lemma 3.4]{Yu-siam-lstd}.}
 and hence it has at least one invariant probability measure.
\footnote{By~\cite[Theorem 12.1.2(ii)]{MeT09}, a weak Feller Markov chain bounded in probability has at least one invariant probability measure. We mention that there is also an alternative, direct proof of the existence of an invariant probability measure for $\{Z_t\}$, which does not rely on the weak Feller property (see Appendix~\ref{appsec-invm-alt}).}
\end{enumerate}
Furthermore, as we will show in Theorem~\ref{thm-erg} below, $\{Z_t\}$ has a \emph{unique} invariant probability measure and is ergodic. 

These properties suggest that despite the growing variances, the trace iterates are well-behaved. Figure~\ref{fig:proof1} shows how the convergence results of this section, to be introduced next, will depend on these properties.

\begin{figure}[htbp] 
\centering
\includegraphics[width=0.95\textwidth]{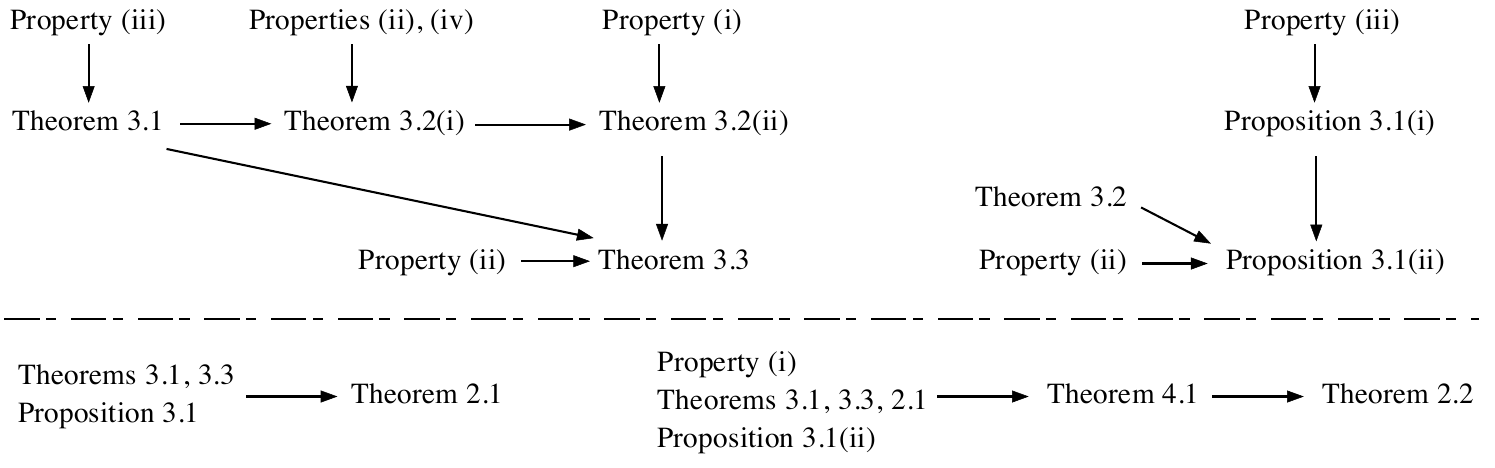}
\caption{Diagrams showing dependence relations between the results in this paper. ``$A \to B$'' means $A$ is used in proving $B$.}\label{fig:proof1}
\end{figure}

\subsection{Main Results on $L^1$ and Almost Sure Convergence} \label{sec-elstd2}

We formulate our convergence results in terms of a general recursion that can be specialized to the ELSTD($\lambda$) iteration. This generality is needed in order to make the results useful for other proofs, specifically, for proving the uniqueness of the invariant probability measure of $\{Z_t\}$, and for establishing convergence conditions required by an ODE-based analysis for ETD($\lambda$), as those proofs will rely on the convergence properties of certain iterates that are different from ELSTD($\lambda$).

We define the general recursion just mentioned as follows. Denote $y = (\e, \F)$; thus $y \in \re^{n+1}$. 
Consider a vector-valued function $h : \re^{n+1} \times \mathcal{S} \times \mathcal{A} \times \mathcal{S} \to \re^m$ such that $h(y, s, a, s')$ is Lipschitz continuous in $y$ for each $(s,a,s')$; i.e.,
there exists some constant $L_h$ such that for any $y, \hat y \in \re^{n+1}$,
\begin{equation} \label{cond-lip}
 \big\| h(y, s, a, s') - h(\hat y, s, a, s') \big\| \leq L_h  \| y - \hat y \|, \qquad \forall \,   (s, a, s') \in \mathcal{S} \times \mathcal{A} \times \mathcal{S}.
\end{equation} 
Given $h$, $\{Z_t\}$ and the stepsizes $\{\alpha_t\}$, we define a recursion as follows:
\begin{equation}  \label{eq-G}
  \G_{t+1} = (1 - \alpha_t) \, \G_t + \alpha_t \, h(Y_t, S_t, A_t, S_{t+1}).
\end{equation}  

The ELSTD($\lambda$) iterates $\bA_t$ and $\bb_t$ correspond to the following choices of $h$, respectively:
\begin{equation} \label{eq-h-lstd}
 h_1(y, s, a, s') =  \e \cdot \rho(s,a) \, \big( \gamma(s') \fe(s')^\top - \fe(s)^\top \big), \quad  h_2(y, s, a, s') =  \e \cdot \rho(s,a) \, r(s,a,s').
\end{equation} 
Here $h_1$ is matrix-valued (we view it as an $\re^m$-valued function with $m = n \times n$), and $h_2$ is $\re^n$-valued.
As just mentioned, we will also need to consider other choices of $h$ in our proofs later.

We first show that $\{G_t\}$ converges in $L^1$ to some constant vector. The proof (given in Appendix~\ref{appsec-prf-thml1}) exploits the property (iii) of truncated traces mentioned earlier: this property allows us to obtain the desired result by working with simple finite-state Markov chains. 

\begin{restatable}[$L^1$-convergence of $\{G_t\}$]{thm}{thmlone} 
\label{thm-l1conv}
Let $h$ be a vector-valued function satisfying the Lipschitz condition~(\ref{cond-lip}), and let $\{G_t\}$ be defined by the recursion~(\ref{eq-G}), using the process $\{Z_t\}$.
Then under Assumptions~\ref{cond-bpolicy},~\ref{cond-stepsize}, there exists a constant vector $G^*$ (independent of the stepsizes) such that for any given initial $Y_0 =(\e_0,\F_0)$ and $\G_0$,
$\lim_{t \to \infty} \E \big[ \big\| \G_t - \G^* \big\| \big] = 0.$
\end{restatable}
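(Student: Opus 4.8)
The plan is to replace the history-dependent driving term $h(Y_t, S_t, A_t, S_{t+1})$ by a version built from the truncated traces of property (iii) (Prop.~\ref{prp-3}); this reduces the analysis to a weighted time-average of a bounded function of a \emph{finite-state} Markov chain, whose convergence is standard, and the Lipschitz condition~(\ref{cond-lip}) then transfers the conclusion back to $\{\G_t\}$ with an error controlled uniformly in $t$. To set this up, I unroll the recursion~(\ref{eq-G}):
$$\G_t = \Big(\textstyle\prod_{j=0}^{t-1}(1-\alpha_j)\Big)\G_0 + \sum_{s=0}^{t-1}\beta_{s,t}\, h(Y_s, S_s, A_s, S_{s+1}), \qquad \beta_{s,t} := \alpha_s \textstyle\prod_{j=s+1}^{t-1}(1-\alpha_j).$$
Since $\alpha_j\in(0,1]$, the weights satisfy $\beta_{s,t}\ge 0$ and $\sum_{s=0}^{t-1}\beta_{s,t} = 1 - \prod_{j=0}^{t-1}(1-\alpha_j)\le 1$, and the leading $\G_0$-term vanishes as $t\to\infty$ because $\sum_t\alpha_t=\infty$. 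Thus $\G_t$ is asymptotically a weighted average of past driving terms, and in particular its limit cannot depend on $\G_0$.

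Next, I fix $K$ and define $\tilde\G_{t,K}$ by the same recursion with $Y_s$ replaced by the truncated trace $\tilde Y_{s,K} = (\tilde\e_{s,K}, \tilde F_{s,K})$. Because $\tilde Y_{s,K}$ is a deterministic function of the last $2K$ states and actions, the quantity $g_K(W_s) := h(\tilde Y_{s,K}, S_s, A_s, S_{s+1})$ is a \emph{bounded} function of a finite sliding window $W_s$ of recent states and actions, and $\{W_s\}$ is a finite-state Markov chain with a unique stationary distribution $\nu_K$, its uniqueness following from the irreducibility of the behavior chain (Assumption~\ref{cond-bpolicy}(ii)). A standard ergodic-averaging argument for finite chains — solve the Poisson equation $\hat g_K - P\hat g_K = g_K - \nu_K(g_K)$ (with $P$ the transition operator of $\{W_s\}$), split $g_K(W_s)-\nu_K(g_K)$ into a telescoping term and a martingale difference, Abel-sum the weighted series, and invoke $\sum_t\alpha_t^2<\infty$ together with $\alpha_t\to 0$ — then shows that $\tilde\G_{t,K}$ converges, a.s.\ and (by boundedness) in $L^1$, to the constant $\tilde\G_K^* := \nu_K(g_K)$ (the stationary mean of $g_K$), which depends on neither the stepsizes nor the initial conditions.

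It remains to control the truncation error. By~(\ref{cond-lip}) and property (iii),
$$\E\big[\big\|\G_t - \tilde\G_{t,K}\big\|\big] \le \sum_{s=0}^{t-1}\beta_{s,t}\, L_h\,\E\big[\|Y_s - \tilde Y_{s,K}\|\big] \le L_h\,\C_K \qquad \text{for all } t,$$
using $\sum_s\beta_{s,t}\le 1$ and the uniform bound $\E[\|Y_s-\tilde Y_{s,K}\|]\le\C_K$. Letting $t\to\infty$ in $\E[\|\tilde\G_{t,K}-\tilde\G_{t,K'}\|]\le L_h(\C_K+\C_{K'})$ shows $\{\tilde\G_K^*\}$ is Cauchy in $K$; I set $G^* := \lim_{K\to\infty}\tilde\G_K^*$, a constant vector independent of the stepsizes. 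Finally, given $\epsilon>0$ I choose $K$ with $L_h\C_K<\epsilon/3$ and $\|\tilde\G_K^*-G^*\|<\epsilon/3$, then $t$ large enough that $\E[\|\tilde\G_{t,K}-\tilde\G_K^*\|]<\epsilon/3$; the triangle inequality then gives $\E[\|\G_t-G^*\|]<\epsilon$, which is the assertion.

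I expect the main obstacle to be the second step together with the interchange of limits in the assembly. The truncation cleanly reduces matters to a finite-state chain, but one must still verify that the \emph{weighted} time-averages, with the diminishing and non-uniform weights $\beta_{s,t}$, converge in $L^1$ to $\nu_K(g_K)$ rather than merely that the ordinary Cesàro averages do, while handling any periodicity or nontrivial recurrent structure of $\{W_s\}$; and — crucially — that the truncation bound $L_h\C_K$ is genuinely uniform in $t$, since that uniformity is exactly what legitimizes exchanging the $t\to\infty$ and $K\to\infty$ limits.
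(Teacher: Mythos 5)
Your proposal is correct and follows essentially the same route as the paper's proof: truncate the traces via Prop.~\ref{prp-3} so that the driving term becomes a bounded function of a finite sliding-window Markov chain, show the truncated iterates $\tilde\G_{t,K}$ converge in $L^1$ to a stationary mean $\G^*_K$, bound $\E\big[\|\G_t-\tilde\G_{t,K}\|\big]$ by $L_h\C_K$ uniformly in $t$ (the paper derives the same bound as a $\limsup$ from the one-step recursive inequality rather than by unrolling the weighted average), and then interchange the $t\to\infty$ and $K\to\infty$ limits exactly as you describe. The only cosmetic differences are that you prove the finite-chain averaging step by hand via a Poisson-equation/martingale decomposition where the paper simply cites a stochastic-approximation result of Borkar, and that you obtain the Cauchy property of $\{\G^*_K\}$ by a triangle inequality through $\G_t$ using only Prop.~\ref{prp-3}(i), whereas the paper invokes Prop.~\ref{prp-3}(ii).
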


Next we analyze the a.s.\ convergence of $\{G_t\}$, by using ergodicity properties of the infinite-space Markov chain $\{Z_t\}$ that we establish first.
For each initial condition $\Z_0=z$, define the occupation probability measures $\mu_{z, t}$ for $t \geq 1$, by
$\mu_{z,t}(B) =  \frac{1}{t} \sum_{k=1}^{t} \mathbb{1}_B(Z_k)$ for any Borel subset $B$ of $\mathcal{S} \times \mathcal{A} \times \re^{n+1}$, 
where $\mathbb{1}_{B}$ denotes the indicator function for the set $B$ (i.e., $\mathbb{1}_{B}(x) = 1$ if $x \in B$, and $\mathbb{1}_{B}(x) = 0$ otherwise).
Let $\E_{\mu}$ denote expectation with respect to the probability distribution of the process $\{\Z_t\}$ with $\mu$ as the initial distribution of $\Z_0$.

\begin{thm}[Ergodicity of $\{\Z_t\}$] \label{thm-erg}
Under Assumption~\ref{cond-bpolicy}, the Markov chain $\{\Z_t\}$ has a unique invariant probability measure $\zeta$, and moreover, the following hold:\\
{\rm (i)} For each initial condition $Z_0=z$, the sequence $\{\mu_{z,t}\}$ of occupation measures converges weakly
\footnote{For probability measures $\mu, \mu_t, t \geq 0$, on a metric space $X$, $\{\mu_t\}$ is said to \emph{converge weakly} to $\mu$ if for all bounded continuous functions $f$ on $X$, $\int f d\mu_t \to \int f d\mu$ as $t \to \infty$ \cite[Chap.\ 9.3]{Dud02}.}
to $\zeta$, almost surely.\\
{\rm (ii)} $\E_{\zeta} \big[ \big\| h(Z_0, S_1) \big\| \big] < \infty$ for any function $h$ satisfying the Lipschitz condition~(\ref{cond-lip}).
\end{thm}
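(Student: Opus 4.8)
The plan is to separate existence, uniqueness, and the almost sure ergodic theorem, using Theorem~\ref{thm-l1conv} as the engine for uniqueness and the four trace properties (i)--(iv) to handle the non-compactness of the trace space. Existence of an invariant probability measure $\zeta$ is already in hand from property (iv) (weak Feller plus bounded in probability, via \cite[Thm.~12.1.2(ii)]{MeT09}), so the work is to show that $\zeta$ is unique, that the occupation measures $\mu_{z,t}$ converge to it weakly and almost surely, and the integrability bound in part~(ii).

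For uniqueness I would exploit that with $\alpha_t = 1/(t+1)$ (and $\alpha_0 = 1$) the recursion~(\ref{eq-G}) is exactly the empirical average $G_t = \tfrac1t\sum_{k=0}^{t-1} h(Z_k,S_{k+1})$, where I write $h(Z_k,S_{k+1}) := h(Y_k,S_k,A_k,S_{k+1})$. Let $\zeta'$ be any invariant probability measure and run the chain from $\zeta'$. By stationarity $\E_{\zeta'}[G_t] = \E_{\zeta'}[h(Z_0,S_1)]$ is constant in $t$, while Theorem~\ref{thm-l1conv} forces $G_t \to G^*$ in $L^1$ with $G^*$ independent of the initial distribution. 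Hence $\E_{\zeta'}[h(Z_0,S_1)] = G^*$ for every invariant $\zeta'$ and every Lipschitz $h$. Specializing to $h(y,s,a,s') = f(s,a,y)$ with $f$ bounded and Lipschitz in $y$ shows that all invariant measures assign the same integral to every bounded Lipschitz function on $\S\times\A\times\re^{n+1}$; since such functions form a determining class, the invariant measure is unique. This step needs part~(ii) to guarantee $\E_{\zeta'}[\,\|h(Z_0,S_1)\|\,]<\infty$, so I would prove the integrability bound first.

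Part~(ii) and the tightness needed later both come from the uniform moment bound in property~(i), $\sup_t \E\|(\e_t,\F_t)\| < \infty$. Writing $\zeta$ as a limit of averaged occupation measures (Krylov--Bogolyubov) and testing against the bounded continuous truncations $z\mapsto \|(\e,\F)\|\wedge M$, the uniform bound and the monotone passage $M\to\infty$ give $\E_\zeta\|(\e,\F)\|<\infty$; the linear growth of $h$ implied by~(\ref{cond-lip}), together with finiteness of $\S\times\A$, then yields $\E_\zeta[\|h(Z_0,S_1)\|]<\infty$, which is part~(ii). The same moment bound, fed through Theorem~\ref{thm-l1conv} with the Lipschitz choice $h(y,\cdot)=\|y\|$, controls the time-averaged mass of the traces and will be used to prevent the occupation measures from losing mass to infinity.

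Finally, to assemble the almost sure weak convergence I would argue that, almost surely, $\{\mu_{z,t}\}$ is tight, so along any subsequence there is a weakly convergent sub-subsequence; the weak Feller property (property~(iv), footnote~\ref{footnote-weakFeller}) makes every such limit invariant, and by uniqueness it must equal $\zeta$; hence the whole sequence converges, $\mu_{z,t}\to\zeta$ weakly, almost surely. The main obstacle is precisely the \emph{almost sure} tightness: because the trace variances can grow without bound (Remark~\ref{rmk-var}), I must rule out, on a probability-one event, any escape of mass to infinity in the time-average, so that subsequential limits are genuine probability measures and the Feller argument applies. This is where the truncated-trace approximation (property~(iii), which replaces the infinite-memory traces by finite-memory ones uniformly in $t$) and the coupling property~(ii) (which makes the limit insensitive to the initial condition) do the real work: they let me upgrade the $L^1$/in-probability statements coming from Theorem~\ref{thm-l1conv} and property~(i) into almost sure control of $\tfrac1t\sum_{k<t}\|(\e_k,\F_k)\|$, and thereby into almost sure tightness.
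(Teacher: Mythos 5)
Your treatment of existence, uniqueness, and part (ii) is sound and, for uniqueness, is essentially the paper's own route: run the chain from an arbitrary invariant measure $\zeta'$, note that $\E_{\zeta'}[\G_t]$ is constant in $t$ by stationarity while Theorem~\ref{thm-l1conv} forces $\G_t \to \G^*$ in $L^1$ with $\G^*$ independent of the initial condition, specialize to bounded Lipschitz test functions $f(s,a,y)$, and use that these are a determining class on the separable metric space $\S\times\A\times\re^{n+1}$. (Restricting to bounded test functions also removes the integrability worry you raise, so you do not actually need part (ii) before uniqueness; your Krylov--Bogolyubov argument for part (ii) itself is fine.)

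The genuine gap is in part (i). Your plan hinges on \emph{almost sure} tightness of $\{\mu_{z,t}\}$, which you propose to extract from Prop.~\ref{prp-bdtrace} and Prop.~\ref{prp-3} by ``upgrading'' their $L^1$ bounds to a.s.\ control of $\tfrac1t\sum_{k\le t}\|(\e_k,\F_k)\|$. This upgrade does not follow: a uniform-in-$k$ bound on $\E\big[\|Y_k-Y_{k,K}\|\big]$ controls the expectation of the time average and hence (by Fatou) its $\liminf$, but places no almost sure bound on its $\limsup$ --- nonnegative variables with uniformly bounded means can have time averages whose $\limsup$ is a.s.\ infinite (take independent $X_k$ equal to $k\log k$ with probability $1/(k\log k)$ and $0$ otherwise). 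The statement you actually need, that $\tfrac1t\sum_{k\le t}\|Y_k\|$ converges a.s., is exactly Theorem~\ref{thm-asconv} with $h(y,\cdot)=\|y\|$, and in the paper's logical order that theorem is \emph{derived from} Theorem~\ref{thm-erg}, so invoking it here would be circular. The paper's route (following \citealp{Yu-siam-lstd}) avoids tightness altogether: for each $f$ in a countable convergence-determining class of bounded Lipschitz functions, apply Doob's strong law for stationary processes under $\zeta$ (part (ii) supplies integrability), identify the a.s.\ limit with the constant $\int f\,d\zeta$ via the $L^1$ convergence of Theorem~\ref{thm-l1conv}, which gives $\mu_{z,t}\to\zeta$ weakly a.s.\ for $\zeta$-almost every initial trace; then extend to \emph{every} initial condition by running a second trace sequence on the same state-action trajectory from a ``good'' initial trace and using Prop.~\ref{prp-2} ($\e_t-\hat{\e}_t\asto\0$, $\F_t-\hat{\F}_t\asto 0$) together with the Lipschitz continuity of $f$. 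Since the candidate limit $\zeta$ is already a probability measure and the test class is convergence-determining, no control of escaping mass is required; this is the same device the paper uses to prove Theorem~\ref{thm-asconv} and Prop.~\ref{prp-noise}(ii), and you should substitute it for the tightness step.
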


The preceding theorem follows from the properties of trace iterates given earlier and Theorem~\ref{thm-l1conv} (cf.\ Figure~\ref{fig:proof1}). The proof is the same as the corresponding proofs of \cite[Theorem 3.2 and Prop.\ 3.2]{Yu-siam-lstd} for the case of off-policy LSTD. In particular, to prove the uniqueness of the invariant probability measure (which is not as easy to prove as the existence given in the property (iv) earlier), we use the property (ii) and the convergence in $L^1$ result given in Theorem~\ref{thm-l1conv}.\footnote{Theorem~\ref{thm-l1conv} is useful here because on the separable metric space $\S \times \A \times \re^{n+1}$, bounded Lipschitz continuous functions are convergence-determining for weak convergence of probability measures \cite[Theorem 11.3.3]{Dud02}.}

We can now show that $\{G_t\}$ converges a.s.\ for stepsize $\alpha_t = 1/(t+1)$, by using the preceding results (cf.\ Figure~\ref{fig:proof1}), together with a strong law of large numbers for stationary processes \citep[Chap.\ X, Theorem 2.1]{Doob53} \citep[see also][Theorem 17.1.2]{MeT09}. The proof is a verbatim repetition of the proof of \cite[Theorem 3.3]{Yu-siam-lstd} and is therefore omitted.

\begin{thm}[Almost sure convergence of $\{G_t\}$] \label{thm-asconv}
Let $h$ and $\{\G_t\}$ be as in Theorem~\ref{thm-l1conv}, and let the stepsize be $\alpha_t = 1/(t+1)$. Then, under Assumption~\ref{cond-bpolicy}, for any given initial $Y_0=(\e_0,\F_0)$ and $\G_0$, $\G_t \asto \G^*$, where $\G^* = \E_\zeta \big[h(Y_0, S_0, A_0, S_1) \big]$ is the constant vector in Theorem~\ref{thm-l1conv}.
\end{thm}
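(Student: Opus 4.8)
The plan is to exploit the special structure that the choice $\alpha_t = 1/(t+1)$ gives to the recursion~(\ref{eq-G}), which turns $\{\G_t\}$ into a sequence of empirical averages and reduces the claim to an ergodic theorem for $\{\Z_t\}$. With $\alpha_t = 1/(t+1)$ the recursion reads $(t+1)\,\G_{t+1} = t\,\G_t + h(Y_t, S_t, A_t, S_{t+1})$, so $\G_0$ cancels upon unrolling and, for every $t \geq 0$,
$$ \G_{t+1} = \tfrac{1}{t+1} \sum_{k=0}^{t} h(Y_k, S_k, A_k, S_{k+1}). $$
Thus $\{\G_t\}$ is exactly the Ces\`aro average of $h$ along the trajectory of $\{(\Z_k, S_{k+1})\}$. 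In particular the limit cannot depend on $\G_0$, and once the a.s.\ limit is shown to exist it must agree with the $L^1$-limit $\G^*$ furnished by Theorem~\ref{thm-l1conv}.

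First I would prove convergence of these averages when the chain starts from its invariant measure. By Theorem~\ref{thm-erg}, $\{\Z_t\}$ has a unique invariant probability measure $\zeta$; uniqueness makes the stationary chain (with $\Z_0 \sim \zeta$) ergodic, so $\{h(Y_k, S_k, A_k, S_{k+1})\}$ is a stationary ergodic process, integrable by Theorem~\ref{thm-erg}(ii). The strong law of large numbers for stationary processes \citep[Chap.\ X, Theorem 2.1]{Doob53} then shows that its Ces\`aro averages converge a.s.\ to $\E_\zeta[h(Y_0, S_0, A_0, S_1)] = \G^*$.

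It remains to remove the restriction $\Z_0 \sim \zeta$. The state-action marginal $\{(S_t, A_t)\}$ is a finite-state irreducible chain that does not see the initial traces, so I would couple its trajectory from an arbitrary initial state to the stationary one; by finiteness and irreducibility the two coincide after an a.s.\ finite coupling time $T$. From time $T$ on both chains are driven by the \emph{same} state-action sequence, and the forgetting property of the trace iterates (property (ii), Prop.~\ref{prp-2}) gives $\| Y_t - \hat Y_t \| \asto \0$. The Lipschitz bound~(\ref{cond-lip}) upgrades this to $\| h(Y_k,\cdot) - h(\hat Y_k, \cdot) \| \le L_h \| Y_k - \hat Y_k \| \to 0$, whence the two Ces\`aro averages share the same a.s.\ limit: the $k \ge T$ terms agree asymptotically (the Ces\`aro mean of a null sequence is null), while the finitely many $k < T$ terms contribute $O(1/t) \to 0$. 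Combining with the stationary case yields $\G_t \asto \G^*$ for every initial $(Y_0, \G_0)$ and $\Z_0$.

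The main difficulty I anticipate is the unboundedness of $h$. Since the variances of the trace iterates can grow without bound and $h$ is only Lipschitz, the a.s.\ weak convergence of occupation measures in Theorem~\ref{thm-erg}(i), which controls only bounded continuous test functions, cannot be applied directly. This is why the argument must pass through the integrability statement Theorem~\ref{thm-erg}(ii) together with an $L^1$ ergodic theorem for stationary processes, and why the trace-forgetting property of Prop.~\ref{prp-2} --- rather than any crude uniform-boundedness or tightness estimate --- is the right tool for transferring from the stationary chain to arbitrary initial conditions.
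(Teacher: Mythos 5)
Your overall strategy coincides with the paper's (which imports the proof of \citep[Theorem 3.3]{Yu-siam-lstd} verbatim): unroll the recursion with $\alpha_t = 1/(t+1)$ into a Ces\`aro average, apply the strong law of large numbers for stationary processes to the stationary version of $\{Z_t\}$ using the integrability in Theorem~\ref{thm-erg}(ii), and then transfer to arbitrary initial conditions via the trace-forgetting property (Prop.~\ref{prp-2}) and the Lipschitz bound~(\ref{cond-lip}). Two remarks on where you diverge. First, you identify the a.s.\ limit through ergodicity of the unique invariant measure, whereas the paper identifies it by observing that the $L^1$-convergence of Theorem~\ref{thm-l1conv} forces a subsequence to converge a.s.\ to $\G^*$; both are fine. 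Second, and more substantively, your transfer step couples the finite state--action chain started at an arbitrary state to the stationary one and waits for the two trajectories to meet. As written this is not covered by the standing assumptions: Assumption~\ref{cond-bpolicy}(ii) gives only irreducibility, and for a periodic chain two independent copies started in different cyclic classes never meet, so the a.s.\ finite coupling time $T$ need not exist; moreover, even granting $T$, Prop.~\ref{prp-2} concerns two trace sequences driven by the \emph{same} trajectory from time $0$, so invoking it after the random time $T$ requires a restart via the Markov property that you do not spell out. The paper's transfer avoids both issues: the SLLN yields a set $D_2$ of initial traces of full measure under the marginal of $\zeta$ such that convergence holds from every $(\e,\F,s)\in D_2\times\S$, and for an arbitrary initial trace one compares with a second trace sequence started in $D_2$ but driven by the \emph{identical} realized trajectory, so that Prop.~\ref{prp-2} applies directly and no coupling of the state sequence is needed. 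Your argument is repaired by adopting this same-trajectory comparison (which your own setup essentially already licenses), or by replacing the classical coupling with a shift-coupling.
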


Finally, we also need to analyze the cumulative effects of noise in the observed rewards $R_t$ and show that they diminish asymptotically. To this end, consider the following recursion: $W_0 = \0$ and
\begin{equation}  \label{eq-noiseW}
  W_{t+1} = (1 - \alpha_t) \, W_t + \alpha_t \, \e_t \, \rho_t \cdot \omega_{t+1}, \qquad t \geq 0,
\end{equation} 
where $\omega_{t+1} = R_{t} - r(S_t, A_t, S_{t+1})$ are noise variables.

\begin{restatable}[Effects of noise in random rewards]{prop}{propnoise} \label{prp-noise}
Under Assumptions \ref{cond-bpolicy},~\ref{cond-stepsize}, for any given initial $(\e_0, \F_0)$, we have 
{\rm (i)} $\E \big[  \| W_t \| \big] \to 0$; and 
{\rm (ii)} if, in addition, the stepsize is $\alpha_t = 1/(t+1)$,  then $W_t \asto \0$.
\end{restatable}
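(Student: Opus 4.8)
The plan rests on first isolating the martingale structure of the driving term. Let $\mathcal{H}_t=\sigma(S_0,A_0,R_0,\dots,S_t,A_t)$. Then $\e_t$ and $\rho_t=\rho(S_t,A_t)$ are $\mathcal{H}_t$-measurable, while the reward noise $\omega_{t+1}=R_t-r(S_t,A_t,S_{t+1})$ satisfies $\E[\omega_{t+1}\mid\mathcal{H}_t]=0$ and $\E[\omega_{t+1}^2\mid\mathcal{H}_t]\le\bar\sigma^2$, where $\bar\sigma^2$ bounds the finitely many reward variances. Hence $\xi_{t+1}:=\e_t\rho_t\,\omega_{t+1}$ is a martingale-difference sequence, and since $\S,\A$ are finite, $\rho_t\le\rho_{\max}$ for a constant $\rho_{\max}$. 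The fundamental obstacle, flagged in Remark~\ref{rmk-var}, is that the traces may have \emph{unbounded} second moments, so the usual $L^2$ martingale bounds---and with them the variance-summability criteria behind Kronecker/SLLN arguments---are unavailable. I would therefore prove the two parts by two different devices.

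For part~(i) I would use the truncated traces. Splitting $\e_t=\tilde{\e}_{t,K}+(\e_t-\tilde{\e}_{t,K})$ decomposes the recursion as $W_t=\tilde{W}_{t,K}+R_{t,K}$, where $\tilde{W}_{t,K}$ and $R_{t,K}$ solve (\ref{eq-noiseW}) with $\0$ initial value and noises $\tilde{\e}_{t,K}\rho_t\omega_{t+1}$ and $(\e_t-\tilde{\e}_{t,K})\rho_t\omega_{t+1}$, respectively. The key gain is that $\tilde{\e}_{t,K}$ is a sum of at most $2K$ products of the bounded multipliers $\lambda\gamma\rho$ with bounded features, hence is \emph{deterministically} bounded by some $B_K$; consequently $\tilde{\e}_{t,K}\rho_t\omega_{t+1}$ is a martingale difference of bounded conditional variance $\le B_K^2\rho_{\max}^2\bar\sigma^2$. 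The squared-norm recursion then gives $\E[\|\tilde{W}_{t+1,K}\|^2]\le(1-\alpha_t)\E[\|\tilde{W}_{t,K}\|^2]+\alpha_t^2 B_K^2\rho_{\max}^2\bar\sigma^2$ (the cross term vanishing by the martingale property), and the standard stochastic-approximation lemma under Assumption~\ref{cond-stepsize} yields $\E[\|\tilde{W}_{t,K}\|^2]\to0$, so $\E[\|\tilde{W}_{t,K}\|]\to0$. For the remainder I would take norms and condition on $\mathcal{H}_t$: since $\E[|\omega_{t+1}|\mid\mathcal{H}_t]\le\bar\sigma$ and $\E[\|\e_t-\tilde{\e}_{t,K}\|]\le\C_K$ (Prop.~\ref{prp-3}), the $L^1$ recursion $\E[\|R_{t+1,K}\|]\le(1-\alpha_t)\E[\|R_{t,K}\|]+\alpha_t\,\rho_{\max}\bar\sigma\,\C_K$ gives $\limsup_t\E[\|R_{t,K}\|]\le\rho_{\max}\bar\sigma\,\C_K$. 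Combining, $\limsup_t\E[\|W_t\|]\le\rho_{\max}\bar\sigma\,\C_K$ for every $K$; letting $K\to\infty$ and using $\C_K\to0$ proves (i).

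For part~(ii), with $\alpha_t=1/(t+1)$ the recursion unrolls to $W_t=\tfrac1t\sum_{k=0}^{t-1}\e_k\rho_k\omega_{k+1}$, an average of martingale differences. Here I would exploit that Birkhoff's theorem needs only $L^1$ integrability. Extending the ergodic chain $\{Z_t\}$ (Theorem~\ref{thm-erg}) by its conditionally-independent reward emissions yields a stationary ergodic process, and the integrand $g=\e\,\rho\,\omega$ obeys $\E_\zeta[\|g\|]\le\rho_{\max}\bar\sigma\,\E_\zeta[\|\e\|]<\infty$---the finiteness of $\E_\zeta[\|\e\|]$ coming from Theorem~\ref{thm-erg}(ii) applied to the Lipschitz map $(\e,\F)\mapsto\e$---with $\E[g]=0$. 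Birkhoff's theorem then gives $W_t\asto\0$ when the chain starts from $\zeta$. To pass to an arbitrary initial $(\e_0,\F_0)$ I would argue exactly as in the proof of Theorem~\ref{thm-asconv}: let $\{\e_k^{\circ}\}$ be the stationary trace along the same behavior trajectory; Prop.~\ref{prp-2} gives $\|\e_k-\e_k^{\circ}\|\asto0$, while $\tfrac1t\sum_{k<t}|\omega_{k+1}|$ is almost surely bounded (a bounded-variance martingale SLLN, available because the reward variances---unlike the traces---are bounded). An $\epsilon$-splitting of the sum then shows $\tfrac1t\sum_{k<t}(\e_k-\e_k^{\circ})\rho_k\omega_{k+1}\asto\0$, so $W_t\asto\0$.

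The hard part is genuinely the unbounded second moments of the traces: this is what rules out a direct Kronecker/martingale-SLLN proof of (ii) and forces the two-pronged treatment above---truncation to recover a deterministic bound in the $L^1$ statement, and reliance on the purely $L^1$ Birkhoff theorem together with trace-merging (Prop.~\ref{prp-2}) in the almost-sure statement. A secondary technical point, which I would handle as in Theorem~\ref{thm-asconv}, is the transfer from a stationary start to an arbitrary initial condition, where the arbitrary $(S_0,A_0)$ is dealt with via irreducibility of the behavior chain.
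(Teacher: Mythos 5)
Your proposal is correct and follows essentially the same route as the paper: truncate the traces, handle the truncated noise term by a martingale argument with the now-bounded $\tilde{\e}_{t,K}$, control the remainder by the $L^1$ recursion driven by Prop.~\ref{prp-3}(i) and let $K \to \infty$ for part (i); and for part (ii), apply the stationary-process strong law to the chain $\{Z_t\}$ augmented with its reward emissions (integrability from Theorem~\ref{thm-erg}(ii)), then transfer from a $\zeta$-full-measure set of initial traces to an arbitrary $(\e_0,\F_0)$ via the trace-merging property of Prop.~\ref{prp-2} together with the a.s.\ boundedness of $\tfrac{1}{t+1}\sum_k \rho_k|\omega_{k+1}|$. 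The one sub-step where you diverge is in establishing $\E[\|\tilde W_{t,K}\|]\to 0$: the paper first proves $\tilde W_{t,K}\asto\0$ via a stochastic-approximation lemma and then upgrades to $L^1$ convergence through a uniform-integrability (bounded second moment) argument, whereas you run the squared-norm recursion $\E[\|\tilde W_{t+1,K}\|^2]\le(1-\alpha_t)\E[\|\tilde W_{t,K}\|^2]+\alpha_t^2 B_K^2\rho_{\max}^2\bar\sigma^2$ directly; your version is slightly more economical and equally valid, since the martingale-difference cross term vanishes and the deterministic bound $B_K$ on the truncated trace is exactly what makes the $L^2$ computation legitimate.
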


The proof of the preceding proposition is given in Appendix~\ref{appsec-prf-noise}. The proof of part (i) uses the property (iii) of truncated traces, similarly to the proof of Theorem~\ref{thm-l1conv}, and the proof of part (ii) is similar to that of Theorem~\ref{thm-asconv} (cf.\ Figure~\ref{fig:proof1}).

The convergence of ELSTD($\lambda$) stated in Theorem~\ref{thm-lstd} now follows from the preceding results (cf.\ Figure~\ref{fig:proof1}). Specifically, we calculate the limit $\G^*$ in Theorem~\ref{thm-l1conv} for the two functions $h_1, h_2$ in Eq.~(\ref{eq-h-lstd}), which are associated with the ELSTD($\lambda$) iterates $\{\bA_t\}, \{\bb_t\}$, respectively, and we show that $\G^* = \bA$ for $h = h_1$ and $\G^* = \bb$ for $h = h_2$. We also write the iterates $\{\bb_t\}$ equivalently as $\bb_{t+1}  = \G_{t+1} + W_{t+1}$ with $h = h_2$ in the definition of $\{G_t\}$. Then, the $L^1$-convergence part of Theorem~\ref{thm-lstd} follows from Theorem~\ref{thm-l1conv} and Prop.~\ref{prp-noise}(i), and the a.s.\ convergence part of Theorem~\ref{thm-lstd} follows from Theorem~\ref{thm-asconv} and Prop.~\ref{prp-noise}(ii). The complete proof with all the details is given in Appendix~\ref{appsec-prflstd}.

\section{Convergence Analysis of ETD($\lambda$)} \label{sec-etd}
Recall that ETD($\lambda$) calculates iteratively $\w_t$, $t \geq 0$, according to
\begin{equation} \label{eq-emtd}
  \w_{t+1} = \w_t + \alpha_t \, \e_t \cdot \rho_t \, \big( R_{t} + \gamma_{t+1} \fe(S_{t+1})^\top \w_t - \fe(S_t)^\top \w_t \big).
\end{equation}
Using the results of Section~\ref{sec-elstd}, we can now analyze its convergence by applying a ``mean ODE''
method from stochastic approximation theory \citep{KuY03}.

Denoting $\tilde \omega_{t+1} = \rho_t \, (R_{t} - r(S_t, A_t, S_{t+1}))$, let us write the iteration (\ref{eq-emtd}) equivalently as
\begin{equation} 
 \w_{t+1} = \w_t + \alpha_t \, h(\w_t, \xi_t) + \alpha_t \, \e_t \cdot \tilde \omega_{t+1},
\end{equation} 
where $\xi_t = (\e_t, S_t, A_t, S_{t+1})$ and 
$h : \rn \times  \rn \times \mathcal{S} \times \mathcal{A}  \times \mathcal{S} \to \rn$ is given by
\begin{equation} \label{eq-mfn1}
  h(\w, \xi) =  \e \cdot \rho(s, a) \, \big( r(s, a, s') + \gamma(s') \, \fe(s')^\top \w - \fe(s)^\top \w \big), \quad \text{for} \ \ \xi = (\e, s, a, s').
\end{equation}  
We will apply \cite[Theorem 6.1.1]{KuY03} to analyze the convergence of $\{\w_t\}$ generated by (\ref{eq-emtd}). 
The ``mean ODE'' associated with ETD($\lambda$) (\ref{eq-emtd}) is
\begin{equation} \label{eq-ode}
  \dot{x} = \bar h(x), \qquad \text{where} \ \  \bar h(x) = \bA x + \bb.
\end{equation}  
When $\bA$ is negative definite, the above ODE has a unique bounded (constant) solution $x(\cdot) \equiv \w^* = - \bA^{-1} \bb$ on the time interval $(-\infty, +\infty)$,  
and $\w^*$ is globally asymptotically stable for (\ref{eq-ode}) in the sense of Liapunov \citep[cf.][p.\ 23-24]{KuC78}. (A Liapunov function in this case is given by $\| \w - \w^* \|_2^2$, where $\| \cdot \|_2$ denotes the Euclidean norm.) 

However, the a.s.\ boundedness of $\{ \w_t\}$ is not easy to prove directly, which has prevented us from getting the desired convergence $\w_t \asto \w^*$ from \cite[Theorem 6.1.1]{KuY03} directly. For this reason, we analyze first a constrained version of (\ref{eq-emtd}) and establish its convergence. The result will then help the convergence analysis of the unconstrained algorithm (\ref{eq-emtd}) in Section~\ref{sec-td2}. 

\subsection{Convergence of Constrained ETD($\lambda$)} \label{sec-constrained-etd}

Consider the following constrained ETD($\lambda$) algorithm:
\begin{equation} \label{eq-emtd-const}
 \w_{t+1} = \Pi_{\H} \Big( \w_t + \alpha_t \, h(\w_t, \xi_t) + \alpha_t \, \e_t \cdot \tilde \omega_{t+1} \Big),
\end{equation} 
where $\H$ is a closed ball in $\rn$ with a sufficiently large radius $r$: $\H = \{ \w \in \re^n \mid \| \w \|_2 \leq r \}$, and $\Pi_\H$ is the Euclidean projection onto $\H$.
The ``mean ODE'' associated with the constrained algorithm~(\ref{eq-emtd-const}) is the projected ODE
\begin{equation} \label{eq-pode}
  \dot{x} = \bar h(x) + z, \qquad z \in - \mathcal{N}_\H(x),
\end{equation}  
where $\mathcal{N}_\H(x)$ is the normal cone of $\H$ at $x$, and $z$ is the boundary reflection term that cancels out the component of $\bar h(x)$ in $\mathcal{N}_\H(x)$ and is the ``minimal force'' needed to keep the solution in $\H$ \citep[Chap.\ 4.3]{KuY03}.
The negative definiteness of the matrix $\bA$ implies that the projected ODE (\ref{eq-pode}) has no stationary points other than $\w^*$ if the radius of $\H$ is sufficiently large:

\begin{restatable}{lem}{lmapode} \label{lma-pode}
Let $c > 0$ be such that $x^\top \bA x \leq - c \| x \|_2^2$ for all $x \in \rn$. Suppose $\H$ has a radius $r > \| \bb \|_2/c$. Then $\w^*$ lies in the interior of $\H$, and the only solution $x(t), t \in (-\infty, +\infty)$, of the projected ODE (\ref{eq-pode}) in $\H$ is $x(\cdot) \equiv \w^*$.
\end{restatable}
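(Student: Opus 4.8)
The plan is to prove both assertions with the quadratic Liapunov function $V(x) = \tfrac12 \| x - \w^*\|_2^2$, exploiting that $\w^*$ is the unique zero of $\bar h$ (since $\bar h(\w^*) = \bA \w^* + \bb = 0$) and that $\bA$ is negative definite with modulus $c$. The first assertion is a direct norm estimate, and the second is an entire-trajectory argument driven by the dissipativity of $\bar h$ together with the non-expansiveness contributed by the reflection term.

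For the interiority claim, I would bound $\|\w^*\|_2$ directly. From $\w^* = -\bA^{-1}\bb$ we have $\bA \w^* = -\bb$, so taking the inner product with $\w^*$ gives $(\w^*)^\top \bA \w^* = -(\w^*)^\top \bb$. Combining the hypothesis $(\w^*)^\top \bA \w^* \le -c\|\w^*\|_2^2$ with the Cauchy--Schwarz bound $(\w^*)^\top \bb \le \|\w^*\|_2 \|\bb\|_2$ yields $c \|\w^*\|_2^2 \le \|\w^*\|_2\|\bb\|_2$, hence $\|\w^*\|_2 \le \|\bb\|_2/c < r$. Thus $\w^*$ lies in the interior of $\H$.

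For the uniqueness claim, let $x(\cdot)$ be any solution of the projected ODE~(\ref{eq-pode}) on $(-\infty, +\infty)$ taking values in $\H$, with associated reflection term $z(t) \in -\mathcal{N}_\H(x(t))$. Since $\bb = -\bA\w^*$, we have the key identity $\bar h(x) = \bA x + \bb = \bA(x - \w^*)$, so negative definiteness gives $(x - \w^*)^\top \bar h(x) = (x-\w^*)^\top \bA(x - \w^*) \le -c\|x - \w^*\|_2^2$. For the reflection term I would use that $\w^* \in \H$ (from the first part) together with the defining inequality of the normal cone, which gives $(x(t) - \w^*)^\top z(t) \le 0$. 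Because $x(\cdot)$ is absolutely continuous and $V$ is $C^1$, the map $t \mapsto V(x(t))$ is absolutely continuous and, for a.e.\ $t$,
\[ \tfrac{d}{dt}\, V(x(t)) = (x(t) - \w^*)^\top\big(\bar h(x(t)) + z(t)\big) \le -c\,\|x(t) - \w^*\|_2^2 \le 0, \]
so $V(x(t))$ is nonincreasing. To conclude, note that $x(t)\in \H$ makes $V(x(\cdot))$ bounded, so the monotone limits $V_{\pm\infty} = \lim_{t\to\pm\infty} V(x(t))$ exist and are finite, and integrating the displayed inequality over $\re$ gives $c\int_{-\infty}^{\infty}\|x(t) - \w^*\|_2^2\,dt \le V_{-\infty} - V_{+\infty} < \infty$. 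If $V_{-\infty}>0$, pick $t_1$ with $V(x(t_1)) > V_{-\infty}/2$; monotonicity gives $V(x(t)) \ge V(x(t_1))$ for all $t \le t_1$, so $\|x(t)-\w^*\|_2^2 \ge V_{-\infty}$ on $(-\infty,t_1]$, forcing the integral to diverge --- a contradiction. Hence $V_{-\infty}=0$, and monotonicity then yields $0 \le V(x(t)) \le V_{-\infty}=0$ for every $t$, i.e.\ $x(\cdot) \equiv \w^*$.

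The main obstacle I anticipate is precisely this last two-sided argument: the statement concerns solutions on the whole line $(-\infty,+\infty)$, so one cannot simply invoke asymptotic stability (which governs the forward limit $t\to+\infty$). The delicate point is ruling out nonconstant \emph{entire} trajectories, which here requires the $t\to-\infty$ analysis and the integrability of $\|x-\w^*\|_2^2$. A secondary technical care-point is the handling of the reflection term $z$ --- ensuring it is admissible for the chain-rule computation of $\tfrac{d}{dt}V(x(t))$ (using absolute continuity of $x(\cdot)$) and that the normal-cone inequality is applied at the interior point $\w^*$ --- but this is routine given the properties of the projected ODE established in \citep[Chap.\ 4.3]{KuY03}.
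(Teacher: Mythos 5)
Your proof is correct, and its first half (the bound $\|\w^*\|_2 \le \|\bb\|_2/c < r$ via $0 = \langle \w^*, \bA\w^* + \bb\rangle$) is essentially identical to the paper's. For the uniqueness of the entire solution you take a genuinely different route. The paper does not introduce a Liapunov function for the projected dynamics at all: it observes that on the boundary of $\H$ the normal cone is the ray $\{ax \mid a \ge 0\}$, and that the radius condition gives $\langle x, \bar h(x)\rangle \le r(-cr + \|\bb\|_2) < 0$ there, so the drift points strictly inward, the reflection term $z$ vanishes identically, and every solution of the projected ODE (\ref{eq-pode}) in $\H$ is in fact a solution of the unconstrained linear ODE (\ref{eq-ode}) --- whose unique bounded solution on $(-\infty,+\infty)$ is the constant $\w^*$ because $\bA$ is Hurwitz (a fact the paper takes from the cited stochastic-approximation literature). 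You instead keep the reflection term, dominate it with the normal-cone inequality $(x - \w^*)^\top z \le 0$ (which needs only $\w^* \in \H$, not interiority), and run a two-sided dissipation argument with $V = \tfrac12\|x - \w^*\|_2^2$: monotonicity plus boundedness of $V$ on $\H$ gives finiteness of $c\int_{-\infty}^{\infty}\|x - \w^*\|_2^2\,dt$, and the $t \to -\infty$ analysis forces $V \equiv 0$. Your version is self-contained (it does not invoke the characterization of bounded entire solutions of a linear Hurwitz ODE) and more general, as it works for any monotone drift and any closed convex $\H$ containing $\w^*$; the paper's version is shorter, but it spends the radius hypothesis a second time to annihilate $z$ outright and relies on the ball geometry of $\H$. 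Your attention to the technical points --- absolute continuity of $t \mapsto V(x(t))$ and the need to rule out nonconstant \emph{entire} trajectories rather than merely invoking forward asymptotic stability --- is exactly right and is handled correctly.
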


The proof of Lemma~\ref{lma-pode} is given in Appendix~\ref{appsec-etd}. We now apply \cite[Theorem 6.1.1]{KuY03} and Lemma~\ref{lma-pode} to prove the a.s.\ convergence of the constrained ETD($\lambda$) as stated in the theorem below. The proof is given in Appendix~\ref{appsec-etd}, and it uses the results of Section~\ref{sec-elstd} to verify the conditions required by \cite[Theorem 6.1.1]{KuY03}.

\begin{restatable}[Almost sure convergence of constrained ETD($\lambda$)]{thm}{thmctd} \label{thm-ctd}
Let Assumptions~\ref{cond-bpolicy}-\ref{cond-stepsize} hold.
Let $\{\w_t\}$ be the sequence generated by the constrained ETD($\lambda$) algorithm~(\ref{eq-emtd-const}) with stepsizes satisfying $\alpha_t = O(1/t)$ and $\tfrac{\alpha_t - \alpha_{t+1}}{\alpha_t}= O(1/t)$, and with the radius $r$ of $\H$ exceeding the threshold given in Lemma~\ref{lma-pode}.
Then, for any given initial $(\e_0,\F_0,\w_0)$, $\w_t \asto \w^*$. 
\end{restatable}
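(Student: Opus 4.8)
The plan is to apply the mean-ODE convergence theorem \cite[Theorem 6.1.1]{KuY03} to the constrained recursion~(\ref{eq-emtd-const}), verifying its hypotheses with the trace and ergodicity results of Section~\ref{sec-elstd}. First I would split each increment as $h(\w_t,\xi_t) + \e_t \cdot \tilde\omega_{t+1} = \bar h(\w_t) + \beta_t + \e_t \cdot \tilde\omega_{t+1}$, where $\bar h(\w) = \bA \w + \bb$ is the field of the ODE~(\ref{eq-ode}), $\beta_t = h(\w_t,\xi_t) - \bar h(\w_t)$ is the state-dependent ``Markov noise,'' and $\e_t \cdot \tilde\omega_{t+1}$ is the reward noise, which is conditionally mean zero given the past. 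The theorem then yields a.s.\ convergence of $\{\w_t\}$ to the limit set of the projected ODE~(\ref{eq-pode}), once I establish that (a) the increments are uniformly integrable, (b) $\bar h$ is the correct averaged drift, and (c) both noise terms have asymptotic rate of change zero w.p.\ 1.

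Condition (a) follows from $\sup_{t}\E[\|(\e_t,\F_t)\|] < \infty$ (Prop.~\ref{prp-bdtrace}) together with the projection onto $\H$ keeping $\{\w_t\}$ bounded. For (b), observe that the map in~(\ref{eq-mfn1}) factors as $h(\w,\xi) = h_1(y,s,a,s')\,\w + h_2(y,s,a,s')$ with $h_1,h_2$ the ELSTD functions of~(\ref{eq-h-lstd}); since each is Lipschitz in $y=(\e,\F)$, Theorem~\ref{thm-erg}(ii) applies, and the stationary mean is $\E_\zeta[h(\w,\xi_0)] = \E_\zeta[h_1]\,\w + \E_\zeta[h_2]$. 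By the computations of $\G^*$ underlying Theorem~\ref{thm-lstd}, these two stationary means equal $\bA$ and $\bb$, so $\E_\zeta[h(\w,\xi_0)] = \bar h(\w)$, identifying the averaged drift with the ODE field. For (c), the Markov noise $\beta_t$ is dispatched by the ergodicity of $\{\Z_t\}$: the a.s.\ weak convergence of the occupation measures $\mu_{z,t} \to \zeta$ (Theorem~\ref{thm-erg}(i)) together with $\E_\zeta\|h\| < \infty$ forces windowed averages of $h(\w,\xi_t)$ to converge to $\bar h(\w)$, which is precisely the vanishing asymptotic rate of change required for the correlated-noise term; the reward noise, being a conditionally mean-zero martingale-difference sequence, is controlled by a martingale argument of the type underlying Prop.~\ref{prp-noise}, showing its $\alpha_t$-weighted cumulative effect also has asymptotic rate of change zero. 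The stepsize requirements $\alpha_t = O(1/t)$ and $\tfrac{\alpha_t-\alpha_{t+1}}{\alpha_t} = O(1/t)$ are exactly what these averaging and martingale estimates demand.

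With (a)--(c) in hand, \cite[Theorem 6.1.1]{KuY03} gives that, almost surely, every limit point of $\{\w_t\}$ lies on a bounded entire solution of the projected ODE~(\ref{eq-pode}) contained in $\H$. By Lemma~\ref{lma-pode}, for radius $r$ above the stated threshold the only such solution is the constant $\w^*$; hence the limit set reduces to $\{\w^*\}$ and $\w_t \asto \w^*$, as claimed.

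I expect the main obstacle to be condition (c) under the general off-policy setting, where (per Remark~\ref{rmk-var}) the variances of the trace iterates $\e_t$ can grow without bound, so that the standard tightness-based stability criteria \citep{BorM00,Bor08} do not apply and the reward-noise martingale need not be square-summable in the usual sense. The resolution is that Section~\ref{sec-elstd} was built precisely to supply the needed averaging using only first-moment boundedness and ergodic averaging: the truncated-trace approximation (Prop.~\ref{prp-3}) reduces the delicate $L^1$ estimates to finite-state chains, while the weak convergence of occupation measures to $\zeta$ (Theorem~\ref{thm-erg}) delivers the a.s.\ ergodic averages. The one further delicate point is that $\w_t$ itself drifts while we average the $\xi$-noise; I would handle this by exploiting that $h(\w,\xi)$ is affine, hence globally Lipschitz, in $\w$ uniformly over the compact $\H$, so the slowly varying $\w_t$ (recall $\alpha_t \to 0$) can be frozen over each averaging window, reducing the estimate of $\sum_i \alpha_i \beta_i$ to the already-established convergence of the $\G_t$ recursion (Theorems~\ref{thm-l1conv},~\ref{thm-asconv}).
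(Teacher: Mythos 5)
Your proposal is correct and follows essentially the same route as the paper: it invokes \cite[Theorem 6.1.1]{KuY03}, verifies the moment/continuity conditions via Prop.~\ref{prp-bdtrace} and the boundedness enforced by the projection, identifies the averaged drift with $\bar h(\w)=\bA\w+\bb$ through the ELSTD limit computations, establishes the asymptotic-rate-of-change conditions for the Markov and reward noise from Theorems~\ref{thm-l1conv},~\ref{thm-asconv} and Prop.~\ref{prp-noise}(ii) together with the stepsize hypotheses (exactly the strong-law-of-large-numbers route of \cite[Chap.~6.2, Example 6.1]{KuY03}), and concludes via Lemma~\ref{lma-pode}. Your handling of the drifting $\w_t$ by exploiting that $h(\w,\xi)$ is affine in $\w$ with $\|\e\|$-controlled coefficients is precisely what the paper's verification of the growth and Lipschitz conditions (its conditions (v)--(vi), with $g_2(\xi)=\C\|\e\|$ and $g_4(\xi)=\C'\|\e\|$) accomplishes.
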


\subsection{Convergence of ETD($\lambda$)} \label{sec-td2}

We now prove the convergence theorem, Theorem~\ref{thm-td-unconstrained}, for the unconstrained ETD($\lambda$) algorithm by using the convergence of the constrained algorithm we just established. In particular, we shall compare the iterates generated by the unconstrained algorithm with those generated by the constrained one, and show that the difference between them diminishes asymptotically with probability one.

Let $\H = \big\{ \w \in \rn \mid \| \w \|_2 \leq r \big\}$ with its radius $r$ satisfying the condition of Lemma~\ref{lma-pode}. Note that to project $\w$ onto $\H$ is simply to scale $\w$: $\Pi_\H \w = \w$ if $\|\w\|_2 \leq r$; and $\Pi_\H \w = r \cdot \w /\| \w\|_2$ if $\|\w\|_2 > r$. More concisely, 
$$\Pi_\H \w = \eta \, \w, \qquad \text{where} \ \ \  \eta = \min \{ 1,  r / \| \w\|_2 \}.$$  
To simplify notation, define matrix $\mC_t$ and vector $g_{t}$ by
$$  \mC_t = \e_t \cdot \rho_t \,  \big( \gamma_{t+1} \, \fe(S_{t+1})  - \fe(S_t) \big)^\top, \qquad g_{t} = \e_t \cdot \rho_t \,  R_{t}.$$
Let us write the constrained algorithm (\ref{eq-emtd-const}) equivalently as
\begin{equation} \label{eq-prf-cemtd}
 \tilde{\w}_{t+1} = \left (I + \alpha_t \, \mC_t \right) \cdot \eta_t \, \tilde{\w}_{t} + \alpha_t \, g_{t},
\end{equation}  
where $\eta_0=1$ and $\eta_t = \min \{ 1,  r / \| \tilde{\w}_t\|_2 \}$ for $t \geq 1$. (For $t \geq 1$, $\eta_t \,\tilde{\w}_t$ corresponds to the projected iterate in (\ref{eq-emtd-const}), and $\tilde{\w}_t$ the iterate just before the projection.)
The unconstrained algorithm (\ref{eq-emtd}) can be equivalently written as
\begin{equation} \label{eq-prf-emtd}
  \w_{t+1} = \left (I + \alpha_t \, \mC_t \right) \cdot  \w_{t} + \alpha_t \, g_{t}.
\end{equation}

\begin{lem} \label{lma-pmtrx}
Under the conditions of Theorem~\ref{thm-ctd}, for any given initial $(\e_0, \F_0)$, almost surely, the sequence of matrices, $\prod_{k \geq \bar t}^t \left  (I + \alpha_k \, \mC_k \right)$, $t = \bar t, \bar t+1, \ldots$, converges to the $n \times n$ zero matrix as $t \to \infty$, for all $\bar t \geq 0$.
\end{lem}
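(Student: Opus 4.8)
The plan is to realize $\prod_{k=\bar t}^{t}(I + \alpha_k \mC_k)$ as the transition matrix of the homogeneous linear recursion $u_{t+1} = (I + \alpha_t \mC_t)\,u_t$ and to deduce its decay from a constrained stochastic-approximation argument. The key observation is that $\mC_t$ is exactly the per-step quantity $h_1(Y_t,S_t,A_t,S_{t+1})$ whose local averages converge to $\bA$ (the $\G^*=\bA$ computation behind Theorem~\ref{thm-lstd}), so the ``mean dynamics'' of this recursion is $\dot x = \bA x$, which, by the negative definiteness of $\bA$ under Assumptions~\ref{cond-bpolicy},~\ref{cond-A}, drives every trajectory to the origin. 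The difficulty is that $\{\mC_t\}$ has growing variance, so the product cannot be bounded pathwise; I would therefore borrow the machinery already developed for the constrained algorithm.

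Concretely, I would introduce an auxiliary matrix-valued constrained recursion, started at $\Theta_{\bar t}=I$,
$$ \Theta_{t+1} = \Pi_{R}\big[(I + \alpha_t \mC_t)\,\Theta_t\big] = \Pi_R\big[\Theta_t + \alpha_t \mC_t \Theta_t\big], $$
where $\Pi_R$ is the Euclidean (Frobenius) projection of an $n\times n$ matrix onto the ball $\{X : \|X\|_F \le R\}$ for an arbitrary fixed $R>0$. Viewing $\Theta_t$ as a vector in $\re^{n^2}$, this is a constrained recursion of exactly the same type as the constrained ETD($\lambda$) iteration~(\ref{eq-emtd-const}) driven by $\{Z_t\}$, but with no reward-noise term and with increment $\mC_t\Theta_t$ whose mean function is $\Theta\mapsto \bA\Theta$. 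Its ``mean ODE'' is the projected ODE $\dot X = \bA X + Z$, $Z\in -\mathcal N(X)$, and the matrix analogue of Lemma~\ref{lma-pode} holds: taking $\|X\|_F^2$ as a Liapunov function and using $\langle X,\bA X\rangle = \sum_j x_j^\top \bA x_j \le -c\|X\|_F^2$ together with $\langle X, Z\rangle \le 0$, the origin is the unique stationary point in the ball and is globally asymptotically stable, for every $R>0$. Rerunning the argument behind Theorem~\ref{thm-ctd} for this recursion (the conditions of \cite[Theorem 6.1.1]{KuY03} are verified through the Section~\ref{sec-elstd} results on $\{Z_t\}$ in the same way, and the absence of the reward noise only simplifies matters) then yields $\Theta_t \asto 0$ as $t\to\infty$, for each fixed $\bar t$.

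The final step exploits a special feature of the ball constraint: the projection $\Pi_R$ acts as a \emph{scalar} radial scaling, $\Pi_R[Y]=s\,Y$ with $s=\min\{1,R/\|Y\|_F\}\in(0,1]$. Hence each step multiplies by a scalar, and unrolling gives
$$ \Theta_t = \Big(\textstyle\prod_{k=\bar t}^{t-1} s_k\Big)\, \prod_{k=\bar t}^{t-1}(I + \alpha_k \mC_k), \qquad s_k\in(0,1]. $$
Because $\Theta_t \asto 0$, the projection can be active only finitely often almost surely --- otherwise $\|\Theta_{t+1}\|_F = R$ infinitely often, contradicting $\Theta_t\to 0$ --- so $s_k=1$ for all large $k$ and $\prod_{k=\bar t}^{t-1}s_k$ converges to a strictly positive limit $s_\infty$. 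Dividing, $\prod_{k=\bar t}^{t-1}(I+\alpha_k\mC_k) = \Theta_t/\prod_k s_k \to 0$ almost surely, and intersecting the resulting a.s.\ events over the countably many $\bar t\ge 0$ gives the statement for all $\bar t$ simultaneously.

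The main obstacle is the middle step --- establishing $\Theta_t\asto 0$ --- rather than the bookkeeping around it: the product has no pathwise a priori bound because the trace variances grow, so the decay cannot be read off directly and must come from the ``mean ODE'' analysis through the constrained recursion, exactly as for ETD($\lambda$) itself. The clean cancellation in the last step hinges on the projection being radial (a single scalar), which is why projecting the \emph{matrix} onto a Euclidean ball --- rather than, say, constraining each column separately --- is the right device: it is what preserves the full-rank information carried by $\Theta_{\bar t}=I$ and lets a scalar correction be divided out to recover the genuine product.
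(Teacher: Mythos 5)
Your proposal is correct and follows essentially the same route as the paper's proof: the paper runs the constrained recursion of Theorem~\ref{thm-ctd} with the rewards set to zero, once for each column $y$ of the identity (so the mean ODE is $\dot x = \bA x$, the constrained iterates converge to $\0$, and the eventually-constant product of radial scaling factors $\prod_k \eta_k$ is divided out exactly as you divide out $\prod_k s_k$). The only difference is organizational --- you vectorize the whole matrix into a single constrained recursion in $\re^{n^2}$ started at the identity, at the cost of re-verifying the Kushner--Yin conditions for that matrix recursion, whereas the paper handles the $n$ columns by $n$ separate invocations of Theorem~\ref{thm-ctd} as a black box.
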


\begin{proof}
It is sufficient to consider a given (arbitrary) vector $y \in \rn$ and prove that for each initial $(\e_0, \F_0)$ and each $\bar t \geq 0$, $\prod_{k \geq \bar t}^t \left  (I + \alpha_k \, \mC_k \right) y \asto \0$. To this end, consider generating the iterates $\tilde{\w}_{\bar t}, \tilde{\w}_{\bar t + 1}, \ldots,$ starting from time $\bar t$ and $\tilde{\w}_{\bar t} = y$, by using the constrained algorithm (\ref{eq-prf-cemtd}) as follows:  
$$ \tilde{\w}_{k+1} = \left (I + \alpha_k \, \mC_k \right) \cdot \eta_k \, \tilde{\w}_{k}, \qquad k \geq \bar t. $$
In the above, we calculate $(\e_k,\F_k)$ and $\mC_k$ as before starting from time $0$ and the given initial condition $(\e_0, \F_0)$, and we have set $g_k = R_k = 0$ for all $k$.
Notice that since the stepsize sequence $\{\alpha_t\}$ satisfies the condition of Theorem~\ref{thm-ctd}, so does
the stepsize sequence, $\alpha_{\bar t +1}, \alpha_{\bar t+2}, \ldots$. Then, in view of the Markovian property of $\{(S_t, A_t, \e_t, \F_t)\}$, we can apply Theorem~\ref{thm-ctd} to the above iteration starting from time $\bar t$ for each possible value of $(\e_{\bar t}, \F_{\bar t})$, thereby concluding that  
for the given $(\e_0,\F_0)$ and $\bar t$, $\tilde{\w}_t \asto \0$ (because $R_k = 0$ for all $k$ and the solution to $\bA \w =0$ is $\0$). 

On the other hand,
\begin{equation} \label{eq-prf-mtrx1}
  \tilde{\w}_{t+1} = \left( \textstyle{\prod}_{k \geq \bar t}^t \, \left(I + \alpha_k \, \mC_k \right) \right) \cdot \left( \textstyle{\prod}_{k \geq \bar t}^t  \, \, \eta_k \right) \cdot y.
\end{equation}  
Since the solution $\0$ lies in the interior of $\H$, if $\tilde{\w}_t \to \0$, then $\eta_k = 1$ for all $k$ sufficiently large. Thus the convergence $\tilde{\w}_t \asto \0$ implies 
that as $t \to \infty$, $\prod_{k \geq \bar t}^t  \eta_k$ converges a.s.\ to a strictly positive number that depends on the sample path and the vector $y$. Consequently, from Eq.~(\ref{eq-prf-mtrx1}) and the convergence $\tilde{\w}_t \asto \0$, we obtain that $\left( \prod_{k \geq \bar t}^t \left (I + \alpha_k \, \mC_k \right) \right) y \asto \0$ as $t \to \infty$. Now this holds for any given vector $y$, so by letting $y$ be each column of the identity matrix, it follows that as $t \to \infty$, the matrix $ \prod_{k \geq \bar t}^t \left (I + \alpha_k \, \mC_k \right)$ converges a.s.\ to the zero matrix.
\end{proof}

Finally, we prove the a.s.\ convergence of the unconstrained ETD($\lambda$) as stated by Theorem~\ref{thm-td-unconstrained}:
\smallskip
\begin{proofof}{Theorem~\ref{thm-td-unconstrained}} 
Let $\{\tilde{\w}_t\}$ be the iterates generated by the constrained algorithm (\ref{eq-prf-cemtd}) using the same trajectory of states, actions and rewards that are used by the unconstrained algorithm (\ref{eq-emtd}) to generate $\{\w_t\}$.
By Theorem~\ref{thm-ctd} and Lemma~\ref{lma-pmtrx}, there exists a set $\Omega_1$ of sample paths such that $\Omega_1$ has probability one and on $\Omega_1$,
$$ \tilde{\w}_t \to \w^* \qquad \text{and} \qquad  \lim_{t \to 0} \, \textstyle{\prod}_{k \geq \bar t}^t \left (I + \alpha_k \, \mC_k \right)  = 0_{n \times n}, \quad \forall \, \bar t \geq 0, $$
where $0_{n \times n}$ denotes the $n \times n$ zero matrix. Consider each path in $\Omega_1$. 
By our choice of the constraint set $\H$, $\w^*$ lies in the interior of $\H$ (Lemma~\ref{lma-pode}), so the convergence $\tilde{\w}_t \to \w^*$ implies the existence of a path-dependent time $t' < \infty$ such that $\eta_k=1$ for all $k \geq t'$. Then
$$\tilde{\w}_{k+1} = \left (I + \alpha_k \, \mC_k \right) \cdot \, \tilde{\w}_{k} + \alpha_k \, g_{k}, \qquad \forall \, k \geq t',$$
and consequently, 
\begin{align} 
 \w_{k +1} - \tilde{\w}_{k+1} & = \left (I + \alpha_k \, \mC_k \right) \cdot \big( \w_{k} - \tilde{\w}_{k} \big), \qquad \forall \, k \geq t', \notag \\
  \w_{t +1} - \tilde{\w}_{t+1} & = \left( \textstyle{\prod}_{k \geq t'}^t \left (I + \alpha_k \, \mC_k \right) \right) \cdot \big( \w_{t'} - \tilde{\w}_{t'} \big), \qquad \forall \, t \geq t'. \label{eq-prf-td}
\end{align}  
As $t \to \infty$, the matrix $\prod_{k \geq t'}^t \left (I + \alpha_k \, \mC_k \right)  \to 0_{n \times n}$ for the sample path under consideration. Thus, from Eq.~(\ref{eq-prf-td}) we obtain $\w_{t} - \tilde{\w}_{t} \to \0$; since $\tilde{\w}_t \to \w^*$, this implies $\w_t \to \w^*$.
\end{proofof}

\begin{rem}[Almost sure convergence of regular off-policy TD($\lambda$)] \rm
If $\lambda$ is a constant sufficiently close to $1$, the matrix associated with the ``mean updates'' of the regular off-policy TD($\lambda$) algorithm is also negative definite \citep{by08}. In that case, \cite[Prop.\ 4.1]{Yu-siam-lstd} established the a.s.\ convergence but only for a constrained version of the algorithm, similar to our Theorem~\ref{thm-ctd}. The proofs given in this subsection, combined with \cite[Prop.\ 4.1]{Yu-siam-lstd}, can be used to establish the desired a.s.\ convergence for the unconstrained off-policy TD($\lambda$) in that case.
\end{rem}

\acks{I thank Prof.\ Richard Sutton for discussions on the ETD algorithm, and Dr.\ Joseph Modayil, Ashique Mahmood, and several anonymous reviewers for helpful comments on the first version of this paper. This research was supported by a grant from Alberta Innovates -- Technology Futures.}

\addcontentsline{toc}{section}{References}
\bibliography{emTD_bib}

\clearpage
\addcontentsline{toc}{section}{Appendices}
\appendix

\section{Proof Details for Section~\ref{sec-elstd}} \label{appsec-a}
In this appendix we give proof details and related results for Section~\ref{sec-elstd}.
Assumption~\ref{cond-bpolicy} on the target and behavior policies will be in force throughout, so it will not be mentioned explicitly in intermediate technical results.

\subsection{Some Basic Technical Lemmas}

We prove three basic lemmas that will be useful later. 
First, recall that $\Gm$ and $\Lm$ are diagonal matrices with $\gamma(s)$ (discount factors) and $\lambda(s)$, $s \in \S$, on their diagonals, respectively. Note also that under Assumption~\ref{cond-bpolicy}, the inverse $(I - \P\Gm)^{-1}$ exists. This implies that $(I - \P\Gm\Lm)^{-1}$ also exists. 
Then, since 
$$(I - \P\Gm)^{-1} = \sum_{t =0}^\infty (\P \Gm)^{t}, \qquad  (I - \P\Gm\Lm)^{-1} = \sum_{t =0}^\infty (\P \Gm \Lm)^{t},$$
both $(\P \Gm)^{t}$ and $(\P \Gm \Lm)^{t}$ converge to the zero matrix as $t \to \infty$. 

We now specify some notation. In what follows, let $\1$ denote the vector of all ones. For an expression $H$ that results in a vector in $\re^N$, we will write $(H)(s)$ for the $s$-th entry of the resulting vector.
(For example, $(\P \1)(s)$ and $(\1^\top \P)(s)$ represent the $s$-th entry of the vector $\P \1$ and $\1^\top \P$, respectively.)

Let $\mathcal{F}_{t}=\sigma\big( S_0, A_0, \ldots, S_{t} \big)$ be the $\sigma$-algebra generated by the states and actions up to time $t$, including the state $S_t$ but excluding the action $A_t$. Recall some shorthand notation we defined earlier:
$$ \rho_t = \rho(S_t, A_t) = \tfrac{\pi( A_t \mid S_t)}{\pi^o(A_t \mid S_t)}, \qquad \gamma_t = \gamma(S_t), \qquad \lambda_t = \lambda(S_t).$$
To simplify notation, let us also define for $t \geq 1$,
$$  \beta_t = \rho_{t-1} \, \gamma_t \, \lambda_t.$$

\begin{lem} \label{lma1}
For all $t > k \geq 0$, 
\begin{align}
   \E \big[\, \rho_{k} \gamma_{k+1} \cdots \rho_{t-1} \gamma_t  \, \mid \mathcal{F}_k \big] & = \big( (\P \Gm)^{t-k} \1 \big)(S_k) \leq 1, \label{eq-lma1-1} \\
   \E \big[ \, \beta_{k+1} \beta_{k+2} \cdots \beta_t \, \mid \mathcal{F}_k \big] & = \big( (\P \Gm \Lm) ^{t-k} \1 \big)(S_k)  \leq 1. \label{eq-lma1-2}
\end{align}   
Furthermore, as $t \to \infty$, $\prod_{k=1}^t \big( \rho_{k-1} \gamma_{k} \big) \asto 0$ and $\prod_{k=1}^t \beta_k \asto 0$.
\end{lem}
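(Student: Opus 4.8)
\textbf{Proof plan for Lemma~\ref{lma1}.}

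The plan is to prove the two conditional-expectation identities first and then deduce the almost-sure convergence from them via a Borel--Cantelli argument. For Eq.~(\ref{eq-lma1-1}), I would proceed by induction on $t-k$, peeling off one factor at a time using the tower property of conditional expectation. The key computation is the one-step identity: conditioning on $\mathcal{F}_{t-1}$ (which determines $S_{t-1}$ but not $A_{t-1}$), I would show that $\E[\rho_{t-1}\gamma_t \mid \mathcal{F}_{t-1}]$ equals $(\P\Gm\1)(S_{t-1})$. This follows because summing the importance ratio $\rho(S_{t-1},a)=\pi(a\mid S_{t-1})/\pi^o(a\mid S_{t-1})$ against the behavior action distribution $\pi^o(a\mid S_{t-1})$ converts the expectation back to one under the target policy $\pi$, yielding $\sum_{a,s'}\pi(a\mid S_{t-1})\,p(s'\mid S_{t-1},a)\,\gamma(s')=(\P\Gm\1)(S_{t-1})$. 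Nesting this identity $t-k$ times, with $\Gm$ inserted at each stage, produces the matrix power $(\P\Gm)^{t-k}$ applied to $\1$ and evaluated at $S_k$. The bound $\le 1$ is immediate since $\P$ is stochastic and $\Gm$ has entries in $[0,1]$, so $\P\Gm\1\le\1$ entrywise and hence $(\P\Gm)^{t-k}\1\le\1$. The proof of Eq.~(\ref{eq-lma1-2}) is identical in structure, with the single-step expectation $\E[\beta_t\mid\mathcal{F}_{t-1}]=\E[\rho_{t-1}\gamma_t\lambda_t\mid\mathcal{F}_{t-1}]$; since $\lambda_t=\lambda(S_t)$ depends on the \emph{next} state it must be absorbed into the one-step transition sum, giving $(\P\Gm\Lm\1)(S_{t-1})$ and, upon nesting, the matrix power $(\P\Gm\Lm)^{t-k}$.

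For the almost-sure convergence, I would use the observation recorded just before the lemma statement: under Assumption~\ref{cond-bpolicy}(i) the matrices $(\P\Gm)^t$ and $(\P\Gm\Lm)^t$ converge to the zero matrix. Taking $k=0$ in the two identities gives $\E[\prod_{k=1}^t(\rho_{k-1}\gamma_k)\mid S_0]=((\P\Gm)^t\1)(S_0)$ and the analogous statement for $\prod_{k=1}^t\beta_k$. Since the products are nonnegative and monotone nonincreasing in $t$ along each sample path (each new factor $\rho_{k-1}\gamma_k$ or $\beta_k$ lies in $[0,\infty)$ — caution here, see below), the almost-sure limit exists; I would then show the limit is zero. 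The cleanest route is Borel--Cantelli: choose a subsequence $t_j$ along which $\|(\P\Gm)^{t_j}\1\|$ (and its $\Lm$-counterpart) decays geometrically, say $((\P\Gm)^{t_j}\1)(S_0)\le C\beta^{j}$ for some $\beta\in(0,1)$, which is possible because the spectral radius of $\P\Gm$ is strictly less than $1$. By the Markov inequality, $\mathbf{P}(\prod_{k=1}^{t_j}(\rho_{k-1}\gamma_k)>\epsilon_j)\le C\beta^j/\epsilon_j$; picking $\epsilon_j$ to make this summable (e.g. $\epsilon_j=\beta^{j/2}$) and applying Borel--Cantelli forces the subsequence to converge to $0$ almost surely, and monotonicity of the full product then upgrades this to convergence of the whole sequence.

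The main obstacle I anticipate is the monotonicity claim: the factors $\rho_{k-1}\gamma_k$ need not be bounded by $1$, since the importance ratio $\rho$ can exceed $1$, so the products are \emph{not} pathwise monotone and the simple ``monotone plus $L^1\to0$'' shortcut fails. This is precisely the source of the growing-variance difficulty emphasized in the introduction. Consequently the Borel--Cantelli route is the robust one, but it still requires care to bridge from the subsequence $\{t_j\}$ to all $t$: between consecutive $t_j$ the product can temporarily grow. I would handle this by controlling the maximal fluctuation over each block $[t_j,t_{j+1})$, either by a maximal-inequality argument or by noting that the block lengths can be taken bounded (fixed) so that the conditional expectation of the block product is uniformly bounded, and then absorbing this bounded factor into the constant $C$ before applying Borel--Cantelli. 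Verifying that the geometric decay rate $\beta$ can be chosen uniformly in the initial state $S_0$ is routine, since $\S$ is finite and the spectral radius bound holds for the single matrix $\P\Gm$.
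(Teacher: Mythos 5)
Your proof of the two conditional-expectation identities is exactly the ``direct calculation'' the paper invokes: the one-step tower-property computation $\E[\rho_{t-1}\gamma_t f(S_t)\mid \mathcal{F}_{t-1}]=(\P\Gm f)(S_{t-1})$ (and its $\Lm$-analogue), nested $t-k$ times, is correct, and the bound $\leq 1$ follows as you say from substochasticity. For the almost-sure convergence you take a genuinely different route. The paper observes that $\Delta_t=\prod_{k=1}^t(\rho_{k-1}\gamma_k)$ is a nonnegative supermartingale with respect to $\{\mathcal{F}_t\}$ (precisely because of Eq.~(\ref{eq-lma1-1}) with $k=t-1$), so it converges a.s.\ to some $\Delta_\infty$ with $\E[\Delta_\infty]\leq\liminf_t\E[\Delta_t]\leq \1^\top(\P\Gm)^t\1\to 0$, forcing $\Delta_\infty=0$; this needs only $\E[\Delta_t]\to 0$, with no rate. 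You instead use Markov's inequality plus Borel--Cantelli, which requires the quantitative fact that $\1^\top(\P\Gm)^t\1$ decays geometrically; that fact is available (for a nonnegative substochastic matrix, invertibility of $I-\P\Gm$ forces spectral radius $<1$ by Perron--Frobenius), so your route is valid and avoids martingale theory. Two remarks. First, you correctly flag that the products are \emph{not} pathwise monotone since $\rho$ can exceed $1$ --- this is the real trap in this lemma, and your decision to abandon the ``monotone plus $L^1\to 0$'' shortcut is right. Second, your subsequence-plus-block-bridging machinery is unnecessary: since $\E[\Delta_t]\leq C\beta^t$ holds for \emph{every} $t$, you can apply Markov and Borel--Cantelli directly to the full sequence (e.g.\ $\mathbf{P}(\Delta_t>\beta^{t/2})\leq C\beta^{t/2}$ is already summable), and the bridging problem you worry about never arises. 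Your proposed fix for the bridging (bounded block lengths, conditional expectation of the block multiplier at most $1$, union bound over the block) would also work, but it is extra effort for no gain. On balance, the supermartingale argument is the more economical one --- it is the same two lines for both products and generalizes to situations where no geometric rate is available --- but your argument is complete once streamlined.
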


\begin{proof}
The first two equations follow simply from a direct calculation. 

Let $\Delta_t = \prod_{k=1}^t \big( \rho_{k-1} \gamma_{k} \big)$. To prove $\Delta_t \asto 0$, 
consider equivalently the iterates 
$$\Delta_0=1, \qquad \Delta_t = (\rho_{t-1} \gamma_t) \Delta_{t-1}, \ \ \  t \geq 1.$$ 
Clearly $\Delta_t$ is $\mathcal{F}_t$-measurable, and by Eq.~(\ref{eq-lma1-1}) with $k=t-1$, 
$$\E \big[ \Delta_t \mid \mathcal{F}_{t-1} \big] = \Delta_{t-1} \cdot \E \big[ \rho_{t-1} \gamma_t \mid \mathcal{F}_{t-1}  \big] \leq \Delta_{t-1}.$$ 
So $\{(\Delta_t, \mathcal{F}_t)\}$ is a nonnegative supermartingale with $\E [\Delta_0] = 1 < \infty$. By a convergence theorem for nonnegative supermartingales \cite[Theorem II-2-9]{Nev75}, $\Delta_t \asto \Delta_\infty$ for some nonnegative random variable $\Delta_\infty$ satisfying $\E \big[ \Delta_\infty \big] \leq \liminf_{t \to \infty} \E \big[ \Delta_t \big]$. From Eq.~(\ref{eq-lma1-1}) with $k=0$, we have 
$\E \big[ \Delta_t \big] \leq \1^\top (\P \Gm)^t \1 \to 0$ as $t \to \infty$; therefore, $\E \big[ \Delta_\infty\big] = 0$. This implies $\Delta_\infty = 0$ a.s., i.e., $\Delta_t \asto 0$.

The assertion $\prod_{k=1}^t \beta_k \asto 0$ follows similarly by considering the iterates $\Delta_t = \beta_t \Delta_{t-1}$ with $\Delta_0 = 1$, and by using Eq.~(\ref{eq-lma1-2}) together with the nonnegative supermartingale convergence argument.
\end{proof}

\begin{lem} \label{lma2}
For $k \geq 0$, let $Y_k$ be an $\mathcal{F}_k$-measurable nonnegative random variable. Then for $t > k$,
\begin{align}
  \E \big[ Y_k \cdot (\rho_{k} \gamma_{k+1} \cdots \rho_{t-1} \gamma_t \big) \big] & \leq \E [ Y_k ] \cdot \big( \1^\top (\P \Gm) ^{t-k} \1 \big), \label{eq-lma-a1} \\
 \E \big[ Y_k \cdot \big(\beta_{k+1} \beta_{k+2} \cdots \beta_t \big) \big] & \leq \E [ Y_k ] \cdot \big( \1^\top (\P \Gm \Lm) ^{t-k}  \1 \big). \label{eq-lma-b1}
\end{align} 
Hence, if for some constant $\C < \infty$, $\E [ Y_k] \leq \C$ for all $k$, then
\begin{align}
 \E \left[ \sum_{k=0}^t Y_k \cdot (\rho_{k} \gamma_{k+1} \cdots \rho_{t-1} \gamma_t \big) \right] & \leq \C \cdot \1^\top  \left( \sum_{k=0}^t (\P \Gm) ^{k}  \right) \1 < \infty, \label{eq-lma-a2}\\
 \E \left[ \sum_{k=0}^t Y_k \cdot \big(\beta_{k+1} \beta_{k+2} \cdots \beta_t \big) \right] & \leq \C \cdot  \1^\top \left( \sum_{k=0}^t (\P \Gm \Lm) ^{k}  \right)  \1 < \infty. \label{eq-lma-b2}
\end{align} 
\end{lem}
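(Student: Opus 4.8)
The plan is to reduce both pointwise inequalities to Lemma~\ref{lma1} by conditioning on $\mathcal{F}_k$, exploiting that $Y_k$ is $\mathcal{F}_k$-measurable and nonnegative. First I would apply the tower property. Since the conditional expectation of the product $\rho_k \gamma_{k+1}\cdots \rho_{t-1}\gamma_t$ given $\mathcal{F}_k$ is exactly the quantity computed in Eq.~(\ref{eq-lma1-1}), I would write
\begin{align*}
  \E\big[ Y_k \cdot (\rho_k \gamma_{k+1} \cdots \rho_{t-1} \gamma_t)\big]
  &= \E\Big[ Y_k \cdot \E\big[\rho_k \gamma_{k+1}\cdots \rho_{t-1}\gamma_t \mid \mathcal{F}_k \big]\Big] \\
  &= \E\Big[ Y_k \cdot \big((\P\Gm)^{t-k}\1\big)(S_k)\Big],
\end{align*}
using Eq.~(\ref{eq-lma1-1}) in the second step.

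The key observation is that the random quantity $\big((\P\Gm)^{t-k}\1\big)(S_k)$ is a single coordinate of the vector $(\P\Gm)^{t-k}\1$, all of whose entries are nonnegative, because $\P$ is stochastic and $\Gm$ is a nonnegative diagonal matrix. Hence any single coordinate is dominated by the sum of all coordinates:
$$\big((\P\Gm)^{t-k}\1\big)(S_k) \leq \sum_{s \in \S} \big((\P\Gm)^{t-k}\1\big)(s) = \1^\top (\P\Gm)^{t-k}\1,$$
and the right-hand side is a deterministic constant. Since $Y_k \geq 0$, I may multiply through by $Y_k$, take expectations, and pull out this constant, which yields Eq.~(\ref{eq-lma-a1}). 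The companion bound Eq.~(\ref{eq-lma-b1}) follows by the identical argument applied to $\beta_{k+1}\cdots\beta_t$ together with Eq.~(\ref{eq-lma1-2}), using that the entries of $(\P\Gm\Lm)^{t-k}\1$ are likewise nonnegative.

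For the summed estimates Eqs.~(\ref{eq-lma-a2})--(\ref{eq-lma-b2}), I would sum the pointwise bounds over $k$, invoke $\E[Y_k]\leq \C$, and reindex by $j = t-k$, so that $\sum_{k=0}^t \1^\top(\P\Gm)^{t-k}\1 = \1^\top\big(\sum_{j=0}^t (\P\Gm)^j\big)\1$. Finiteness is then immediate from the remark preceding Lemma~\ref{lma1} that $(I-\P\Gm)^{-1} = \sum_{j=0}^\infty (\P\Gm)^j$ exists under Assumption~\ref{cond-bpolicy}(i); since all terms are entrywise nonnegative, the partial sums $\sum_{j=0}^t (\P\Gm)^j$ are bounded entrywise by this inverse, so the bound is a finite constant uniform in $t$. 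The $\beta$-case is identical with $(\P\Gm\Lm)$ in place of $(\P\Gm)$.

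There is no genuine obstacle in this argument. The only point demanding care is the step bounding a single coordinate of $(\P\Gm)^{t-k}\1$ by the full sum $\1^\top(\P\Gm)^{t-k}\1$; this is where the nonnegativity inherited from the (sub)stochastic structure is essential, and without it the deterministic domination would fail. Everything else is routine bookkeeping — the tower property, linearity of expectation, the harmless reindexing of the matrix series, and the already-established convergence of the Neumann series.
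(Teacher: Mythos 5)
Your proposal is correct and follows essentially the same route as the paper: condition on $\mathcal{F}_k$, invoke Eq.~(\ref{eq-lma1-1}) (resp.\ Eq.~(\ref{eq-lma1-2})) from Lemma~\ref{lma1}, dominate the single coordinate $\big((\P\Gm)^{t-k}\1\big)(S_k)$ by the deterministic quantity $\1^\top(\P\Gm)^{t-k}\1$ using nonnegativity, and then sum and bound by the convergent Neumann series. The paper's proof is just a more compressed version of the same argument.
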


\begin{proof}
For any $k \geq 0$, $ \big( (\P \Gm) ^{t-k} \1 \big)(S_k) \leq \1^\top (\P \Gm) ^{t-k} \1$. Using this, Eq.~(\ref{eq-lma1-1}) in Lemma~\ref{lma1}, and the  
assumption that $Y_k$ is $\mathcal{F}_k$-measurable, we have
\begin{align*}
 \E \left[ Y_k \cdot \big( \rho_k \gamma_{k+1} \cdots \rho_{t-1} \gamma_t \big) \right] & =  \E \big[ Y_k \cdot \E \left[ (\rho_{k} \gamma_{k+1} \cdots \rho_{t-1} \gamma_t \big) \mid \mathcal{F}_k \right] \big]  \leq \E [ Y_k] \cdot \big( \1^\top (\P \Gm) ^{t-k} \1 \big).
 \end{align*}
This proves Eqs.~(\ref{eq-lma-a1}), (\ref{eq-lma-a2}). Similarly, Eqs.~(\ref{eq-lma-b1}), (\ref{eq-lma-b2}) are obtained by using Eq.~(\ref{eq-lma1-2}) in Lemma~\ref{lma1} and a direct calculation.
\end{proof}

\begin{lem}  \label{lma3}
Let $\{a_k\}$ and $\{c_k\}$ be two sequences of nonnegative numbers with $\sum_{k=1}^\infty a_k < \infty$ and $\sum_{k=1}^\infty c_k < \infty$. 
Then $\lim_{t \to \infty} \sum_{k=1}^t a_k \, c_{t-k} = 0$.
\end{lem}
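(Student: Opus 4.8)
The plan is to treat this as the standard statement that the convolution of two nonnegative summable sequences has terms tending to zero, and to prove it by an $\epsilon$-argument that splits the convolution sum into a ``head'' and a ``tail.'' First I would record the two facts that drive the argument: since $\sum_{k=1}^\infty a_k < \infty$ and $\sum_{k=1}^\infty c_k < \infty$ with all terms nonnegative, both sequences are bounded, say $a_k \le M_a$ and $c_k \le M_c$ for all $k$, and both tend to $0$ as $k \to \infty$. Set $A = \sum_{k=1}^\infty a_k < \infty$.

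Next, fix an arbitrary $\epsilon > 0$ and split the sum at an index $m$ to be chosen:
$$ \sum_{k=1}^t a_k \, c_{t-k} = \sum_{k=1}^m a_k \, c_{t-k} + \sum_{k=m+1}^t a_k \, c_{t-k}. $$
For the tail sum I would bound $c_{t-k} \le M_c$ and invoke summability of $\{a_k\}$ to get $\sum_{k=m+1}^t a_k \, c_{t-k} \le M_c \sum_{k>m} a_k$, which can be made smaller than $\epsilon$ by choosing $m$ large, \emph{uniformly in} $t$. With $m$ now fixed, I would handle the head sum by letting $t \to \infty$: for each of the finitely many $k \le m$, the index $t-k \to \infty$, so $c_{t-k} \to 0$; more precisely, once $t$ is large enough that $c_j < \epsilon$ for all $j \ge t-m$, the head sum is bounded by $\epsilon \sum_{k=1}^m a_k \le \epsilon A$. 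Combining the two estimates gives $\sum_{k=1}^t a_k \, c_{t-k} \le \epsilon (A + M_c)$ for all sufficiently large $t$, and since $\epsilon$ is arbitrary the limit is $0$.

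The only delicate point — and the step I would be most careful about — is the order of the two limiting operations: the cutoff $m$ must be chosen first, so as to control the tail sum simultaneously for every $t$, and only afterwards may one send $t \to \infty$ to kill the finitely many head terms. Reversing the order fails, since for a fixed $t$ the head sum carries no uniform smallness. Everything else is routine estimation; notably, this is a purely deterministic fact about sequences, so none of the Markov-chain structure or the earlier lemmas is needed in its proof.
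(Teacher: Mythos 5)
Your proof is correct and follows essentially the same route as the paper: split the convolution at a cutoff $m$, bound the tail by $(\max_k c_k)\sum_{k>m} a_k$ using summability of $\{a_k\}$, and kill the finitely many head terms by sending $t\to\infty$ (the paper bounds the head by $(\max_k a_k)\sum_{k=t-m}^{t-1}c_k\to 0$, which is the same idea as your use of $c_j\to 0$). The only difference is presentational --- an $\epsilon$-argument versus the paper's $\limsup$-then-$m\to\infty$ phrasing --- and your remark about the order of the two limiting operations matches the logic of the paper's proof.
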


\begin{proof}
For any $m \leq t$,
$$ \sum_{k=1}^t a_k \, c_{t-k}  = \sum_{k=1}^m a_k \, c_{t-k} + \sum_{k=m+1}^t a_k \, c_{t-k} 
\leq  \big( \max_k a_k \big) \cdot \sum_{k=t-m}^{t-1} c_k + (\max_k c_k) \cdot \sum_{k=m+1}^t a_k.$$
Since $\{a_k\}$ and $\{c_k\}$ are summable by assumption, we have $\max_k a_k < \infty$, $\max_k c_k < \infty$, and $\lim_{t \to \infty} \sum_{k=t-m}^{t-1} c_k = 0$. So if we fix $m$ and let $t$ go to infinity in the preceding inequality, we have
$$ \limsup_{t \to \infty}  \, \sum_{k=1}^t a_k \, c_{t-k} \leq (\max_k c_k) \cdot \sum_{k=m+1}^\infty a_k.$$
Since $\lim_{m \to \infty} \sum_{k=m+1}^\infty a_k = 0$ by the summable assumption, 
by letting $m$ go to infinity in the right-hand side above, we obtain $\lim_{t \to \infty} \sum_{k=1}^t a_k \, c_{t-k}  = 0$.
\end{proof}

\subsection{Properties of the Trace Iterates $\{(\e_t, \F_t)\}$}
In this subsection we state formally the properties of trace iterates which we mentioned in Section~\ref{sec-elstd1}, and we give their proofs. These properties will be used frequently in obtaining some of our main convergence theorems.

The following proposition is the property (i) mentioned in Section~\ref{sec-elstd1}. 
First, let us express the traces $\e_t, \F_t$, by using their definitions [cf.\ Eqs.~(\ref{eq-td3})-(\ref{eq-td1})], as
\begin{align}
  \F_{t} & = \F_0 \cdot \big(\rho_{0} \gamma_{1} \cdots \rho_{t-1} \gamma_t \big) + \sum_{k=1}^t \i(S_k) \cdot \big(\rho_{k} \gamma_{k+1} \cdots \rho_{t-1} \gamma_t \big),  \label{eq-F} \\
  \e_t & = \e_0 \cdot \big(\beta_{1} \cdots \beta_t \big) + \sum_{k=1}^t \M_{k} \cdot \fe(S_k) \cdot \big(\beta_{k+1} \cdots \beta_t \big), \label{eq-e}
\end{align}
where $\beta_k = \rho_{k-1} \gamma_k \lambda_k$ as defined in the previous subsection, and
$$ \M_k =  \lambda_k \, \i(S_k) + ( 1 - \lambda_k ) \, \F_k.$$
Let $\mathcal{F}_{t}=\sigma\big( S_0, A_0, \ldots, S_{t} \big)$ for $t \geq 0$, throughout this subsection.

\begin{prop} \label{prp-bdtrace}
For any given initial $(\e_0, \F_0)$, $\sup_{t \geq 0} \E \big[ \big\| (\e_t, \F_t) \big\| \big] < \infty$.
\end{prop}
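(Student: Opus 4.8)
The plan is to work directly from the closed-form expressions (\ref{eq-F})--(\ref{eq-e}) for $\F_t$ and $\e_t$, exploiting two facts: every summand there is a product of nonnegative quantities, and the relevant geometric series are summable. Indeed, under Assumption~\ref{cond-bpolicy}(i) we have $\sum_{k=0}^\infty (\P\Gm)^k = (I-\P\Gm)^{-1}$ and $\sum_{k=0}^\infty (\P\Gm\Lm)^k = (I-\P\Gm\Lm)^{-1}$, both finite; since the partial sums increase entrywise to these limits, the scalars $\1^\top \big(\sum_{k=0}^t (\P\Gm)^k\big)\1$ and $\1^\top\big(\sum_{k=0}^t (\P\Gm\Lm)^k\big)\1$ are bounded uniformly in $t$. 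Because the bound for $\e_t$ will involve the weights $\M_k$, and hence $\F_k$, I would first bound $\sup_t \E[\F_t]$, then use this to bound $\sup_k \E[\M_k]$, and finally bound $\sup_t \E[\|\e_t\|]$.

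First I would bound $\E[\F_t]$. Writing $\bar\i = \max_{s\in\S}\i(s) < \infty$ (finite since $\S$ is finite), in (\ref{eq-F}) the leading term carries the constant $\F_0$ and each remaining term carries the $\mathcal{F}_k$-measurable nonnegative factor $\i(S_k)\le\bar\i$. Setting $Y_0 = |\F_0|$ and $Y_k = \i(S_k)$ for $k\ge 1$ and applying the summed bound (\ref{eq-lma-a2}) of Lemma~\ref{lma2} with $\C = \max\{|\F_0|,\bar\i\}$ gives $\E[|\F_t|] \le \C\cdot\1^\top\big(\sum_{k=0}^t (\P\Gm)^k\big)\1 \le \C\cdot\1^\top(I-\P\Gm)^{-1}\1$, a bound independent of $t$; hence $\sup_t\E[\F_t]<\infty$.

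Next I would bound $\E[\|\e_t\|]$. From $\M_k=\lambda_k\,\i(S_k)+(1-\lambda_k)\F_k$ with $\lambda_k\in[0,1]$ we get $0\le\M_k\le\bar\i+\F_k$, so the previous step yields $\sup_k\E[\M_k]\le \bar\i+\sup_k\E[\F_k]<\infty$. The key observation is that $\M_k$ is $\mathcal{F}_k$-measurable: $\F_k$ is built from $\F_0$, the ratios $\rho_0,\dots,\rho_{k-1}$, and $\gamma_j,\i(S_j)$ for $j\le k$, all measurable with respect to $\mathcal{F}_k=\sigma(S_0,A_0,\dots,S_k)$. Taking the infinity norm in (\ref{eq-e}), using the triangle inequality and the nonnegativity of $\M_k$ and of the products $\beta_{k+1}\cdots\beta_t$, and writing $\bar\fe=\max_{s\in\S}\|\fe(s)\|<\infty$, I would bound $\|\e_t\|\le \|\e_0\|\,(\beta_1\cdots\beta_t) + \sum_{k=1}^t \M_k\,\bar\fe\,(\beta_{k+1}\cdots\beta_t)$. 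Setting $Y_k=\M_k\|\fe(S_k)\|$ (nonnegative and $\mathcal{F}_k$-measurable, with $\E[Y_k]\le\bar\fe\sup_j\E[\M_j]$) and applying the summed bound (\ref{eq-lma-b2}) of Lemma~\ref{lma2} gives $\E[\|\e_t\|]\le \|\e_0\| + \bar\fe\,\big(\sup_j\E[\M_j]\big)\cdot\1^\top(I-\P\Gm\Lm)^{-1}\1$, again independent of $t$.

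Combining the two bounds via $\E[\|(\e_t,\F_t)\|]\le\E[\|\e_t\|]+\E[\F_t]$ yields the claimed uniform bound. The main obstacle is the apparent circular coupling between the $\e$-trace and the $\F$-trace through the emphasis weight $\M_k$: a naive norm bound on $\e_t$ already requires control of $\F_k$. This is resolved by the ordering above---bounding $\F$ first---together with the $\mathcal{F}_k$-measurability of $\M_k$, which is precisely what licenses the use of Lemma~\ref{lma2} (whose proof conditions on $\mathcal{F}_k$ and integrates out the ``future'' products $\rho_k\gamma_{k+1}\cdots$ and $\beta_{k+1}\cdots$). Everything else reduces to the uniform summability of the geometric series, guaranteed by Assumption~\ref{cond-bpolicy}(i).
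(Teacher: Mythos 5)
Your proposal is correct and follows essentially the same route as the paper's proof: bound $\sup_t\E[\F_t]$ first via the closed form (\ref{eq-F}) and Eq.~(\ref{eq-lma-a2}) of Lemma~\ref{lma2}, then use $\M_k\le \C+\F_k$ and Eq.~(\ref{eq-lma-b2}) applied to the closed form (\ref{eq-e}) to bound $\sup_t\E[\|\e_t\|]$. The only differences are cosmetic bookkeeping of constants; your explicit remark on the $\mathcal{F}_k$-measurability of $\M_k$ is a point the paper uses implicitly.
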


\begin{proof}
Let us calculate $\E \big[ | \F_t | \big]$ and $\E \big[ \| \e_t\| \big]$. Since the number of states is finite, there exists a finite constant $\C > 0$ such that $\C \geq \|(\e_0, \F_0)\|$ and $\C \geq \i(s)$, $\C \geq \| \fe(s)\|$ for all states $s$. 
Using the expression~(\ref{eq-F}) for $\F_t$ and applying Eq.~(\ref{eq-lma-a2}) in Lemma~\ref{lma2} (with $Y_0 = |F_0|$, $Y_k=\i(S_k)$, $k \geq 1$), we have the bound
\begin{align*}
  \E \big[ |\F_t | \big] 
  \leq \C \cdot \1^\top  \left( \sum_{k=0}^t (\P \Gm) ^{k}  \right) \1  \leq \C \cdot \1^\top ( I - \P \Gm) ^{-1} \1.
\end{align*}
Thus $\sup_{t \geq 0} \E \big[ |\F_t| \big] < \infty$. We now calculate $\E \big[ \| \e_t\|\big]$. Using the expression~(\ref{eq-e}) for $\e_t$, and using also the fact $\M_k \leq \C + |\F_k|$, we can bound $\| \e_t\|$ by
$$ \| \e_t\| \leq \C \cdot \big( \beta_1 \cdots \beta_t) + \C \cdot \sum_{k=1}^t \big( \C +  |\F_k | \big) \cdot \big( \beta_{k+1} \beta_{k+2} \cdots \beta_t \big).$$
Using the fact $\sup_{k \geq 0} \E \big[ |\F_k |\big] \leq \C'$ for some finite constant $\C'$ as we just proved, and using also Eq.~(\ref{eq-lma-b2}) in Lemma~\ref{lma2} (with $Y_0 = L$, $Y_k=L(L+|\F_k|), k \geq 1$), we obtain 
$$ \E \big[ \| \e_t\| \big] \leq \C (\C + \C' + 1) \cdot  \1^\top \left( \sum_{k=0}^t (\P \Gm \Lm) ^{k}  \right)  \1 \leq \C (\C + \C' + 1) \cdot  \1^\top ( I - \P \Gm \Lm) ^{-1} \1.$$
Hence $\sup_{t \geq 0} \E \big[\|\e_t\|\big] < \infty$. Since $\big\| (\e_t, \F_t) \big\| \leq \| \e_t \| + |\F_t|$, this shows that 
$$\sup_{t \geq 0} \E \big[ \big\| (\e_t, \F_t) \big\|  \big] \leq  \sup_{t \geq 0} \E \big[ \| \e_t\| \big] + \sup_{t \geq 0} \E \big[ |\F_t|\big] < \infty.$$
The proof is complete. 
\end{proof}

Recall that $\{Z_t\}$ with $Z_t = (S_t, A_t, \e_t, \F_t)$ denotes the Markov chain on the joint space $\S \times \A \times \re^{n+1}$ of states, actions and traces, and it is a weak Feller Markov chain (cf.\ Footnote~\ref{footnote-weakFeller}).
As explained in Section~\ref{sec-elstd1}, since $\S$ and $\A$ are finite, the preceding proposition implies that $\{Z_t\}$ is bounded in probability and hence, by its weak Feller property, has at least one invariant probability measure. We will need the following result (which is the property (ii) in Section~\ref{sec-elstd1}) to prove that $\{Z_t\}$ has a \emph{unique} invariant probability measure.

Let $({\hat \e}_t, {\hat \F}_t)$, $t \geq 1$, be defined by the same recursion (\ref{eq-td3})-(\ref{eq-td1}) that defines $(\e_t, \F_t)$, using the same state and action random variables, but with a different initial condition $(\hat \e_0, \hat{\F}_0)$. We write a zero vector in any Euclidean space as $\0$. 

\begin{prop} \label{prp-2}
For any two given initial conditions $(\e_0, \F_0)$ and $(\hat{\e}_0, \hat{\F}_0)$, 
$$ \F_t - \hat{\F}_t \asto 0, \qquad  \e_t - \hat{\e}_t  \asto \0.$$
\end{prop}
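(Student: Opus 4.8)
The plan is to handle the two assertions in sequence, disposing of $\F_t - \hat{\F}_t$ first and then feeding that estimate into the analysis of $\e_t - \hat{\e}_t$.

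First I would subtract the two follow-on recursions (\ref{eq-td3}). Since both are driven by the same state--action variables, the inhomogeneous term $\i(S_t)$ cancels, leaving $\F_t - \hat{\F}_t = \rho_{t-1}\gamma_t\,(\F_{t-1} - \hat{\F}_{t-1})$, so that
$$\F_t - \hat{\F}_t = (\F_0 - \hat{\F}_0)\prod_{k=1}^t \rho_{k-1}\gamma_k .$$
By the last assertion of Lemma~\ref{lma1}, $\prod_{k=1}^t \rho_{k-1}\gamma_k \asto 0$, which gives $\F_t - \hat{\F}_t \asto 0$ at once.

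Next, for the traces, I would subtract the two recursions (\ref{eq-td1}) and use $\M_t - \hat{\M}_t = (1-\lambda_t)(\F_t - \hat{\F}_t)$ from (\ref{eq-td2}). Writing $\delta_t := \e_t - \hat{\e}_t$ and recalling $\beta_t = \rho_{t-1}\gamma_t\lambda_t$, this yields the linear recursion
$$\delta_t = \beta_t\,\delta_{t-1} + (1-\lambda_t)(\F_t - \hat{\F}_t)\,\fe(S_t).$$
Unrolling and taking norms, using $0 \le 1-\lambda_t \le 1$, the boundedness $\|\fe(S_k)\| \le \C$, the inequality $\beta_j \le \rho_{j-1}\gamma_j$, and the closed form for $\F_k - \hat{\F}_k$ above, the partial products $\prod_{i\le k}\rho_{i-1}\gamma_i$ and $\prod_{j>k}\beta_j$ in each summand recombine into a single full product. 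With $\Delta_t := \prod_{i=1}^t \rho_{i-1}\gamma_i$ and $D := |\F_0 - \hat{\F}_0|$, this gives the deterministic-coefficient bound
$$\|\delta_t\| \le \big(\|\delta_0\| + \C D\,t\big)\,\Delta_t .$$

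The crux, and the step I expect to be the main obstacle, is then to show that the polynomial factor $t$ is harmless, i.e.\ that $t\,\Delta_t \asto 0$. The supermartingale argument behind Lemma~\ref{lma1} only delivers $\Delta_t \asto 0$, and even summability of $\{\Delta_t\}$ would not suffice to kill a factor of $t$; what is needed is a quantitative decay rate. To obtain it I would use the bound $\E[\Delta_t] \le \1^\top(\P\Gm)^t\1$ (exactly as in the proof of Lemma~\ref{lma1}); since $\P\Gm$ is nonnegative and substochastic with $(I-\P\Gm)^{-1}$ existing, its spectral radius is strictly below $1$ by Assumption~\ref{cond-bpolicy}(i), so $\E[\Delta_t]$ decays geometrically. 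Markov's inequality then gives $\mathbb{P}(t^2\Delta_t > 1) \le t^2\,\E[\Delta_t]$, which is summable in $t$, so by Borel--Cantelli $\Delta_t \le t^{-2}$ for all large $t$ almost surely, whence $t\,\Delta_t \asto 0$. Combining this with the displayed bound yields $\delta_t = \e_t - \hat{\e}_t \asto \0$, completing the proof.
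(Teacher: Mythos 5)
Your proof is correct, and while the first half (the $\F$-difference) coincides with the paper's, your treatment of $\e_t - \hat{\e}_t$ takes a genuinely different and more elementary route. The paper first reduces to the case of equal initial traces $\e_0$, then views $\|\e_t - \tilde{\e}_t\|$ as an almost-supermartingale (via the conditional bound $\E[\|\e_t-\tilde\e_t\|\mid\mathcal{F}_{t-1}]\le\|\e_{t-1}-\tilde\e_{t-1}\|+Y_{t-1}$ with $\sum_t Y_t<\infty$ a.s.), obtains a.s.\ convergence to some limit from a Neveu-type convergence theorem, and only then identifies the limit as $\0$ through Fatou's lemma and an $L^1$ estimate that rests on Lemma~\ref{lma3} (vanishing convolutions of summable sequences). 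You instead unroll the recursion to get the single pathwise bound $\|\delta_t\|\le(\|\delta_0\|+\C D\,t)\Delta_t$ — the recombination $\prod_{i\le k}\rho_{i-1}\gamma_i\cdot\prod_{j>k}\beta_j\le\Delta_t$ is the right observation — and then defeat the polynomial factor by a quantitative rate: since $\P\Gm$ is nonnegative and substochastic and $I-\P\Gm$ is invertible, Perron--Frobenius forces $\rho(\P\Gm)<1$, so $\E[\Delta_t]\le\1^\top(\P\Gm)^t\1$ decays geometrically, and Markov plus Borel--Cantelli give $\Delta_t\le t^{-2}$ eventually a.s. This is sound. What your version buys is a shorter, rate-yielding argument with no appeal to supermartingale convergence, Fatou, or Lemma~\ref{lma3}, and it handles both initial-condition discrepancies at once rather than in two stages; what it costs is an explicit reliance on the spectral gap of $\P\Gm$ (hence on finiteness of the state space), whereas the paper's argument uses only the summability $\sum_t\1^\top(\P\Gm)^t\1=\1^\top(I-\P\Gm)^{-1}\1<\infty$ and stays within the supermartingale toolkit it reuses throughout. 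Under the paper's standing assumptions the two are equally rigorous.
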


\begin{proof}
Using the expression (\ref{eq-F}) for $\F_t$ and $\hat{\F}_t$, we have 
$\F_t - \hat{\F}_t = ( \F_0 - \hat{\F}_0) \cdot \prod_{k=1}^t (\rho_{k-1} \gamma_k).$
Since $\prod_{k=1}^t (\rho_{k-1} \gamma_k) \asto 0$ by Lemma~\ref{lma1}, it follows that $\F_t - \hat{\F}_t \asto 0$. 

The proof of Lemma~\ref{lma1} also shows
\begin{equation} \label{eq-prf2a}
\E \big[ \big| \F_t - \hat{\F}_t \big| \big] \leq \big| \F_0 - \hat{\F}_0 \big| \cdot \1^\top (\P \Gm) ^t \1.
\end{equation}
We will need this inequality below.

We now prove $ \e_t - \hat{\e}_t \asto \0$. To simplify the derivation, we first observe that if $\hat\e_0\not=\e_0$ but $\hat{\F}_0=\F_0$, 
then $\hat\F_t = \F_t$ for all $t$, so the expression (\ref{eq-e}) for $\e_t$ and $\hat{\e}_t$ gives $\| \e_t - \hat{\e}_t\| =  \| \e_0 - \hat{\e}_0\| \cdot \prod_{k=1}^t \beta_k$. Since $\prod_{k=1}^t \beta_k \asto 0$ by Lemma~\ref{lma1}, it follows immediately that in this case $\| \e_t - \hat{\e}_t\| \asto 0$.

Thus, for the general case, we can focus on the difference between $\e_t$ and $\hat{\e}_t$ that is due to the difference between the initial $\F_0$ and $\hat{\F}_0$. In particular, define another sequence of iterates $(\tilde{\e}_t, \tilde {\F}_t)$ using the same recursion (\ref{eq-td3})-(\ref{eq-td1}) but with the initial condition $(\tilde{\e}_0, \tilde {\F}_0) = (\e_0, \hat{\F}_0)$. 
Since $\big\| \e_t - \hat{\e}_t \big\| \leq  \big\| \e_t - \tilde{\e}_t \big\| + \big\| \tilde{\e}_t - \hat{\e}_t \big\|$ and $\big\| \tilde{\e}_t - \hat{\e}_t \big\| \asto 0$  by what we just proved, to show $ \e_t - \hat{\e}_t \asto \0$, it is sufficient to prove $\| \e_t - \tilde{\e}_t\| \asto 0$. 
 
Since $\tilde {\F}_0 = \hat{\F}_0$, the sequence $\{\tilde{\F}_t\}$ coincides with $\{\hat{F}_t\}$. Then by the definition of $\e_t$ and $\tilde{\e}_t$,
$$ \e_t - \tilde{\e}_t = \beta_t \, \big( \e_{t-1} - \tilde{\e}_{t-1} \big)  +   (1 - \lambda_t ) \, \big( \F_t - \hat{\F}_t \big) \cdot \fe(S_t).$$
Since $0 \leq \E \big[ \beta_t \mid \mathcal{F}_{t-1}  \big] \leq 1$ (Lemma~\ref{lma1}) and $0 \leq 1 - \lambda_t \leq 1$, it follows that
\begin{equation} \label{eq-prf1}
  \E \big[ \big\| \e_t - \tilde{\e}_t \big\|  \mid \mathcal{F}_{t-1} \big] \leq \big\| \e_{t-1} - \tilde{\e}_{t-1} \big\|  + Y_{t-1}, \quad 
  \text{where} \ \ \ Y_{t-1} =  \E \big[  \big| \F_t - \hat{\F}_t \big| \cdot \| \fe(S_t) \| \mid \mathcal{F}_{t-1} \big].
\end{equation}  
Let us show $\sum_{t=0}^\infty Y_t < \infty$ a.s. In view of Eq.~(\ref{eq-prf1}), this will then imply, by a convergence theorem in \cite[Ex.\ II-4, p.\ 33-34]{Nev75} for nonnegative random processes (which is a consequence of the nonnegative supermartingale convergence theorem), that 
$\big\| \e_t - \tilde{\e}_t \big\|$ converges a.s.\ to a finite limit.

To prove $\sum_{t=0}^\infty Y_t < \infty$ a.s., it is sufficient to show $ \E \big[ \sum_{t=0}^\infty Y_t \big] < \infty.$
Let $\C = \max_{s \in \mathcal{S}} \| \fe(s) \|$. 
By Eq.~(\ref{eq-prf2a}), for each $t$,
$$  \E \big[  Y_t \big] = \E \big[ \big| \F_{t+1} - \hat{\F}_{t+1} \big| \cdot \| \fe(S_{t+1}) \| \, \big] 
\leq \C \big| \F_0 - \hat{\F}_0 \big| \cdot \1^\top (\P \Gm) ^{t+1} \1.$$
Since $ \sum_{t=0}^\infty (\1^\top (\P \Gm) ^{t+1} \1) \leq \1^\top ( I - \P \Gm) ^{-1} \1 < \infty$,
we obtain
$  \E \big[ \sum_{t=0}^\infty Y_t \big] = \sum_{t=0}^\infty \E \big[  Y_t \big] < \infty$. Hence $\sum_{t=0}^\infty Y_t < \infty$ a.s., and
as discussed earlier, this implies that 
$$\big\| \e_t - \tilde{\e}_t \big\| \asto \Delta_\infty$$ 
for a nonnegative real-valued random variable $\Delta_\infty$.

What remains to be proved is $\Delta_\infty = 0$ a.s. By Fatou's lemma \cite[Theorem 4.3.3]{Dud02}, 
\begin{equation} \label{eq-prf2}
   \E [ \Delta_\infty ] \leq \liminf_{t \to \infty} \, \E \big[ \big\| \e_t - \tilde{\e}_t \big\| \big].
\end{equation}   
We show that the right-hand side equals $0$. By a direct calculation (using Eq.~(\ref{eq-e}) and the fact $\tilde{\e}_0 = \e_0$), we can write
$$ \e_t - \tilde{\e}_t = \sum_{k=1}^t \fe(S_k) \cdot (\F_k - \hat{\F}_k) \cdot (1 - \lambda_k)  \cdot \big(  \beta_{k+1} \cdots \beta_t \big).$$
For each $k \geq 1$, using Lemma~\ref{lma2} and Eq.~(\ref{eq-prf2a}), we have
\begin{align*}
\E \big[ \big|\F_k - \hat{\F}_k \big| \cdot (1 - \lambda_k) \cdot \big( \beta_{k+1} \cdots \beta_t \big) \big] & \leq \E \big[ \big| \F_k - \hat{\F}_k \big| \big] \cdot \big(\1^\top  (\P \Gm \Lm)^{t-k}  \1 \big) \\
& \leq \big| \F_0 - \hat{\F}_0 \big| \cdot \big(\1^\top (\P \Gm) ^k \1 \big) \cdot \big(\1^\top (\P \Gm \Lm) ^{t-k}  \1 \big).
\end{align*}
From the preceding two relations, it follows that
\begin{equation}  \label{eq-prf2b}
\E \big[ \big\| \e_t - \hat{\e}_t \big\| \big] \leq \C \big| \F_0 - \hat{\F}_0 \big| \cdot \sum_{k=1}^t  \big(\1^\top (\P \Gm) ^k \1 \big) \cdot \big(\1^\top (\P \Gm \Lm) ^{t-k} \P  \1 \big).
\end{equation}
From Lemma~\ref{lma3} with $\{a_k\}$ and $\{c_k\}$ defined as $a_k = \1^\top (\P \Gm) ^k \1$ and $c_k =\1^\top (\P \Gm \Lm) ^{k} \1$ for $k \geq 1$, we have
$$ \lim_{t \to \infty}  \, \sum_{k=1}^t  \big(\1^\top (\P \Gm) ^k \1 \big) \cdot \big(\1^\top (\P \Gm \Lm) ^{t-k}  \1 \big) =  0.$$ 
Combining this with Eq.~(\ref{eq-prf2b}) gives $\liminf_{t \to \infty} \, \E \big[ \big\| \e_t - \tilde{\e}_t \big\| \big] = 0$, and consequently, $\E [ \Delta_\infty ] = 0$ by Eq.~(\ref{eq-prf2}). This implies $\Delta_\infty = 0$ a.s., i.e., $\big\| \e_t - \tilde{\e}_t \big\| \asto 0$.
\end{proof}

The next proposition is the property (iii) mentioned in Section~\ref{sec-elstd1}, which concerns approximating the trace iterates $(\e_t, \F_t)$ by truncated traces that depend on a fixed number of the most recent states and actions only. We will use this proposition subsequently to prove Theorem~\ref{thm-l1conv}: it allows us to work with simple finite-space Markov chains, instead of working with the infinite-space Markov chain $\{Z_t\}$ directly. 

For each integer $K \geq 1$,
we define the truncated traces $Y_{t,K}=(\tilde{\e}_{t,K}, \tilde{\F}_{t,K})$ as follows: 
$$Y_{t,K} = (\e_t, \F_t) \quad  \text{for} \ \  t \leq K,$$
and for $t \geq K+1$,
\begin{align}
    \tilde{\F}_{t,K} & = \sum_{k=t-K}^t \i(S_k) \cdot \big(\rho_{k} \gamma_{k+1} \cdots \rho_{t-1} \gamma_t \big), \label{eq-tF} \\
    \tilde{\M}_{t,K} & = \, \lambda_t \, \i(S_t) + ( 1 - \lambda_t) \tilde{\F}_{t,K}, \label{eq-tM} \\
    \tilde{\e}_{t,K} & = \sum_{k=t-K}^t \tilde{\M}_{k,K} \cdot \fe(S_k) \cdot \big(\beta_{k+1} \cdots \beta_t \big). \label{eq-te}
\end{align}

Denote the original traces by $Y_t = (\e_t, \F_t)$; recall that they can be expressed as in Eqs.~(\ref{eq-F})-(\ref{eq-e}).
We have the following result, in which the notation ``$\C_K \downarrow 0$'' means that $\C_K$ decreases monotonically to $0$ as $K \to \infty$, and in which $\Z_0=(S_0,A_0,\e_0,\F_0)$ as we recall:

{\samepage
\begin{prop} \label{prp-3} \hfill
\begin{enumerate}
\item[{\rm (i)}] For any given initial $Y_0=(\e_0, \F_0)$, there exist constants $\C_K, K \geq 1$, with $\C_K \downarrow 0$, such that 
$$ \E \left[ \big\| Y_{t} - Y_{t,K}  \big\| \right] \leq \C_K, \qquad \forall \, t \geq 0.$$
\item[{\rm (ii)}] There exist constants $\C_K, K \geq 1$, independent of the given initial value of $\Z_0$, such that $\C_K \downarrow 0$ and
$$ \E \left[ \big\| Y_{t,K'} - Y_{t, K} \big\| \right]  \leq \C_K, \qquad \forall \, K' \geq K, \ t > 2 K'.$$
\end{enumerate}
\end{prop}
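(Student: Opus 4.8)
The plan is to prove both parts by the same bookkeeping scheme: write each difference as a sum of terms indexed by $k$, each carrying a product $\rho_k\gamma_{k+1}\cdots\rho_{t-1}\gamma_t$ or $\beta_{k+1}\cdots\beta_t$, and bound its expectation term-by-term using Lemma~\ref{lma2}, whose right-hand sides involve $\1^\top(\P\Gm)^{t-k}\1$ and $\1^\top(\P\Gm\Lm)^{t-k}\1$. Since both $(I-\P\Gm)^{-1}$ and $(I-\P\Gm\Lm)^{-1}$ exist, the tails $\sum_{j>K}\1^\top(\P\Gm)^j\1$ and $\sum_{j>K}\1^\top(\P\Gm\Lm)^j\1$ are finite and decrease to $0$ as $K\to\infty$; this is the source of the claimed decay of $\C_K$, and monotonicity is automatic because every such tail is decreasing in $K$. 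Throughout, let $\C$ be a universal constant bounding $\|(\e_0,\F_0)\|$, $\i(s)$ and $\|\fe(s)\|$ over all states, and let $\C'=\sup_k\E[\F_k]<\infty$ by Prop.~\ref{prp-bdtrace}.

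For part (i), I first handle the follow-on trace. For $t\geq K+1$, comparing expression~(\ref{eq-F}) for $\F_t$ with the truncated~(\ref{eq-tF}) shows $\F_t-\tilde\F_{t,K}$ is exactly the sum of the dropped ``old'' terms with $k\leq t-K-1$, together with the $\F_0$ term; bounding each by Eq.~(\ref{eq-lma-a1}) and reindexing $j=t-k\geq K+1$ gives $\E[|\F_t-\tilde\F_{t,K}|]\leq\C\sum_{j>K}\1^\top(\P\Gm)^j\1$, uniformly in $t$. The eligibility trace is the delicate part: $\e_t-\tilde\e_{t,K}$ splits into (a) the dropped old terms $k\leq t-K-1$ (with the $\e_0$ term), and (b) the weight error $(\M_k-\tilde\M_{k,K})\fe(S_k)(\beta_{k+1}\cdots\beta_t)$ on the retained terms $t-K\leq k\leq t$. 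For (a) I use $\M_k\leq\C+\F_k$ with $\sup_k\E[\F_k]\leq\C'$ and Eq.~(\ref{eq-lma-b1}) to get a bound $\leq\C(\C+\C'+1)\sum_{j>K}\1^\top(\P\Gm\Lm)^j\1$, exactly as in the proof of Prop.~\ref{prp-bdtrace}. For (b) the crucial identity $\M_k-\tilde\M_{k,K}=(1-\lambda_k)(\F_k-\tilde\F_{k,K})$ lets me reuse the \emph{uniform} bound on $\E[|\F_k-\tilde\F_{k,K}|]$ just obtained, and Eq.~(\ref{eq-lma-b1}) then bounds (b) by $\C\cdot\big(\sum_{j>K}\1^\top(\P\Gm)^j\1\big)\cdot\1^\top(I-\P\Gm\Lm)^{-1}\1$. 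Adding the three contributions yields the constant $\C_K\downarrow0$.

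For part (ii) the scheme is identical, but now both $Y_{t,K'}$ and $Y_{t,K}$ are truncated, so the $\e_0,\F_0$ terms are already absent. The role of the hypothesis $t>2K'$ is to guarantee that every retained term $k\geq t-K$ satisfies $k\geq K'+1$, so that its truncated follow-on trace $\tilde\F_{k,K'}$ is built from a full window $[k-K',k]$ with $k-K'\geq1$ and the decomposition below is clean. The difference $\tilde\F_{t,K'}-\tilde\F_{t,K}$ consists of the extra terms $t-K'\leq k\leq t-K-1$, bounded via Eq.~(\ref{eq-lma-a1}) by $\C\sum_{K<j\leq K'}\1^\top(\P\Gm)^j\1\leq\C\sum_{j>K}\1^\top(\P\Gm)^j\1$. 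Crucially, these bounds involve only $\C$ (through $\E[Y_k]\leq\C$ in Lemma~\ref{lma2}) and the model matrices $\1^\top(\P\Gm)^j\1$, never the law of $S_0$, so they are independent of $\Z_0$. For the eligibility trace I split again into extra old terms (using $\tilde\M_{k,K'}\leq\C+\tilde\F_{k,K'}$ and $\E[\tilde\F_{k,K'}]\leq\E[\F_k]\leq\C'$, since $\tilde\F_{k,K'}\leq\F_k$ by dropping nonnegative terms) and weight errors $\tilde\M_{k,K'}-\tilde\M_{k,K}=(1-\lambda_k)(\tilde\F_{k,K'}-\tilde\F_{k,K})$ on the common terms; both are controlled by the same tail sums as in part (i), giving the required $K'$-independent $\C_K\downarrow0$.

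The main obstacle is the coupling between the two traces through the emphasis weight $\M_k=\lambda_k\i(S_k)+(1-\lambda_k)\F_k$: truncating $\F$ perturbs $\M_k$ at \emph{every} retained time $k$, so the eligibility-trace error is not merely the tail that gets dropped but also an \emph{accumulated} weight error spread over the whole retained window. Controlling it cleanly hinges on first establishing the uniform-in-$k$ bound on $\E[|\F_k-\tilde\F_{k,K}|]$ (resp.\ $\E[|\tilde\F_{k,K'}-\tilde\F_{k,K}|]$) and then propagating it through the $\e$-recursion via Lemma~\ref{lma2}; convergence of $\sum_j\1^\top(\P\Gm\Lm)^j\1$ is exactly what keeps this accumulated error finite and makes it vanish as $K\to\infty$.
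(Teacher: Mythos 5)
Your proposal is correct and follows essentially the same route as the paper's proof: the same decomposition of $\F_t-\tilde\F_{t,K}$ into the dropped old terms, the same splitting of $\e_t-\tilde\e_{t,K}$ into dropped terms plus the accumulated weight error via the identity $\M_k-\tilde\M_{k,K}=(1-\lambda_k)(\F_k-\tilde\F_{k,K})$, and the same term-by-term bounds from Lemma~\ref{lma2} with the tails $\sum_{j>K}\1^\top(\P\Gm)^j\1$ and $\sum_{j>K}\1^\top(\P\Gm\Lm)^j\1$ supplying $\C_K\downarrow 0$. The only cosmetic difference is in part (ii), where you bound $\E[\tilde\F_{k,K'}]$ by $\E[\F_k]$ (which drags in $\F_0$) rather than bounding it directly from its own finite-window expression; since $\tilde\F_{k,K'}$ does not involve $\F_0$ for $k>K'$, the direct bound gives the required independence from $\Z_0$ immediately, exactly as the paper does.
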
 
}

\begin{proof}
Let $\C = \max \{ \F_0, \, \max_{s \in \mathcal{S}} \i(s) \}$.
We first calculate $\F_t - \tilde{\F}_{t,K}$. By definition $\F_t - \tilde{\F}_{t,K} = 0$ for $t \leq K$. For $t \geq K+1$, using the expressions (\ref{eq-F}), (\ref{eq-tF}) of $\F_t$ and $\tilde{\F}_{t,K}$, we have
$$ \F_t - \tilde{\F}_{t,K} = \F_0 \cdot \big(\rho_{0} \gamma_{1} \cdots \rho_{t-1} \gamma_t \big) + \sum_{k=1}^{t-K-1} \i(S_{k}) \cdot \big(\rho_{k} \gamma_{k+1} \cdots \rho_{t-1} \gamma_t \big),$$
from which it follows by applying Eq.~(\ref{eq-lma-a1}) in Lemma~\ref{lma2} that
\begin{equation}
  \E \left[ \big|  \F_t - \tilde{\F}_{t,K}  \big| \right] \leq \C \cdot \1^\top \left(\sum_{k=K+1}^{t}  (\P \Gm) ^{k} \right) \1 \leq \C \cdot \1^\top \left(\sum_{k=K+1}^{\infty}  (\P \Gm) ^{k} \right) \1 \overset{def}{=} \C_K^{(1)}. \label{eq-prf3}
\end{equation}  

Similarly we bound $\e_t -  \tilde{\e}_{t,K}$. By definition $\e_t = \tilde{\e}_{t,K}$ for $t \leq K$. For $t \geq K+1$, using the expressions (\ref{eq-e}), (\ref{eq-te}) of $\e_t$ and $\tilde{\e}_{t,K}$, and using also the expressions (\ref{eq-td2}), (\ref{eq-tM}) of $\M_t$ and $\tilde{M}_{t,K}$, we have
\begin{align*}
  \e_t - \tilde{\e}_{t,K} & = \e_0 \cdot \big(\beta_{1} \cdots \beta_t \big) + \sum_{k=1}^{t-K-1} M_k \cdot \fe(S_k) \cdot \big( \beta_{k+1} \cdots \beta_t \big)  \\
  & \ \ \ \, + \sum_{k=t-K}^t \fe(S_k) \cdot \big( \F_k - \tilde{\F}_{k,K} \big) \cdot (1 - \lambda_k) \cdot \big(  \beta_{k+1} \cdots \beta_t \big).
\end{align*}
By Prop.~\ref{prp-bdtrace}, $\sup_{k \geq 0} \E [ \M_k ] < \infty$, so we can find a constant $\C' < \infty$ that is greater than $ \| \e_0\|$, $\max_{s \in \S} \| \fe(s) \|$  and $\big(\max_{s \in \S} \| \fe(s) \| \big) \cdot  \sup_{k \geq 0} \E [ \M_k ]$.
Then applying Eq.~(\ref{eq-lma-b1}) in Lemma~\ref{lma2}, and using also Eq.~(\ref{eq-prf3}), we obtain
\begin{align*}
   \E \left[ \big\|  \e_t - \tilde{\e}_{t,K} \big\| \right] & \leq \C' \cdot \1^\top \left( \sum_{k=0}^{t-K-1} (\P \Gm \Lm) ^{t-k}  \right)  \1 + \C' \cdot \sum_{k=t-K}^t \E \big[ \big| \F_k - \tilde{\F}_{k,K} \big| \big] \cdot \big( \1^\top (\P \Gm \Lm) ^{t-k} \1 \big) \\
   & \leq \C' \cdot  \1^\top \left( \sum_{k=K+1}^{t} (\P \Gm \Lm) ^{k}  \right)  \1  + \C' \cdot \C_K^{(1)} \cdot \left( \1^\top \sum_{k=0}^{K} (\P \Gm \Lm) ^{k} \1 \right) \\
   & \leq \C' \cdot  \1^\top \left( \sum_{k=K+1}^{\infty} (\P \Gm \Lm) ^{k}  \right)  \1  + \C' \cdot \C_K^{(1)} \cdot \left( \1^\top ( I - \P \Gm \Lm) ^{-1} \1 \right) \overset{def}{=} \C_K^{(2)}. 
\end{align*}

Now let $\C_K = \C_K^{(1)} + \C_K^{(2)}$. From the expressions of $\C_K^{(1)}$ and $\C_K^{(2)}$ given above, clearly, $\C_K \downarrow 0$ as $K \to \infty$.
Then, in view of the relation $\big\| Y_t - Y_{t,K} \big\| \leq  \big|  \F_t - \tilde{\F}_{t,K}  \big| + \big\|  \e_t - \tilde{\e}_{t,K} \big\|$, we obtain the desired bound
$ \E \left[ \big\| Y_{t} - Y_{t,K}  \big\| \right]  \leq  \E \left[ \big|  \F_t - \tilde{\F}_{t,K}  \big| \right] +  \E \left[ \big\|  \e_t - \tilde{\e}_{t,K} \big\| \right]  \leq \C_K$. 
This proves part (i).

Part (ii) is proved similarly. By the definition of the truncated traces, for $t > 2 K' \geq 2 K$, 
$$      \tilde{\F}_{t,K'} - \tilde{\F}_{t,K}  = \sum_{k=t-K'}^{t-K-1} \i(S_{k}) \cdot \big(\rho_{k} \gamma_{k+1} \cdots \rho_{t-1} \gamma_t \big),$$
and
\begin{align*}
   \tilde{\e}_{t,K'} - \tilde{\e}_{t,K} & = \sum_{k=t-K'}^{t-K-1} \tilde{M}_{k, K'} \cdot \fe(S_k) \cdot \big( \beta_{k+1} \cdots \beta_t \big)  \\
    & \quad + \sum_{k=t-K}^t \fe(S_k) \cdot \big( \tilde{\F}_{k,K'} - \tilde{\F}_{k,K} \big) \cdot (1 - \lambda_k) \cdot \big(  \beta_{k+1} \cdots \beta_t \big).
\end{align*}    
We then apply the same calculation as in the proof of part (i). When $t > 2 K'$, the truncated traces do not depend on the initial condition $(\e_0,\F_0)$.
Since the state and action spaces are finite, we can set the constants $\C, \C'$ to be independent of the initial condition of $\Z_0$. Part (ii) then follows.
\end{proof}

\begin{rem}[On the behavior of trace iterates] \label{rmk-behavior-ite}\rm
From the properties of $\{(\e_t, \F_t)\}$ given above and the ergodicity of the Markov chain $\{(S_t, A_t, \e_t, \F_t)\}$ shown in Theorem~\ref{thm-erg}, we see that these trace iterates are well-behaved. On the other hand, like in regular off-policy algorithms, these iterates can be unbounded almost surely and their variances can grow to infinity with time. There are no contradictions here. To illustrate this point, let us consider a simple example with just $1$ state and $2$ actions, $\S=\{1\}, \A=\{a_1,a_2\}$, where all actions result in a self-transition at state $1$. Let $\pi(a_1\!\mid 1) =1$ for the target policy $\pi$, and let $\pi^o(a_1\!\mid 1)= q < 1$ for the behavior policy $\pi^o$. Let the discount factor be a constant $\gamma < 1$.  Then for all $t$, 
$$  \E [ \, \gamma_t^2 \rho_{t-1}^2 \mid \mathcal{F}_{t-1} \, ] = \gamma^2/q.$$
Suppose $\gamma^2/q > 1$. Then even with $\i(1)=0$, if $\F_0 > 0$, the definition $\F_t = \gamma_t \rho_{t-1} \F_{t-1}$ implies that
$$ \E [ \, \F_t^2 \, ] = \E \big[ \, \E [ \, \gamma_t^2 \rho_{t-1}^2 \mid \mathcal{F}_{t-1} \, ] \cdot \F_{t-1}^2 \big] = (\gamma^2/q )^t \cdot \F_0^2 \to \infty, $$
yet since $\i(1) = 0$, $\{\F_t\}$ is also a supermartingale converging to $0$ a.s.\ (cf.~the proof of Lemma~\ref{lma1}).
For the case $\i(1)>0$, again $\E [ \, \F_t^2 \, ] \to \infty$ if $\gamma^2/q > 1$, and by \cite[Prop.\ 3.1]{Yu-siam-lstd} the sequence $\{\F_t\}$ is almost surely unbounded if $\gamma/q > 1$, yet $\{\F_t\}$ is bounded in probability in the sense described by Prop.~\ref{prp-bdtrace}.

As mentioned earlier in Remark~\ref{rmk-var}, it can be desirable to restrict the behavior policy so that the variances of the trace iterates do not grow to infinity. In the simple example above, this can be easily arranged. In the general case, however, if the state-dependent discount factor $\gamma(\cdot)$ can take the value $1$ for some states, then without knowledge of the MDP model, to sufficiently restrict the behavior policy seems to be a difficult task.
\end{rem}

\subsection{Proof of Theorem~\ref{thm-l1conv}} \label{appsec-prf-thml1}

For convenience, we restate Theorem~\ref{thm-l1conv} here. Recall that the theorem concerns the recursion
$$  \G_{t+1} = (1 - \alpha_t) \, \G_t + \alpha_t \, h(Y_t, S_t, A_t, S_{t+1}),$$
where $Y_t = (\e_t, \F_t)$, and the function $h$ is Lipschitz continuous in $y$: for some constant $L_h$,
$$  \big\| h(y, s, a, s') - h(\hat y, s, a, s') \big\| \leq L_h  \| y - \hat y \|, \quad \forall \, y, \hat y \in \re^{n+1}, \ \forall \,   (s, a, s') \in \mathcal{S} \times \mathcal{A} \times \mathcal{S}.$$

\thmlone*

\begin{proof}
The proof proceeds in three steps:

\noindent {\bf (i)} For each $K \geq 1$, we consider the truncated traces $Y_{t,K} = (\tilde{\e}_{t,K}, \tilde{\F}_{t,K})$, $t=0, 1, \ldots$, defined by Eqs.~(\ref{eq-tF})-(\ref{eq-te}). Correspondingly, we define iterates $\tilde{G}_{0,K}=\G_0$ and
$$ \tilde{\G}_{t+1,K} = (1 - \alpha_t) \, \tilde{\G}_{t,K} + \alpha_t \, h(Y_{t,K}, S_t, A_t, S_{t+1}).$$
For each $t$, $Y_{t,K}$ is a function of $(S_{t- 2 K}, A_{t- 2 K}, \ldots, S_t)$, so $h(Y_{t,K}, S_t, A_t, S_{t+1})$ can be viewed as a function of $X_t = (S_{t- 2 K}, A_{t-2 K}, \ldots, S_{t+1})$, where $\{X_t\}$ is a finite state Markov chain with a single recurrent class by Assumption~\ref{cond-bpolicy}(ii). Then, with $\E_0$ denoting the expectation under the stationary distribution of the Markov chain $\{ (S_t, A_t)\}$, we have, by a result from stochastic approximation theory \cite[Chap.\ 6, Theorem 7 and Cor.\ 8]{Bor08}, that under Assumption~\ref{cond-stepsize} on the stepsizes,  
\begin{equation} \label{eq-prf4a}
    \tilde{\G}_{t,K} \asto \G^*_K, \qquad \text{where} \ \ \G^*_K = \E_0 \big[ \, h (Y_{k,K}, S_k, A_k, S_{k+1})  \,\big] \ \ \forall \, k > 2 K.
\end{equation} 
Clearly, the vector $\G^*_K$ does not depend on the initial condition $(Y_0, \G_0)$ and the stepsizes $\{\alpha_t\}$. Since for all $t$, $\| \tilde{\G}_{t,K} \| \leq \C$ for some constant $\C < \infty$, we also have by the bounded convergence theorem 
\begin{equation} \label{eq-prf4b}
     \lim_{t \to \infty} \E \big[ \big\| \tilde{\G}_{t,K} - \G^*_K \big\| \big] = 0.
\end{equation}

\noindent {\bf (ii)} We show that as $K \to \infty$, $G^*_K$ converges to some vector $G^*$. For any $K' > K$, using the Lipschitz property of $h$ and Prop.~\ref{prp-3}(ii), we have that for $k > 2 K'$,
\begin{align*}
    \big\|  \G^*_{K'}  - \G^*_K \big\| & =  \left\| \E_0 \big[ h \big(Y_{k,K'}, S_k, A_k, S_{k+1} \big)  - h \big(Y_{k,K}, S_k, A_k, S_{k+1} \big) \big]  \right\| \\
      & \leq L_h \, \E_0 \big[ \big\| Y_{k,K'} - Y_{k,K} \big\| \big] \leq L_h \, \C_K,
\end{align*}
where $\C_K$ is some constant with $\C_K \downarrow 0$ as $K \to \infty$. This shows that $\{G^*_K\}$ is a Cauchy sequence and hence converges to some $\G^*$.

\noindent {\bf (iii)} We establish the theorem by bounding the differences between $\G_t$ and $\tilde{\G}_{t,K}$ for an increasing $K$. For each $K$,
$$ \limsup_{ t \to \infty} \E \big[ \big\| \G_t - \G^* \big\| \big] \leq \limsup_{t \to \infty} \E \big[ \big\| \G_t - \tilde{\G}_{t,K} \big\| \big]  
+ \limsup_{t \to \infty} \E \big[ \big\|  \tilde{\G}_{t,K}  - \G_K^*\big\| \big]   + \big\|\G_K^* - \G^* \big\|.$$
In the right-hand side, the second term equals $0$ by Eq.~(\ref{eq-prf4b}), and the last term converges to $0$ as $K \to \infty$, as we just showed in step (ii). Consider now the first term. Since
$$ G_{t+1} - \tilde{\G}_{t+1,K}  = ( 1 - \alpha_t) \big(G_{t} - \tilde{\G}_{t,K} \big) + \alpha_t \big( h(Y_t, S_t, A_t, S_{t+1}) - h(Y_{t,K}, S_t, A_t, S_{t+1}) \big)$$
and $ \big\| h(Y_t, S_t, A_t, S_{t+1}) - h(Y_{t,K}, S_t, A_t, S_{t+1}) \big\| \leq L_h \| Y_t - Y_{t,K}\|$ by the Lipschitz property of $h$,   
we have
\begin{align}
   \E \big[ \big\| G_{t+1} - \tilde{\G}_{t+1,K} \big\| \big] & \leq  ( 1 - \alpha_t) \E \big[ \big\|  G_{t} - \tilde{\G}_{t,K} \big\| \big] + \alpha_t L_h \E \big[ \| Y_t - Y_{t,K}\| \big]  \notag \\
   & \leq ( 1 - \alpha_t) \E \big[ \big\|  G_{t} - \tilde{\G}_{t,K} \big\| \big] + \alpha_t L_h \C_K, \label{eq-prf5}
\end{align}
where the second inequality follows from Prop.~\ref{prp-3}(i), which gives the constants $\C_K, K \geq 1$, with $\C_K \downarrow 0$. For each $K$, in view of Assumption~\ref{cond-stepsize} on the stepsize, the inequality (\ref{eq-prf5}) implies that 
$$ \limsup_{t \to \infty} \E \big[ \big\|  G_{t} - \tilde{\G}_{t,K} \big\| \big]  \leq L_h \C_K.$$
Then, since $\C_K \downarrow 0$, letting $K$ go to infinity in the right-hand side of the preceding inequality, it follows that $\lim_{t \to \infty} \E \big[ \big\| \G_t - \G^* \big\| \big] = 0$.
\end{proof}

\subsection{Handling Noisy Rewards: Proof of Prop.~\ref{prp-noise}} \label{appsec-prf-noise}

For convenience, we restate Prop.~\ref{prp-noise} below. For each $t \geq 0$, let $\omega_{t+1} = R_{t} - r(S_t, A_t, S_{t+1})$, the noise in the observed reward $R_{t}$. We consider the recursion~(\ref{eq-noiseW}):
$W_0 = \0$ and
\begin{equation} \label{eq-prf-noiseW}
     W_{t+1} = (1 - \alpha_t) \, W_t + \alpha_t \, \e_t \, \rho_t \cdot \omega_{t+1}, \qquad t \geq 0.
\end{equation}     
Recall that it is assumed in our MDP model that $R_t$ has mean $r(S_t, A_t, S_{t+1})$ and bounded variance; specifically, 
let $\mathcal{F}_t = \sigma (S_0, A_0, \ldots, S_{t+1})$ in what follows, and we have that for some constant $\C < \infty$,
\begin{equation} \label{eq-cond-noise}
   \E \big[ \, \omega_{t+1} \mid \mathcal{F}_t \big] = 0, \qquad \E \big[ \, \omega_{t+1}^2 \mid \mathcal{F}_t \big] < \C.
\end{equation}   

\smallskip
\propnoise*

Because the proofs of part (i) and part (ii) use quite different arguments, we give them separately below.

 \smallskip
\begin{proofof}{Prop.~\ref{prp-noise}(i)}
To simplify notation, denote $\tilde \omega_{t+1} = \rho_t \, \omega_{t+1}$.
Similarly to the proof of Theorem~\ref{thm-l1conv}, we first consider for each $K \geq 1$, the truncated traces $\{\tilde{\e}_{t,K}, t \geq 0\} $ given by Eq.~(\ref{eq-te}), and  we replace $\{\e_t\}$ in the recursion (\ref{eq-prf-noiseW}) by $\{\tilde{\e}_{t,K}\}$ to define iterates $\wtld W_{0,K} = \0$ and
$$ \wtld W_{t+1, K} = (1 - \alpha_t) \, \wtld W_{t,K} + \alpha_t \, \tilde{\e}_{t,K} \cdot \tilde \omega_{t+1}, \qquad t \geq 0. $$
Since the number of states and actions is finite, we can bound $\|\tilde{\e}_{t,K}\|$ by some finite constant for all $t$. Then, using Eq.~(\ref{eq-cond-noise}), we have that for all $t \geq 0$,
$$ \E \left[ \, \tilde{\e}_{t,K} \cdot \tilde \omega_{t+1} \mid \mathcal{F}_t \right] = 0, \qquad \E \left[ \| \tilde{\e}_{t,K} \|^2 \cdot \tilde \omega_{t+1}^2 \mid \mathcal{F}_t \right] \leq \C' \ \  \ \text{for some constant} \ \C' < \infty.$$
Under Assumption~\ref{cond-stepsize} on the stepsize $\{\alpha_t\}$, this implies by \cite[Lemma 1]{tsi94} that $\wtld W_{t, K}  \asto \0$.

Next we show $\lim_{t \to \infty} \E \big[  \| \wtld W_{t,K} \| \big] = 0$. Since $\alpha_t \in (0,1]$ for all $t$ and $\wtld W_{0,K}=\0$, for each $t \geq 0$, $\wtld W_{t+1,K}$ can be expressed as a convex combination of $\e_{j,K} \, \tilde\omega_{j+1}, j \leq t$, with coefficients $c_{t,j}$ (each $c_{t,j}$ is a function of $(\alpha_0, \ldots, \alpha_t)$).
Consequently,
$\| \wtld W_{t+1, K} \| \leq \sum_{j=0}^{t} c_{t,j} \| \e_{j,K} \| \cdot |\tilde\omega_{j+1} |,$
and by the convexity of the function $x^2$, 
$$\| \wtld W_{t+1, K} \|^2 \leq \textstyle{\sum}_{j=0}^{t} c_{t,j} \| \e_{j,K}\|^2 \cdot |\tilde \omega_{j+1} |^2.$$ 
As discussed earlier, the variance of $\e_{j,K} \cdot \tilde \omega_{j+1}$ can be bounded uniformly for all $j$,  so the preceding inequality implies that there exists some constant $\C'<\infty$ with
\begin{equation} \label{eq-prf-noisyR1}
  \E \big[ \| \wtld W_{t+1, K} \|^2 \big] \leq \C', \qquad \forall \, t \geq 0.
\end{equation}  
This in turn implies that the sequence $\{ \| \wtld W_{t,K} \|, t \geq 0 \}$ is uniformly integrable \citep[see e.g.,][p.\ 32]{Bil68}; i.e., 
\begin{equation} 
   \sup_{t \geq 0} \, \E \Big[ \| \wtld W_{t, K}  \| \cdot \mathbb{1}\big( \| \wtld W_{t, K} \| \geq a \big) \Big] \to 0 \quad \text{as} \ \ a \to + \infty, \notag
\end{equation}   
(where $\mathbb{1}\big( \| \wtld W_{t, K} \| \geq a \big)$ is the indicator for the event $\| \wtld W_{t, K} \| \geq a$).
By \cite[Lemma IV-2-5, p.\ 66]{Nev75}, every uniformly integrable sequence of random variables which converges almost surely also converges in $L^1$. Therefore, since $\wtld W_{t,K} \asto \0$ as proved earlier, we have $\lim_{t \to \infty} \E \big[  \| \wtld W_{t,K} \| \big] = 0$.

We now prove $\lim_{t \to \infty} \E \big[  \| W_{t} \| \big] = 0$ similarly to the proof step (iii) for Theorem~\ref{thm-l1conv}. 
For each $K \geq 1$, since $\lim_{t \to \infty} \E \big[  \| \wtld W_{t,K} \| \big] = 0$, we have
\begin{equation} 
  \limsup_{t \to \infty} \E \left[  \big\| W_{t} \big\| \right] \leq \limsup_{t \to \infty} \E \left[  \big\| W_{t}  - \wtld W_{t,K} \big\| \right] + \limsup_{t \to \infty} \E \left[  \big\| \wtld W_{t,K} \big\| \right] = \limsup_{t \to \infty} \E \left[  \big\| W_{t}  - \wtld W_{t,K} \big\| \right]. \notag
\end{equation}  
Thus it is sufficient to prove that 
\begin{equation} \label{eq-prf-noisyR2}
 \lim_{K \to \infty} \limsup_{t \to \infty} \E \left[  \big\| W_{t}  - \wtld W_{t,K} \big\| \right] = 0.
\end{equation} 

To this end, let us write
$$ W_{t+1} - \wtld W_{t+1,K} = (1 - \alpha_t) \big( W_{t} - \wtld W_{t,K} \big) + \alpha_t \big( \e_t - \tilde{\e}_{t,K} \big) \cdot \tilde \omega_{t+1}.$$
Since the number of states and actions is finite, Eq.~(\ref{eq-cond-noise}) implies that for all $t$, $\E \big[ | \tilde \omega_{t+1} | \mid \mathcal{F}_t \big] \leq \C'$ for some constant $\C' < \infty$.
Consequently,
\begin{align*}
 \E \left[ \big\| W_{t+1} - \wtld W_{t+1,K} \big\| \right] & \leq (1 - \alpha_t)  \, \E \left[ \big\| W_{t} - \wtld W_{t,K} \big\| \right] + \alpha_t \, \C' \cdot \E \left[ \big\| \e_t - \tilde{\e}_{t,K} \big\| \right]   \\
    & \leq (1 - \alpha_t) \, \E \left[ \big\| W_{t} - \wtld W_{t,K} \big\| \right] + \alpha_t \, \C' \cdot \C_K,
\end{align*}
where the second inequality follows from Prop.~\ref{prp-3}(i), and $\C_K, K \geq 1$, are constants with the property that $\C_K \downarrow 0$ as $K \to  \infty$. 
By Assumption~\ref{cond-stepsize} on the stepsize, the preceding inequality implies that for each $K$,
$$ \limsup_{t \to \infty} \E \left[ \big\|  W_{t} - \wtld{W}_{t,K} \big\| \right]  \leq \C' \cdot \C_K.$$
Letting $K$ go to infinity and using the fact $\C_K \downarrow 0$, we obtain the desired equality (\ref{eq-prf-noisyR2}), which implies 
$\lim_{t \to \infty} \E \left[ \| W_t  \| \right] = 0$ as discussed earlier.
\end{proofof}

\begin{proofof}{Prop.~\ref{prp-noise}(ii)}
Note that with $\alpha_t = 1/(t+1)$, the convergence $W_t \asto \0$ we want to prove is equivalent to the convergence of the time average, $\tfrac{1}{t+1} \sum_{k=0}^t \e_k \cdot \rho_k \omega_{k+1} \asto \0$, where each term in the sum is a function of $(\e_k, S_k, A_k, S_{k+1}, R_{k})$: 
$$ \e_k \cdot \rho_k \omega_{k+1}  = \e_k \cdot \rho(S_k, A_k) \cdot \big( R_k - r(S_k,A_k,S_{K+1}) \big).$$

By Theorem~\ref{thm-erg}, the Markov chain $\{Z_t\}=\{(S_t, A_t, \e_t, \F_t)\}$ has a unique invariant probability measure $\zeta$. Consequently, the Markov chain $\{Z'_t\} := \{(S_t, A_t, \e_t, \F_t, S_{t+1}, R_{t})\}$ has a unique invariant probability measure $\zeta'$, determined by $\zeta$ together with the probabilities of the successor state $S_{t+1}$ given $(S_t, A_t)$ and the conditional distribution of the reward $R_t$ given $(S_t, A_t, S_{t+1})$, which are specified by the MDP model.
Let $\E_{\zeta'}$ denote expectation with respect to the probability distribution of the stationary Markov chain $\{Z'_t\}$ 
with the initial distribution being $\zeta'$.
From Theorem~\ref{thm-erg}(ii) and the relation between $\zeta'$ and $\zeta$, we have 
\begin{equation} \label{eq-prf-noisyR3}
 \E_{\zeta'} \big[ \| \e_0 \| \cdot  \rho_0 \, | \omega_1 | \big] \leq \C' \E_{\zeta} \big[ \| \e_0 \| \big]  < \infty
\end{equation} 
for some constant $\C' < \infty$. Specifically, in the above, we obtain the first inequality by bounding the conditional expectation of $|\omega_1|$, conditioned on $(\e_0, S_0, A_0)$, by some finite constant [cf.\ Eq.~(\ref{eq-cond-noise})], and we then obtain the second inequality by applying Theorem~\ref{thm-erg}(ii).

Given the finite expectation in Eq.~(\ref{eq-prf-noisyR3}), we can apply to the stationary process $\{Z'_t\}$ with initial distribution $\zeta'$ a strong law of large numbers for stationary processes [\citep[Chap.\ X, Theorem 2.1]{Doob53}; see also \cite[Theorem 17.1.2]{MeT09}]. By this theorem, there exists a nonempty subset $D_1$ of the state space of $\{Z'_t\}$ such that:
\begin{enumerate}
\item[(i)] $D_1$ has $\zeta'$-measure $1$, and 
\item[(ii)] for each initial condition $Z'_0=z' \in D_1$,  
$\{W_t\}$ converges a.s.\ (with respect to the probability measure induced by the initial condition $Z'_0=z'$ for the process $\{Z'_t\}$).
\end{enumerate}
In view of the dependence relations between the variables $(S_t, A_t, S_{t+1}, R_t)$ given by the MDP model, and also in view of the finiteness of the state-action space $\S \times \A$ and the irreducibility property of the behavior policy $\pi^o$ (Assumption~\ref{cond-bpolicy}(ii)), the preceding properties of the set $D_1$ imply that there exists a nonempty subset $D_2$ of the space $\re^{n+1}$ (which is the space of $(\e_t, \F_t)$) such that: 
\begin{enumerate}
\item[(i)] $D_2$ has measure $1$ with respect to the marginal on $\re^{n+1}$ of the invariant probability measure $\zeta$, and
\item[(ii)] for each initial condition $(\e_0, \F_0, S_0) = (\e, \F, s) \in D_2 \times \S$, $\{W_t\}$ converges a.s.
\end{enumerate}
But the limit of $\{W_t\}$ cannot differ from $\0$ by Prop.~\ref{prp-noise}(i) proved earlier (since for the given initial condition, $E[\|W_t\|] \to 0$ implies the existence of a subsequence of $\{W_t\}$ converging to $\0$ a.s.\ \cite[Theorem 9.2.1]{Dud02}). Thus, we conclude that for each initial condition $(\e, \F, s) \in D_2\times \S$, $W_t \asto \0$.

Now to establish Prop.~\ref{prp-noise}(ii), we only need to show that for any given initial condition $(\e_0, \F_0) =(\e,\F) \not\in D_2$, $W_t \asto \0$ as well.  
To prove this, let $s \in \S$ be an arbitrary given state. Consider the sequence $\{(\e_t, \F_t, W_t)\}$ with $(\e_0, \F_0)= (\e,\F)$ and $S_0 = s$. 
Consider also a second sequence $\{(\tilde \e_t, \tilde \F_t, \wtld W_t)\}$ which is generated by the same recursion and the same trajectory of states, actions and rewards that define the first sequence $\{(\e_t, \F_t, W_t)\}$, but with a  pair of initial traces $(\tilde \e_0, \tilde \F_0)=(\tilde \e,\tilde \F) \in D_2$, possibly different from $(\e,\F)$.
By what we proved earlier, $\wtld W_t \asto \0$. On the other hand, by definition
$$ W_{t+1} - \wtld{W}_{t+1} = \frac{1}{t+1} \sum_{k = 0}^t \big( \e_k - \tilde{\e}_k \big) \cdot \rho_k  \omega_{k+1}.$$ 
In the right-hand side, we have $\e_k - \tilde{\e}_k \asto \0$ (as $k \to \infty$) by Prop.~\ref{prp-2}. 
Using this and \cite[Lemma 1]{tsi94}, we can conclude that $W_{t+1} - \wtld{W}_{t+1} \asto \0$.
\footnote{To see this, for each integer $K \geq 1$, define a stopping time $\bar t(K) : = \min \{ t \geq 0 \mid \| \e_t - \tilde{\e}_t \| > K \}$ with $\bar t(K) = + \infty$ if $\| \e_t - \tilde{\e}_t \| \leq K$ for all $t$. Let $x_{t;K} = \big( \e_t - \tilde{\e}_t \big) \cdot \rho_t  \omega_{t+1}$ for $t < \bar t(K)$ and $x_{t;K} = \0$ for all $t \geq \bar t(K)$. Define random variables $X_{t+1; K}:= \tfrac{1}{t+1}  \sum_{k = 0}^t x_{k;K}$, $t \geq 0$, which satisfy the recursion
$X_{t+1;K} = (1 - \tfrac{1}{t+1}) \, X_{t;K} + \tfrac{1}{t+1} \, x_{t;K}$ with $X_{0;K}=\0$.
We then apply \cite[Lemma 1]{tsi94} to obtain $X_{t;K} \asto \0$ for each $K$. (To apply the latter lemma, observe that since $\S$ and $\A$ are finite spaces, $\rho_t$ is bounded for all $t$, and then conditioned on $\mathcal{F}_t$, by construction $\{x_{t;K}\}_{t \geq 0}$ have conditional zero means and uniformly bounded variances, similar to the properties of $\{\omega_t\}$ shown in Eq.~(\ref{eq-cond-noise}). So the conditions of \cite[Lemma 1]{tsi94} are satisfied and its conclusion applies.) Consider now the set $\Omega'$ of those sample paths on which, as $t \to \infty$, $\e_t - \tilde{\e}_t \to \0$ and $X_{t;K} \to \0$ for all $K \geq 1$. Note that $\Omega'$ has probability $1$. Now for each sample path from $\Omega'$, since $\e_t - \tilde{\e}_t \to \0$, there exists some $K$ such that $\bar t(K) = +\infty$ and consequently, $W_{t} - \wtld{W}_{t}$ coincides with $X_{t;K}$ for all $t$, which implies $W_{t} - \wtld{W}_{t} \to \0$.}
Since $\wtld{W}_t \asto \0$, this implies $W_t \asto \0$. The proof is now complete. 
\end{proofof}

\subsection{Proof of Theorem~\ref{thm-lstd} on the Convergence of ELSTD($\lambda$)} \label{appsec-prflstd}

The proof proceeds by calculating the limit $\G^*$ in Theorem~\ref{thm-l1conv} for the two functions $h_1, h_2$ in Eq.~(\ref{eq-h-lstd}): 
with $y = (\e, \F) \in \re^{n+1}$,
\begin{equation} \label{app-eq-hlstd}  
h_1(y, s, a, s') =  \e \cdot \rho(s,a) \, \big( \gamma(s') \fe(s')^\top - \fe(s)^\top \big), \quad  h_2(y, s, a, s') =  \e \cdot \rho(s,a) \, r(s,a,s'),
\end{equation}
which are associated with the ELSTD($\lambda$) iterates $\bA_t, \bb_t$, respectively. Specifically, based on the proof of Theorem~\ref{thm-l1conv}, we first calculate for each $K$, the limit $\G^*_K$ given in Eq.~(\ref{eq-prf4a}), which is associated with the truncated traces $(\tilde{\e}_{t,K}, \tilde{\F}_{t,K})$. We then take $K$ to $\infty$ to get the expression of $\G^*$ since $\G^* = \lim_{K \to \infty} \G^*_K$, as shown in the step (ii) of the proof of Theorem~\ref{thm-l1conv}. The details of this calculation are given below, and the subsequent Lemma~\ref{lma-lstd} establishes that 
\begin{equation} \label{eq-lstd-G*}
   \G^* = \bA  \ \ \ \text{for} \ h = h_1; \qquad \G^* = \bb \ \ \  \text{for} \ h = h_2.
\end{equation}   
Let us give now the rest of the proof of Theorem~\ref{thm-lstd}, assuming for the moment that Eq.~(\ref{eq-lstd-G*}) has been proved.
Then, with $h = h_1$, Theorem~\ref{thm-l1conv} yields the $L^1$-convergence of $\{\bA_t\}$ to $\bA$, and Theorem~\ref{thm-asconv} yields $\bA_t \asto \bA$ for stepsizes $\alpha_t = 1/(t+1)$.

For the iterates $\{\bb_t\}$ [cf.\ Eq.~(\ref{eq-lstd2})], we also need to take care of the noise in the rewards $R_t$, by using Prop.~\ref{prp-noise}. Specifically, with $W_0=\0$, let
$$ \omega_{t+1} = R_{t} - r(S_t, A_t, S_{t+1}), \qquad W_{t+1} = (1 - \alpha_t) \, W_t + \alpha_t \, \e_t \, \rho_t \cdot \omega_{t+1}, \qquad t \geq 0,$$
[cf.\ Eq.~(\ref{eq-noiseW})]. By definition,
\begin{align*}
   \bb_{t+1}  & =  ( 1 - \alpha_t) \, \bb_t + \alpha_t \,  \e_t \cdot \rho_t \, R_{t}  
         = ( 1 - \alpha_t) \, \bb_t + \alpha_t \,  \e_t \cdot \rho_t \, \big( r(S_t, A_t, S_{t+1}) +  \omega_{t+1} \big),
\end{align*}  
so the iteration for $\{\bb_t\}$ can be equivalently expressed as
$$  \bb_{t+1}  = \G_{t+1} + W_{t+1},$$
where $\G_{t+1}$ is given by the recursion~(\ref{eq-G}) with $h=h_2$ and $\G_0 = \bb_0$, and $W_{t+1}$ is as defined above.
Then by Theorem~\ref{thm-l1conv}, Eq.~(\ref{eq-lstd-G*}) and Prop.~\ref{prp-noise}(i), we have
$$ \lim_{t \to \infty} \E \big[ \big\| \bb_{t} - \bb \big\| \big] \leq \lim_{t \to \infty} \E \big[ \big\| \G_{t} - \G^* \big\| \big] + \lim_{t \to \infty} \E \big[ \big\| W_{t}  \big\| \big]  = 0.$$
This proves the $L^1$-convergence of $\{\bb_t\}$ to $\bb$. 
Similarly, its a.s.\ convergence in the second part of Theorem~\ref{thm-lstd} follows from Theorem~\ref{thm-asconv}, Eq.~(\ref{eq-lstd-G*}) and Prop.~\ref{prp-noise}(ii) as
$$ \G_{t} \asto \G^* = \bb  \ \ \text{and} \ \  W_t \asto \0 \qquad \Longrightarrow \qquad \bb_t = \G_t + W_t \asto \bb.$$
Thus Theorem~\ref{thm-lstd} is proved.

In the rest of this subsection, we verify Eq.~(\ref{eq-lstd-G*}), which we used in the proof above.

\subsection*{Computing the Limiting Matrix and Vector for ELSTD($\lambda$)} 

The desired limits for ELSTD($\lambda$) are the matrix $\bA$ and vector $\bb$ given in Eqs.~(\ref{eq-bellman-def})-(\ref{eq-Ab}), Section~\ref{sec-alg}:
\begin{align}
   \bA  &   =  - \Fe^\top  \bM \, (I - \PL) \,  \Fe   \notag  \\
    & =  -  \Fe^\top  \bM \, (I - \P \Gm \Lm)^{-1} \, (I - \P \Gm) \,  \Fe,   \label{appeq-A} \\
    \bb  & =  \Fe^\top  \bM \,  \rl = \Fe^\top  \bM \, (I - \P \Gm \Lm)^{-1} \, \r, \label{appeq-b}
\end{align}   
where $\bM$ is a diagonal matrix with
$$ diag(\bM)= \bi^{\top} \, (I - \PL)^{-1}, \qquad  \bi \in \re^N, \  \bi(s) = d_{\pi^o}(s) \cdot  \i(s), \ s \in \S,$$
and $d_{\pi^o}(s)$ is the steady state probability of state $s$ under the behavior policy $\pi^o$ (cf.\ Assumption~\ref{cond-bpolicy}(ii)), and $\i(s)$ is the ``interest'' weight for state $s$.
By the definition (\ref{eq-bellman-def}) of $\PL$, we can also write
\begin{equation} \label{appeq-M}
   diag(\bM)= \bi^{\top} \, (I - \P \Gm)^{-1} \, (I - \P \Gm \Lm).
\end{equation}   

Recall that $\E_{\zeta}$ denotes expectation with respect to the stationary Markov chain $\{\Z_t\}$, where $\Z_t = (S_t, A_t, \e_t, \F_t)$, with its unique invariant probability measure $\zeta$ as the initial distribution (cf.~Theorem~\ref{thm-erg}). We denote $Y_t = (\e_t, \F_t)$.

\begin{lem} \label{lma-lstd}
Under Assumption~\ref{cond-bpolicy},
\begin{align*} 
     \E_{\zeta} \big[ h_1(Y_0, S_0, A_0, S_1) \big] =   \bA, \qquad
     \E_{\zeta} \big[ h_2(Y_0, S_0, A_0, S_1) \big]  =  \bb.
\end{align*}   
\end{lem}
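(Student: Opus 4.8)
The plan is to use the reduction already prepared in the proof of Theorem~\ref{thm-l1conv}. By Theorem~\ref{thm-asconv} the quantity $\E_{\zeta}[h_i(Y_0,S_0,A_0,S_1)]$ is precisely the constant $\G^*$ of Theorem~\ref{thm-l1conv}, and by step (ii) of that proof $\G^* = \lim_{K\to\infty}\G^*_K$, where, per Eq.~(\ref{eq-prf4a}), $\G^*_K = \E_0[h_i(Y_{k,K},S_k,A_k,S_{k+1})]$ for any $k>2K$, with $\E_0$ the stationary expectation of the finite chain $\{(S_t,A_t)\}$ and $Y_{k,K}=(\tilde{\e}_{k,K},\tilde{\F}_{k,K})$ the truncated traces of Eqs.~(\ref{eq-tF})-(\ref{eq-te}). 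So it suffices to compute $\G^*_K$ in closed form for each $K$ and let $K\to\infty$, checking the result is $\bA$ for $h_1$ and $\bb$ for $h_2$. Working with the truncated traces is what makes this tractable, since $\tilde{\e}_{k,K}$ is a finite sum over times $j$ from $k-K$ to $k$ and every expectation reduces to a finite-state computation.

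The workhorse is a pair of importance-sampling identities. Writing $\mathcal{F}_m=\sigma(S_0,\dots,S_m)$, for any $g$ depending on one transition, $\E[\rho_m\, g(S_m,A_m,S_{m+1})\mid S_m=s]=\sum_a\pi(a\mid s)\sum_{s'}p(s'\mid s,a)\,g(s,a,s')$, i.e.\ the ratio $\rho_m$ turns a $\pi^o$-transition into a $\P$-transition; and in particular $\E[\rho_m\mid\mathcal{F}_m]=1$ since $\sum_a\pi(a\mid S_m)=1$. Substituting (\ref{eq-te}) into $\G^*_K$ and conditioning the summand at time $j$ on $\mathcal{F}_j$ separates it into an $\mathcal{F}_j$-measurable \emph{weight} $\tilde{\M}_{j,K}\,\fe(S_j)$ and a \emph{future} factor $(\beta_{j+1}\cdots\beta_k)\,\rho_k\,(\cdots)$. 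Since $\beta_{m+1}\cdots\beta_k=\prod_{\ell=m}^{k-1}\rho_\ell\,\gamma_{\ell+1}\lambda_{\ell+1}$, the product of ratios collapses the path $S_j\to\cdots\to S_k$ into powers of $\P\Gm\Lm$. Applying the first identity repeatedly, the conditional expectation of the future factor given $S_j=s$ is the $s$-th row of $(\P\Gm\Lm)^{k-j}(\P\Gm-I)\Fe$ for $h_1$ (using $\E[\rho_k\mid\mathcal{F}_k]=1$ to handle the $-\fe(S_k)$ term) and the $s$-th entry of $(\P\Gm\Lm)^{k-j}\r$ for $h_2$ (the first identity produces $\r(S_k)$ from $\rho_k R_k$).

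It remains to average the weight against these. Set $m_K(s):=\E_0[\tilde{\M}_{j,K}\,\mathbb{1}(S_j=s)]$, which is independent of $j$ by stationarity. A \emph{backward} importance-sampling computation on $\tilde{\F}_{j,K}=\sum_{i=j-K}^j\i(S_i)(\rho_i\gamma_{i+1}\cdots\gamma_j)$ gives $\E_0[\tilde{\F}_{j,K}\mathbb{1}(S_j=s)]$ equal to the $s$-th entry of $\bi^\top\sum_{\ell=0}^K(\P\Gm)^\ell$, because under stationarity $S_{j-\ell}$ has law $\d$ (so the contributing state carries weight $\d(s')\i(s')=\bi(s')$) and the forward ratios build up $(\P\Gm)^\ell$; adding the $\lambda\,\i$ term yields $m_K(s)=\lambda(s)\bi(s)+(1-\lambda(s))\big[\bi^\top\sum_{\ell=0}^K(\P\Gm)^\ell\big](s)$. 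Collecting terms with $\ell'=k-j$ gives $\G^*_K=\Fe^\top M_K\big(\sum_{\ell'=0}^K(\P\Gm\Lm)^{\ell'}\big)(\P\Gm-I)\Fe$ and $\G^*_K=\Fe^\top M_K\big(\sum_{\ell'=0}^K(\P\Gm\Lm)^{\ell'}\big)\r$, with $M_K=\mathrm{diag}(m_K)$. Letting $K\to\infty$ sends $\sum_{\ell'=0}^K(\P\Gm\Lm)^{\ell'}$ to $(I-\P\Gm\Lm)^{-1}$ and, after verifying the algebraic identity $\lim_K m_K=\bi^\top(I-\P\Gm)^{-1}(I-\P\Gm\Lm)=\mathrm{diag}(\bM)$ (set $u^\top=\bi^\top(I-\P\Gm)^{-1}$ so $\bi^\top=u^\top(I-\P\Gm)$ and substitute), produces exactly $-\Fe^\top\bM(I-\P\Gm\Lm)^{-1}(I-\P\Gm)\Fe=\bA$ and $\Fe^\top\bM(I-\P\Gm\Lm)^{-1}\r=\bb$ of Eqs.~(\ref{appeq-A})-(\ref{appeq-b}).

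I expect the main obstacle to be the bookkeeping in cleanly splitting the trace into its forward (test-function) and backward (weight $\tilde{\M}_{j,K}$) contributions, and especially the backward computation of $m_K(s)$: one must correctly align the $\gamma$/$\lambda$ factors with the landed states and confirm that the limiting weight matches $\mathrm{diag}(\bM)$ through the identity above. Justifying the interchange of the $K\to\infty$ limit with the finite sums is the remaining technical point, but it is harmless here since everything is dominated by the convergent Neumann series $(I-\P\Gm)^{-1}$ and $(I-\P\Gm\Lm)^{-1}$ guaranteed by Assumption~\ref{cond-bpolicy}.
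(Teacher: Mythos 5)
Your proposal is correct and follows essentially the same route as the paper's proof: reduce to computing $\G^*_K$ for the truncated traces via Eq.~(\ref{eq-prf4a}), condition forward to turn the $\beta$-products and $\rho_k$ into powers of $\P\Gm\Lm$ acting on $(\P\Gm-I)\Fe$ or $\r$, compute the stationary weight of $\tilde\M_{t,K}$ backward to get $\bi^\top\big(I+\sum_{k=1}^K(\P\Gm)^k(I-\Lm)\big)$ (which equals your $m_K$), and pass to the limit using $I+\sum_{k\ge1}(\P\Gm)^k(I-\Lm)=(I-\P\Gm)^{-1}(I-\P\Gm\Lm)=diag(\bM)$. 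The only difference is organizational (you isolate the diagonal weight matrix $M_K$ explicitly, while the paper computes $\E_0[\tilde\M_{t,K}\Psi(S_t)]$ for a generic $\Psi$ first), and all your intermediate formulas agree with the paper's.
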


\begin{proof}
The proof proceeds as follows. We first calculate the limit vector $\G^*_K$ defined in Eq.~(\ref{eq-prf4a}) in the proof of Theorem~\ref{thm-l1conv}, for the two choices of the function $h$ given in Eq.~(\ref{app-eq-hlstd}): $h=h_1$, $h=h_2$. We then calculate $\G^*$ by its definition given in the proof of Theorem~\ref{thm-l1conv}: $\G^* = \lim_{K \to \infty} \G^*_K$. 
Theorem~\ref{thm-asconv} shows $\E_{\zeta} \big[ h(Y_0, S_0, A_0, S_1) \big]  = G^*$, so the lemma follows if we prove that $\G^* = \bA$ for $h = h_1$, and $\G^* = \bb$ for $h = h_2$.

Let $\E_0$ denotes expectation with respect to the probability measure of the stationary Markov chain $\{(S_t, A_t)\}$.  
Recall that for each $K \geq 1$, $\G^*_K$ is defined by Eq.~(\ref{eq-prf4a}) as 
\begin{equation} \label{appeq-prf-lmt1}
   \G^*_K = \E_0 \big[ \, h (Y_{t,K}, S_t, A_t, S_{t+1})  \,\big], \qquad   \forall \, t > 2 K, 
\end{equation}   
where $Y_{t,K} = (\tilde{\e}_{t,K}, \tilde{\F}_{t,K})$ are the truncated traces defined together with $\tilde{\M}_{t,K}$ in Eqs.~(\ref{eq-tF})-(\ref{eq-te}).
To simplify the calculation of $\G^*_K$, we first calculate $\E_0 \big[ \tilde{\M}_{t,K} \Psi(S_t) \big]$ for any given $t > K$ and any matrix or vector-valued function $\Psi$ on $\mathcal{S}$.

By Eqs.~(\ref{eq-tF})-(\ref{eq-tM}), 
$$\tilde{\M}_{t,K} = \lambda_t \, \i(S_t) + (1 - \lambda_t) \, \tilde{\F}_{t,K}, \qquad  \tilde{\F}_{t,K}  = \sum_{k=t-K}^t \i(S_k) \cdot \big(\rho_{k} \gamma_{k+1} \cdots \rho_{t-1} \gamma_t \big).$$
Thus, $\tilde{M}_{t,K}$ can be equivalently expressed as
$$ \tilde{\M}_{t,K} = \i(S_t) + \sum_{k=1}^{K} \i(S_{t-k}) \cdot \big(\rho_{t-k} \gamma_{t-k+1} \cdots \rho_{t-1} \gamma_{t} \big) \cdot ( 1 - \lambda_t).$$ 
To calculate $\E_0 \big[ \tilde{\M}_{t,K} \Psi(S_t) \big]$, we calculate the expectation for each term in the above summation separately.

In what follows, for each $s \in \mathcal{S}$, let $\mathbb{1}_s(\cdot)$ denote the indicator for state $s$. For an expression $H$ that results in an $N$-dimensional vector, we write $(H)(s)$ for the $s$-th entry of the resulting vector.
Under Assumption~\ref{cond-bpolicy}(ii), we have
$$ \E_0 \big[ \, \i(S_t) \cdot \mathbb{1}_s(S_t) \big] = d_{\pi^o}(s) \, \i(s) = (\,\bi^\top \, I)(s),$$
and for $k = 1, 2, \ldots, K$,
$$ \E_0 \big[ \, \i(S_{t-k}) \cdot \big(\rho_{t-k} \gamma_{t-k+1} \cdots \rho_{t-1} \gamma_{t} \big) \cdot ( 1 - \lambda_t) \cdot \mathbb{1}_s(S_t) \, \big] 
    = \left( \bi^{\top} (\P \Gm)^{k} (I - \Lm) \right)(s).$$
Hence
$$ \E_0 \big[ \,\tilde{\M}_{t,K} \cdot \mathbb{1}_s(S_t)  \,\big] = \left( \bi^{\top} \Big( I + \sum_{k=1}^K (\P \Gm)^{k} (I - \Lm) \Big) \right)(s),$$
and consequently,
\begin{align}
    \E_0 \big[ \, \tilde{\M}_{t,K} \cdot \Psi(S_t) \, \big] & = \sum_{s \in \mathcal{S}}  \E_0 \big[ \tilde{\M}_{t,K} \cdot \mathbb{1}_s(S_t) \big] \cdot \Psi(s) \notag \\
    & = \sum_{s \in \mathcal{S}}  \left( \bi^{\top} \Big( I + \sum_{k=1}^K (\P \Gm)^{k} (I - \Lm) \Big)\right) (s) \cdot \Psi(s). \label{eq-prf5a}
\end{align}    

Let us now calculate $\G^*_K$ for $h=h_1$ or $h_2$ simultaneously, using Eq.~(\ref{appeq-prf-lmt1}) and the expressions of $h_1, h_2$ given in Eq.~(\ref{app-eq-hlstd}).
Let $t > 2K$, and let $\mathcal{F}_t =\sigma(S_0, A_0, \ldots, S_t)$.
For the term appearing after $\e$ in the expression of $h_1$, we have
$$ \E_0 \big[ \, \rho_t \big( \gamma_{t+1} \fe(S_{t+1})^\top - \fe(S_t)^\top \big) \mid \mathcal{F}_t \big] = \Psi_1(S_t), 
$$
where $\Psi_1$ maps each $s$ to the $s$-th row of the matrix $(\P \Gm - I) \Fe$.
For the term appearing after $\e$ in the expression of $h_2$, we have
$$ \E_0 \big[ \, \rho_t \, r(S_t, A_t, S_{t+1}) \mid \mathcal{F}_t \big] =\Psi_2(S_t), $$ 
where $\Psi_2$ maps each $s$ to the $s$-th entry of the vector $\r$.
Denote $\Psi_1^o= (\P \Gm - I) \Fe, \Psi_2^o=\r$.

Corresponding to $h=h_1$ or $h_2$, let $\Psi = \Psi_1$ or $\Psi_2$, and let $\Psi^o = \Psi_1^o$ or $\Psi_2^o$, respectively.
Then, by Eqs.~(\ref{appeq-prf-lmt1}) and~(\ref{app-eq-hlstd}), we have
\begin{align}
 \G^*_K & = \E_0 \big[ \, \E_0 [ \, h (Y_{t,K}, S_t, A_t, S_{t+1}) \mid \mathcal{F}_t \, ] \,\big]  \notag \\
 & =  \E_0 \left[ \, \tilde{\e}_{t,K} \cdot  \Psi(S_t) \, \right]  \notag \\
 & = \E_0 \left[ \, \sum_{k=t-K}^t \tilde{\M}_{k,K} \cdot \fe(S_k) \cdot (\beta_{k+1} \cdots \beta_t) \cdot  \Psi(S_t) \, \right]  \label{appeq-prf-lmt2} \\
 & = \sum_{k=t-K}^t \E_0 \left[ \,  \tilde{\M}_{k,K} \cdot \fe(S_k) \cdot \E_0 \left[ \,  (\beta_{k+1} \cdots \beta_t) \cdot  \Psi(S_t)  \mid \mathcal{F}_k \, \right] \, \right]  \notag \\
 & = \sum_{k=t-K}^t \E_0 \left[ \,  \tilde{\M}_{k,K} \cdot \fe(S_k) \cdot \big( (\P \Gm \Lm)^{t-k}  \Psi^o \big)_{\nr}(S_k) \right].  \label{appeq-prf-lmt3} 
\end{align}
In the above we used the definition (\ref{eq-te}) of $\tilde{\e}_{t,K}$ in Eq.~(\ref{appeq-prf-lmt2}), and in Eq.~(\ref{appeq-prf-lmt3}), the term $( \cdots)_{\nr}(S_k)$ inside the expectation denotes the $S_k$-th row of the matrix or vector given by the expression inside the parentheses $(\cdots)$. We shall also use this notational convention in the proof below.

From Eq.~(\ref{appeq-prf-lmt3}), using the fact that the expectation is with respect to the stationary Markov chain $\{(S_t, A_t)\}$, we obtain
\begin{align}
 \G^*_K & = \sum_{k=t-K}^t \E_0 \left[ \,  \tilde{\M}_{t,K} \cdot \fe(S_t) \cdot \big( (\P \Gm \Lm)^{t-k}  \Psi^o \big)_{\nr}(S_t) \right] \notag \\
 & = \E_0 \left[ \,  \tilde{\M}_{t,K} \cdot \fe(S_t) \cdot \Big( \sum_{k=0}^{K} (\P \Gm \Lm)^k \cdot  \Psi^o \Big)_{\nr}(S_t) \right]. \label{eq-prf5b}
\end{align}
Corresponding to the last two terms inside the expectation above, define a function $\Psi_K$ on $\S$ by
$$\Psi_K(s) = \fe(s) \cdot  \left( \sum_{k=0}^{K} (\P \Gm \Lm)^k \cdot  \Psi^o \right)_{\nr}(s), \qquad s \in \mathcal{S}. $$ 
Then by combining Eq.~(\ref{eq-prf5b}) with (\ref{eq-prf5a}), we have
\begin{equation}
  \G^*_K = \E_0 \left[ \,  \tilde{\M}_{t,K} \cdot \Psi_K(S_t) \right] = \sum_{s \in \mathcal{S}}  \left( \bi^{\top} \Big( I + \sum_{k=1}^K (\P \Gm)^{k} (I - \Lm) \Big)\right) (s) \cdot \Psi_K(s). \label{eq-prf6}
 \end{equation}  

We now take $K$ to infinity to get the expression of $\G^*$. 
For $h=h_1$, $\Psi^o$ in the definition of $\Psi_K$ is given by $\Psi^o= (\P \Gm - I) \Fe$. 
Using the equality relations,
$$ I + \sum_{k=1}^\infty (\P \Gm)^k (I - \Lm)  = (I - \P \Gm)^{-1} (I - \P \Gm \Lm), \qquad \quad \sum_{k=0}^{\infty} (\P \Gm \Lm)^k = (I - \P \Gm \Lm)^{-1},$$
we obtain from the expression (\ref{eq-prf6}) of $\G^*_K$ that
\begin{align*}
 \G^* & = \sum_{s \in \mathcal{S}}  \Big( \bi^{\top} (I - \P \Gm)^{-1} (I - \P \Gm \Lm) \Big) (s) \cdot \fe(s) \cdot 
\Big( (I - \P \Gm \Lm)^{-1} \cdot (\P \Gm - I) \Fe \Big)_{\nr}(s)  \\
& = \Fe^\top  \bM \,  (I - \P \Gm \Lm)^{-1} \, (\P \Gm - I) \,  \Fe = \bA,
\end{align*}
where the second equality follows from the expression (\ref{appeq-M}) for $diag(\bM)$, and the third equality follows from Eq.~(\ref{appeq-A}).
For $h = h_2$, $\Psi^o$ in the definition of $\Psi_K$ is given by $\Psi^o= \r$, and a similar calculation gives
\begin{align*}
 \G^* & = \sum_{s \in \mathcal{S}}  \Big( \bi^{\top} (I - \P \Gm)^{-1} (I - \P \Gm \Lm) \Big) (s) \cdot \fe(s) \cdot \big( (I - \P \Gm \Lm)^{-1} \cdot \r \big)_{\nr}(s) \\
        & = \Fe^\top \bM \, (I - \P \Gm \Lm)^{-1} \, \r = \bb,
\end{align*}  
where the last equality follows from Eq.~(\ref{appeq-b}).
\end{proof}

\subsection{Related Result: Alternative Proof of Existence of an Invariant Probability Measure} \label{appsec-invm-alt}

Consider the Markov chain $\{\Z_t\}$, where $\Z_t = (S_t, A_t, \e_t, \F_t)$. 
In Section~\ref{sec-elstd}, we used, among others, the weak Feller property of the Markov chain $\{\Z_t\}$ to establish the existence of at least one invariant probability measure for $\{\Z_t\}$ (the property (iv) in Section~\ref{sec-elstd1}). We now give an alternative proof for this statement, by constructing directly an invariant probability measure. This proof is similar to that of \cite[Lemma 4.2]{Yu-siam-lstd}, and it was motivated by an analysis of the LSTD($1$) algorithm by~\citet[Chap.\ 11.5.2]{Mey08}. The proof will also yield directly that under that invariant probability measure, $\big\| (\e_0, \F_0) \big\|$ has a finite expectation, which was established in Theorem~\ref{thm-erg}(ii) earlier by using different arguments. 

\begin{prop} \label{prp-alt-invm}
Under Assumption~\ref{cond-bpolicy}, the Markov chain $\{Z_t\}$ has at least one invariant probability measure $\zeta$ with $\E_{\zeta} \big[ \big\| (\e_0, \F_0) \big\| \big] < \infty$.  
\end{prop}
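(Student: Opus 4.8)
The plan is to construct an invariant probability measure explicitly by running the trace recursions ``from time $-\infty$'' along a stationary, two-sided version of the state--action process. Under Assumption~\ref{cond-bpolicy}(ii) the chain $\{(S_t,A_t)\}$ is an irreducible finite Markov chain with the unique stationary distribution $\d(s)\,\pi^o(a\mid s)$; by the Kolmogorov extension theorem it admits a stationary two-sided version $\{(S_t,A_t)\}_{t\in\mathbb{Z}}$. Keeping the shorthand $\beta_t=\rho_{t-1}\gamma_t\lambda_t$, I would define, for every $t\in\mathbb{Z}$, the backward series
\[
   \bar\F_t = \sum_{k=-\infty}^{t} \i(S_k)\prod_{j=k}^{t-1}(\rho_j\gamma_{j+1}),
   \qquad
   \bar\e_t = \sum_{k=-\infty}^{t} \bar\M_k\,\fe(S_k)\prod_{j=k+1}^{t}\beta_j,
\]
with $\bar\M_k=\lambda_k\,\i(S_k)+(1-\lambda_k)\bar\F_k$. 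These are precisely the expressions~(\ref{eq-F})--(\ref{eq-e}) with the initial time pushed to $-\infty$, and peeling off the top term of each series shows that they satisfy the trace recursions (\ref{eq-td3})--(\ref{eq-td1}), namely $\bar\F_t=\gamma_t\rho_{t-1}\bar\F_{t-1}+\i(S_t)$ and $\bar\e_t=\beta_t\bar\e_{t-1}+\bar\M_t\fe(S_t)$.

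Next I would establish that the two series converge. For $\bar\F_t$ all summands are nonnegative, so the partial sums increase monotonically; the conditional-expectation identity of Lemma~\ref{lma1}, applied along the stationary two-sided chain exactly as in Lemma~\ref{lma2}, gives $\E\big[\i(S_k)\prod_{j=k}^{t-1}(\rho_j\gamma_{j+1})\big]\le \C\,\1^\top(\P\Gm)^{t-k}\1$, whence $\E[\bar\F_t]\le \C\,\1^\top(I-\P\Gm)^{-1}\1<\infty$. By monotone convergence $\bar\F_t$ is finite almost surely and, by stationarity, $\sup_t\E[\bar\F_t]=:\bar c<\infty$. For $\bar\e_t$ I would use the bound $\|\bar\M_k\fe(S_k)\|\le \C(\C+\bar\F_k)$ together with Lemma~\ref{lma1}'s identity for the products of $\beta_j$, giving $\E\big[\sum_{k\le t}\|\bar\M_k\fe(S_k)\|\prod_{j=k+1}^{t}\beta_j\big]\le \C(\C+\bar c)\,\1^\top(I-\P\Gm\Lm)^{-1}\1<\infty$; hence the vector series converges absolutely almost surely and $\E[\|\bar\e_t\|]<\infty$. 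This is the same computation that underlies Prop.~\ref{prp-bdtrace}.

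With convergence in hand, I would set $\bar\Z_t=(S_t,A_t,\bar\e_t,\bar\F_t)$ and let $\zeta$ be the law of $\bar\Z_0$. Because $\bar\e_t,\bar\F_t$ are measurable, shift-covariant functions of the stationary two-sided process $\{(S_t,A_t)\}$, the process $\{\bar\Z_t\}_{t\in\mathbb{Z}}$ is stationary; and since $\bar\F_t,\bar\e_t$ obey the recursions (\ref{eq-td3})--(\ref{eq-td1}), $\{\bar\Z_t\}$ is a version of the Markov chain $\{\Z_t\}$ driven by the same transition kernel. A stationary version of a Markov chain has an invariant one-dimensional marginal, so $\zeta$ is invariant, and $\E_\zeta[\|(\e_0,\F_0)\|]\le \E[\|\bar\e_0\|]+\E[\bar\F_0]<\infty$ by the two preceding bounds, which is exactly the assertion.

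The main obstacle is not the algebra but making the last step airtight: I must verify that the backward-constructed stationary process is genuinely a version of $\{\Z_t\}$ for the \emph{same} transition kernel, so that invariance of $\zeta$ for that kernel follows, and that the series defining $\bar\e_0$ converges in the right sense. The latter is delicate because $\fe(S_k)$ is not sign-definite, so I cannot argue by monotonicity as for $\bar\F_0$; instead I must first secure the $L^1$ bound on $\bar\F_k$ and only then invoke absolute ($L^1$) summability to deduce almost sure convergence of the vector series, mirroring the order of the estimates in Prop.~\ref{prp-bdtrace}.
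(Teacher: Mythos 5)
Your proposal is correct and follows essentially the same route as the paper's proof: both construct the invariant measure by running the trace recursions backward from $-\infty$ along a two-sided stationary version of $\{(S_t,A_t)\}$, establish a.s.\ convergence of the resulting series via the $L^1$ bounds of Lemmas~\ref{lma1}--\ref{lma2} (monotone convergence for the $\F$-series, absolute $L^1$ summability for the $\e$-series), and then identify the law of $(S_0,A_0,\bar\e_0,\bar\F_0)$ as invariant because the stationary process satisfies the one-step recursion (\ref{eq-td3})--(\ref{eq-td1}) on a set of full measure. The two ``delicate'' points you flag are exactly the ones the paper handles, via \cite[Theorem 1.38]{Rudin66} for the vector series and an explicit verification that the one-step update of $(\psi(X_0),f(X_0))$ agrees a.s.\ with $(\psi(X_1),f(X_1))$.
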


\begin{proof}
Consider a double-ended stationary Markov chain $\{(S_t, A_t) \mid - \infty < t < \infty \}$ with transition probability matrix $\Pb$ and probability distribution $\Po$. 
In this proof, let $\E_0$ denote expectation with respect to $\Po$.
Let $X_t = \big( (S_t, A_t), (S_{t-1}, A_{t-1}), \ldots \big)$, and denote by $P_X$ the probability distribution of $X_t$, which is a probability measure on $(\mathcal{S} \times \mathcal{A})^\infty$ and is the same for all $t$ due to stationarity. 

We will first define two functions, $f: (\mathcal{S} \times \mathcal{A})^\infty \to \re_+$ and $\psi: (\mathcal{S} \times \mathcal{A})^\infty \to \rn$, which relate to the traces $\F, \e$, respectively. We will then show that the distribution of $\big(S_0, A_0, \psi(X_0), f(X_0)\big)$ is an invariant probability measure of $\{\Z_t\}$.

Let us introduce some notation. For $x \in (\mathcal{S} \times \mathcal{A})^\infty$, we index the components of $x$ as $x = \big( (s_0, a_0), (s_{-1}, a_{-1}), \ldots \big)$, and we denote by $x^{(-k)}$ the tail of $x$ starting from $s_{-k}$, i.e., $x^{(-k)}  = \big( (s_{-k}, a_{-k}), (s_{-k-1}, a_{-k-1}), \ldots \big).$ 
Recall that $\beta_t = \rho_{t-1} \gamma_t \lambda_t$. Correspondingly, we define a function $\beta: \mathcal{S} \times \mathcal{A} \times \mathcal{S} \to \re_+$ by
$$ \beta(s,a,s') = \rho(s,a) \, \gamma(s') \, \lambda(s').$$
For an expression $H$ that results in a vector in $\re^N$, we write $(H)(s)$ for the $s$-th entry of that vector.

We now define the function $f$.\footnote{We note that to gain intuition about the proof, it will be helpful to compare our definition of $f$ with the expression of $\F_t$ in Eq.~(\ref{eq-F}), and compare our subsequent definition of $\psi$ with the expression of $\e_t$ in Eq.~(\ref{eq-e}).} 
Since 
\begin{align}
  \sum_{k=0}^\infty \E_0 \big[ \i(S_{-k}) \cdot \big( \rho_{-k} \gamma_{-k+1} \cdots \rho_{-1} \gamma_0 \big) \big] 
  & = \sum_{k=0}^\infty \E_0 \big[ \i(S_{-k})  \cdot \big( (\P \Gm)^k \1 \big)(S_{-k}) \big] \notag \\
  & =  \bi^\top \sum_{k=0}^\infty (\P \Gm)^k \1 < \infty, \label{eq-prf-inv}
\end{align}  
we can define a nonnegative real-valued measurable function $f$ on $(\mathcal{S} \times \mathcal{A})^\infty$ such that
$$ f(x) =  \begin{cases}   \sum_{k=0}^\infty \i(s_{-k}) \cdot \big( \rho(s_{-k}, a_{-k}) \gamma(s_{-k+1}) \cdots \rho(s_{-1},a_{-1}) \gamma(s_0) \big), \quad & \text{if} \ x \in D_1, \\
 0, & \text{otherwise}, \end{cases}$$
where $D_1$ is a subset of  $(\mathcal{S} \times \mathcal{A})^\infty$ with $P_X(D_1) = 1$.
By Eq.~(\ref{eq-prf-inv}),
$$ \int f(x) P_X(dx) = \E_0 \big[ f(X_0)\big] = \sum_{k=0}^\infty \E_0 \big[ \i(S_{-k}) \cdot \big( \rho_{-k} \gamma_{-k+1} \cdots \rho_{-1} \gamma_0 \big) \big] < \infty.$$

We now define the function $\psi$. Define two constants $L', L$ as follows.
Let $\C'=\E_0 \big[ f(X_0)\big]$; equivalently, $\C'=\E_0 \big[ f(X_{-k})\big]$ for all $k$ by stationarity. 
Let $\C \geq  \max\big\{ \i(s), \|\fe(s) \| \big\}$ for all $s \in \mathcal{S}$.  
By taking conditional expectation similarly to the proof of Lemma~\ref{lma2}, we have the following bound:
\begin{align*}
& \sum_{k=0}^\infty \E_0 \Big[ \Big\|   \big( \lambda_{-k} \cdot \i(S_{-k}) + ( 1 - \lambda_{-k} ) f(X_{-k}) \big) \cdot \fe(S_{-k})  \cdot \big(\beta_{-k+1} \cdots \beta_0 \big) \Big\| \Big] \\
   =  \, & \sum_{k=0}^\infty \E_0 \Big[  \big( \lambda_{-k} \cdot \i(S_{-k}) + ( 1 - \lambda_{-k} ) f(X_{-k}) \big) \cdot \big\| \fe(S_{-k}) \big\| \cdot \big(\beta_{-k+1} \cdots \beta_0 \big) \Big] \\
    \leq  \, & \, (\C+\C') \cdot \C \cdot \sum_{k=0}^\infty \1^\top (\P \Gm \Lm)^{k} \1 < \infty.
\end{align*}  
Therefore, by a theorem on integration \citep[Theorem 1.38, p.~28-29]{Rudin66}, we can define a measurable function $\psi$ on $(\mathcal{S} \times \mathcal{A})^\infty$ such that the following hold: 
\begin{enumerate}
\item[(i)] on a set $ D_2 \subset (\mathcal{S} \times \mathcal{A})^\infty$ with $P_X(D_2) = 1$,
\begin{align*}
 \psi(x)  =  
 \sum_{k=0}^\infty  \, & \big( \lambda(s_{-k}) \, \i(s_{-k}) + ( 1 - \lambda(s_{-k}) ) \, f(x^{(-k)}) \big) \cdot \fe(s_{-k}) \\
   &  \cdot \big(\beta(s_{-k}, a_{-k}, s_{-k+1})  \cdots \beta(s_{-1}, a_{-1}, s_0) \big), 
\end{align*}  
where the infinite series on the right-hand side converges to a vector in $\rn$;
\item[(ii)] outside $D_2$, $\psi(x) =\0$; and 
\item[(iii)] $\psi(x)$ is integrable with
$$ \E_0 \big[ \| \psi(X_0) \|  \big] = \int \| \psi(x) \| \, P_X(dx) < \infty$$
and
\begin{align*}
\int \psi(x) \, P_X(dx) & = \sum_{k=0}^\infty \E_0 \Big[    \big( \lambda_{-k} \i(S_{-k}) + ( 1 - \lambda_{-k} ) f(X_{-k}) \big) \cdot \fe(S_{-k})  \cdot \big(\beta_{-k+1} \cdots \beta_0 \big)  \Big]. 
\end{align*}
\end{enumerate}

Let $Y_0^o = (\psi(X_0), f(X_0))$. We now show that the probability distribution of $(S_0, A_0, Y_0^o)$ is an invariant probability measure of the Markov chain $\{Z_t\}$. To this end, consider $X_1 = \big( (S_1, A_1), X_0 \big)$, and let us define $Y_1^o=(\e^o_1,\F^o_1)$ based on $Y_0^o$ and $(S_0, A_0, S_1)$, using the same recursion that defines $(\e_t,\F_t)$ [cf.~Eqs.~(\ref{eq-td3})-(\ref{eq-td1})]:
$$ \F^o_1 = \gamma_1  \, \rho_0  \cdot f(X_0) + \i(S_1), \qquad \e_1^o =  \beta_1 \, \psi(X_0) +  \big( \lambda_{1} \, \i(S_{1}) + ( 1 - \lambda_{1} ) \, \F^o_1 \big) \cdot \fe(S_1).$$
If $(S_0, A_0, Y_0^o)$ and $(S_1, A_1, Y_1^o)$ have the same distribution, then this distribution must be an invariant probability measure of $\{Z_t\}$ because the stochastic kernel that governs the transition from $(S_0, A_0, Y_0^o)$ to $(S_1, A_1, Y_1^o)$ is the same as that from $Z_0=\big(S_0,A_0, (\e_0,\F_0)\big)$ to $Z_1=\big(S_1,A_1, (\e_1,\F_1)\big)$.

Now define a set $D \subset (\S \times \A)^\infty$ by $D = D_1 \cap D_2 \cap \big(\mathcal{S} \times \mathcal{A} \times (D_1 \cap D_2 ) \big),$ where $D_1, D_2$ are the sets in the definitions of the functions $f$ and $\psi$, respectively. 
Since $P_X(D_1)=P_X(D_2) = 1$, we have 
$$P_X(D) = \Po(X_1 \in D) =  \Po(X_1 \in D_1 \cap D_2, X_0 \in D_1 \cap D_2) = 1.$$
Consider the case $X_1 \in D$. Then both $X_0, X_1 \in D_1 \cap D_2$. By the definition of $f$ on $D_1$, it follows that
$$\F_1^o = \i(S_1) +  \sum_{k=0}^\infty \i(S_{-k}) \cdot \big( \rho_{-k} \gamma_{-k+1} \cdots \rho_{-1} \gamma_0 \big) \cdot \rho_0 \gamma_1 = f(X_1),$$ 
and from this and the definition of $\psi$ on $D_2$, it also follows that 
\begin{align*}
 \e_1^o & = \big( \lambda_{1} \, \i(S_{1}) + ( 1 - \lambda_{1} ) \, f(X_1) \big) \cdot \fe(S_1)   \\
  & \quad \ + \sum_{k=0}^\infty \big( \lambda_{-k} \cdot \i(S_{-k}) + ( 1 - \lambda_{-k} ) f(X_{-k}) \big) \cdot \fe(S_{-k})  \cdot \big(\beta_{-k+1} \cdots \beta_0 \big)  \cdot \beta_1  \\
 & = \psi(X_1).
\end{align*} 
By stationarity, $\big(S_0, A_0, \psi(X_0), f(X_0) \big)$ and $(S_1, A_1, \psi(X_1), f(X_1) \big)$ have the same distribution. 
Denote this distribution by $\zeta$.
Since $(\e_1^o, \F_1^o)$ differs from $\big(\psi(X_1), f(X_1) \big)$ only when $X_1 \not\in D$, an event with $\Po$-probability $0$, we conclude that $(S_0, A_0, Y_0^o)$ and $(S_1, A_1, Y_1^o)$ have the same distribution $\zeta$, which is an invariant probability measure of $\{Z_t\}$ as discussed earlier.  Then, from the integrability property of $\psi$ and $f$ shown earlier, we have
$$\E_\zeta \big[ \big\| (\e_0, \F_0) \big\| \big] \leq \E_\zeta \big[ \big\| \e_0 \big\| \big] + \E_\zeta \big[ \F_0 \big] = \E_0 \big[ \big\| \psi(X_0) \big\| \big] + \E_0 \big[  f(X_0) \big] < \infty.$$
This completes the proof.
\end{proof}

\section{Proofs for Section~\ref{sec-etd}} \label{appsec-etd}

In this appendix we prove Lemma~\ref{lma-pode} and Theorem~\ref{thm-ctd} for the constrained ETD($\lambda$) algorithm~(\ref{eq-emtd-const}). We will restate both theorems for convenience. 

Recall that the constrained ETD($\lambda$) calculates $\w_{t}$, $t \geq 0$, all restricted to be in a closed ball with radius $r$, $\H = \{ \w \in \rn \mid \| \w\|_2 \leq r \}$, according to
$$  \w_{t+1} = \Pi_{\H} \Big( \w_t + \alpha_t \, h(\w_t, \xi_t) + \alpha_t \, \e_t \cdot \tilde \omega_{t+1} \Big),$$
where $\tilde \omega_{t+1} = \rho_t \big(R_t - r(S_t, A_t, S_{t+1})\big)$ is noise, $\xi_t = (\e_t, S_t, A_t, S_{t+1})$, and the function $h$ is given by Eq.~(\ref{eq-mfn1})  as
$$   h(\w, \xi) =  \e \cdot \rho(s, a) \, \big( r(s, a, s') + \gamma(s') \, \fe(s')^\top \w - \fe(s)^\top \w \big), \quad \text{for} \ \ \xi = (\e, s, a, s').$$
The ``mean ODE'' associated with this algorithm is the projected ODE (\ref{eq-pode}):
$$  \dot{x} = \bar h(x) + z, \qquad z \in - \mathcal{N}_\H(x),$$
where $\bar h(x) = \bA x + \bb$, $\mathcal{N}_\H(x)$ is the normal cone of $\H$ at $x$, and $z$ is the boundary reflection term that keeps the solution in $\H$ \citep{KuY03}.  The solution of $\bar h(x) = 0$ is denoted $\w^*$; i.e., $\w^* = -\bA^{-1} \bb$.

\lmapode*

\begin{proof}
By the definition of $\w^*$, $\bA \w^* + \bb = 0$. Therefore,
$$ 0 = \langle \w^*, \bA \w^* + b\rangle =  \langle \w^*, \bA \w^* \rangle + \langle \w^*, \bb \rangle \leq   - c \| \w^* \|_2^2 + \|\bb\|_2 \| \w^*\|_2,$$
which implies $\| \w^*\|_2  \leq \bb\|_2 / c < r$, i.e., $\w^*$ lies in the interior of $\H$.

For a point $x$ on the boundary of $\H$, $\| x\|_2 = r$ and the normal cone $\mathcal{N}_\H(x) = \{ a x \mid a \geq 0 \}$. 
Since $r > \| \bb \|_2/c$, we have
$$ \langle x, \bar h(x) \rangle = \langle x, \bA x  \rangle + \langle x, \bb \rangle \leq - c \| x\|_2^2 + \| x\|_2 \| \bb\|_2 =  r \, ( - c \, r + \| \bb\|_2 ) < 0.$$  
This shows that for any $x$ on the boundary of $\H$, $\bar h(x)$ points inside $\H$ and hence at $x$, the boundary reflection term $z \in - \mathcal{N}_\H(x)$ that keeps the solution in $\H$ is the zero vector. Consequently, any solution of the projected ODE (\ref{eq-pode}) in $\H$ is a solution of the ODE (\ref{eq-ode}), which is $x(\cdot) \equiv \w^*$.
\end{proof}

Next we prove Theorem~\ref{thm-ctd}.

\thmctd*

\begin{proof}
The desired conclusions will follow immediately from \cite[Theorem 6.1.1]{KuY03} and Lemma~\ref{lma-pode}, if we can show that the conditions of \cite[Theorem 6.1.1]{KuY03} are met. Relevant here are the conditions A.6.1.1-A.6.1.4 and A.6.1.6-A.6.1.7 in \cite[p.\ 165]{KuY03}. We first adapt these six conditions to our problem, and by using stronger forms of the conditions A.6.1.6-A.6.1.7 given in \cite[Eq.~(6.1.10), p.\ 166]{KuY03}, we obtain the conditions (i)-(vi) below. 

The first two conditions are for the functions $h, \bar h$ [cf.\ Eqs.~(\ref{eq-mfn1}),~(\ref{eq-ode})] and the noise $\{\tilde \omega_{t}\}$:
\begin{enumerate}
\item[(i)] $\sup_{t \geq 0} \E \big[ \| h(\w_t, \xi_t) + \e_t \cdot \tilde \omega_{t+1} \| \big] < \infty$. 
\item[(ii)] $\bar h(\w)$ is continuous, and $h(\w,\xi)$ is continuous in $\w$ for each $\xi$. 
\end{enumerate}
Condition (i) is satisfied here. Indeed, we have $\sup_{t \geq 0} \E \big[ \| h(\w_t, \xi_t) \| \big] < \infty$, in view of Prop.~\ref{prp-bdtrace}, the Lipschitz continuity of $h$ in $\e$, and the fact that $\|\w_t \|_2 \leq r$ for all $t$ by the definition of the constrained algorithm. 
Since the rewards $R_t$ have bounded variances by assumption and the noise variable $\tilde \omega_{t+1} = \rho_t \big(R_t - r(S_t, A_t, S_{t+1})\big)$ by definition,
we can bound $\E \big[ | \tilde \omega_{t+1} | \mid \mathcal{F}_t \big]$ by some constant for all $t$, where $\mathcal{F}_t = \sigma(S_0, A_0, \ldots, S_{t+1})$, and consequently, we also have  $\sup_{t \geq 0} \E \big[ \| \e_t \cdot \tilde \omega_{t+1} \| \big] < \infty$ by Prop.~\ref{prp-bdtrace}. Hence condition (i) holds.
Condition (ii) is also clearly satisfied here.

The four remaining conditions to be introduced
are of the same type and relate to the asymptotic rate of change conditions introduced by \citep{KuC78}. These conditions can guarantee that the effects caused by the noises $\tilde \omega_{t+1}$ or by the discrepancies between $h$ and $\bar h$ asymptotically ``average out'' so that the desired convergence can take place.

For any real $T' > 0$, define integer $m(T') = \min \{ t \geq 0 \mid \sum_{k = 0}^t \alpha_k > T' \}$. 
Conditions (iii)-(vi) below are required to hold for each $a \geq 0$ and some $T > 0$ (here $a$ and $T$ are real numbers):
\begin{enumerate}
\item[(iii)] For each $\w$,
\begin{equation} \label{eq-prf-thmtd-1}
 \lim_{t \to \infty} \, P \left\{ \sup_{j \geq t} \max_{0 \leq T' \leq T} \, \left\| \sum_{k=m(jT)}^{m(jT+T')-1} \alpha_k \Big( h(\w, \xi_k) - \bar h(\w) \Big) \right\| \geq a \right\} = 0.
 \end{equation} 
 \item[(iv)] 
 \begin{equation} \label{eq-prf-thmtd-2}
 \lim_{t \to \infty} \, P \left\{ \sup_{j \geq t} \max_{0 \leq T' \leq T} \, \left\| \sum_{k=m(jT)}^{m(jT+T')-1} \alpha_k \, \e_k \cdot \tilde \omega_{k+1} \right\| \geq a \right\} = 0.
\end{equation} 
\item[(v)] There exist nonnegative measurable functions $g_1(\w), g_2(\xi)$ such that 
$$\| h(\w,\xi) \| \leq g_1(\w) \, g_2(\xi),$$ 
where $g_1$ is bounded on each bounded set of $\w$, and $g_2$ satisfies that $\sup_{t \geq 0} \E \big[ g_2(\xi_t) \big] < \infty$ and 
\begin{equation} \label{eq-prf-thmtd-3}
 \lim_{t \to \infty} \, P \left\{ \sup_{j \geq t} \max_{0 \leq T' \leq T} \, \left| \sum_{k=m(jT)}^{m(jT+T')-1} \alpha_k  \Big( g_2(\xi_k) - \E \big[ g_2(\xi_k)\big] \Big) \right| \geq a \right\} = 0.
\end{equation} 
\item[(vi)] There exist nonnegative measurable functions $g_3(\w), g_4(\xi)$ such that  for each $\w, \w'$,
$$  \| h(\w, \xi) - h(\w', \xi) \| \leq g_3(\w - \w') \, g_4(\xi),$$
where $g_3$ is bounded on each bounded set of $\w$, with $g_3(\w) \to 0$ as $\w \to 0$, and $g_4$ satisfies that $\sup_{t \geq 0} \E \big[ g_4(\xi_t) \big] < \infty$ and 
\begin{equation} \label{eq-prf-thmtd-4}
 \lim_{t \to \infty} \, P \left\{ \sup_{j \geq t} \max_{0 \leq T' \leq T} \, \left| \sum_{k=m(jT)}^{m(jT+T')-1} \alpha_k  \Big( g_4(\xi_k) - \E \big[ g_4(\xi_k)\big] \Big) \right| \geq a \right\} = 0.
\end{equation} 
\end{enumerate}

One method given in \cite[Chap.~6.2, p.\ 170-171]{KuY03} of verifying the conditions (\ref{eq-prf-thmtd-1})-(\ref{eq-prf-thmtd-4}) above is to show that a strong law of large numbers hold for the processes involved. 
In particular, let $\psi_k$ represent $h(\w, \xi_k) - \bar h(\w)$ for condition (iii),  $\e_k \cdot \tilde \omega_{k+1}$ for condition (iv), $g_2(\xi_k) - \E \big[ g_2(\xi_k)\big]$ for condition (v), and $g_4(\xi_k) - \E \big[g_4(\xi_k) \big]$ for condition (vi). If
\begin{equation} \label{eq-prf-thmtd-5}
  \frac{1}{t+1} \sum_{k=0}^{t} \psi_k \asto 0
\end{equation}  
for the respective $\{\psi_k\}$, then the conditions (\ref{eq-prf-thmtd-1})-(\ref{eq-prf-thmtd-4}) hold for stepsizes satisfying $\alpha_t=O(1/t)$ and $\tfrac{\alpha_t - \alpha_{t+1}}{\alpha_t}= O(1/t)$ \citep[see][Example 6.1, p.\ 171]{KuY03}.

We now apply the convergence results given earlier in this paper to show that the desired convergence (\ref{eq-prf-thmtd-5}) holds for the processes involved in conditions (iii)-(vi).
In particular, for each fixed $\w$, the almost sure convergence part of Theorem~\ref{thm-lstd} implies that
$$ \frac{1}{t+1} \, \sum_{k=0}^t h(\w, \xi_k) \, \asto  \, \E_\zeta \big[ h(\w, \xi_0) \big] = \bar h(\w).$$
Thus, condition (iii) holds, as just discussed. 
By Prop.~\ref{prp-noise}(ii),
$ \frac{1}{t+1} \, \sum_{k=0}^t \e_k \cdot \tilde \omega_{k+1} \asto \0,$
so condition (iv) is also met. 

We verify now conditions (v)-(vi). For condition (v), we take $g_1(\w) = \| \w \| + 1$, and we bound the function $h$ by
$$ \| h(\w, \xi) \| \leq \big( \| \w \| + 1 \big) \, g_2(\xi), \qquad \text{where} \ \ g_2(\xi) = \C \| \e\|, $$
and $\C > 0$ is some constant. (This bound can be verified directly using the expression of $h$ and the fact that the sets $\S$ and $\A$ are finite.) 
Similarly, for condition (vi), we take $g_3(\w) = \| \w\|$, and we bound the change in $h(\w,\xi)$ in terms of the change in $\w$ as follows: 
for any $\w, \w' \in \rn$,
$$  \big\| h(\w, \xi) - h(\w', \xi) \big\| \leq \| \w - \w' \| \, g_4(\xi), \qquad \text{where} \ \ g_4(\xi) = \C' \| \e\|, $$
and $\C' > 0$ is some constant.
Now the functions $g_2, g_4$ are Lipschitz continuous in $\e$. Hence, for $j = 2, 4$, it follows from Prop.~\ref{prp-bdtrace} that
$\sup_{t \geq 0} \E \big[ g_j(\xi_t) \big] < \infty,$
and it follows from Theorems~\ref{thm-asconv} and \ref{thm-l1conv} that
$$  \frac{1}{t+1} \, \sum_{k=0}^t g_j(\xi_k) \, \asto  \, \E_\zeta \big[ g_j(\xi_0) \big], \quad \text{and} \quad \frac{1}{t+1} \, \sum_{k=0}^t \E \big[ g_j(\xi_k) \big] \to \E_\zeta \big[ g_j(\xi_0) \big], \ \ \  \text{as} \ t \to \infty. $$ 
The preceding two relations imply the desired convergence:
$$ \frac{1}{t+1} \, \sum_{k=0}^t \Big( g_j(\xi_k)  - \E \big[ g_j(\xi_k) \big]  \Big) \asto 0, \qquad j = 2, 4.$$
This shows that conditions (v)-(vi) are met. 

The theorem now follows by combining \cite[Theorem 6.1.1]{KuY03} with the characterization of the solution of the projected ODE (\ref{eq-pode}) given by Lemma~\ref{lma-pode}, using the fact that under Assumptions~\ref{cond-bpolicy} and \ref{cond-A}, the matrix $\bA$ is negative definite (Prop.~\ref{prp-ndef}).
\end{proof}

\section{Negative Definiteness of the Matrix $\bA$} \label{appsec-ndef}

In this appendix we prove a necessary and sufficient condition (Prop.~\ref{prp-ndef} below) for the matrix $\bA$ associated with ETD($\lambda$) to be negative definite. 
Recall from Eqs.~(\ref{eq-m})-(\ref{eq-Ab}) that
$$ \bA = - \Fe^\top \, \bM (I -  \PL)  \, \Fe $$
where $\Fe$ is the feature matrix with full column rank, 
$\PL$ is a substochatic matrix, and $\bM$ is a nonnegative diagonal matrix with its diagonal, $diag(\bM)$, given by
$$ diag(\bM) = \bi^\top (I - \PL)^{-1}, \qquad \bi^\top = \big( \d(1) \, \i(1), \, \, \ldots, \,\, \d(N) \, \i(N) \big).$$
Here Assumption~\ref{cond-bpolicy} is in force and ensures that $(I - \PL)^{-1}$ exists and $\d(s) > 0$ for all $s \in \S$. 

The negative definiteness of $\bA$ is important for the a.s.\ convergence of ETD($\lambda$). 
It is known to hold if $\i(s) > 0$ for all $s \in \S$ \citep{SuMW14}, and it is also known that 
in general, $\bA$ is always negative semidefinite 
for nonnegative $\i(\cdot)$. Our result in this appendix will yield a stronger conclusion: $\bA$ is negative definite whenever it is nonsingular. 

In what follows, we first include a proof of the negative semidefiniteness of $\bA$ just mentioned, for completeness (see Prop.~\ref{prp-ndef0}). 
We then give explicitly a condition on the approximation subspace which we will prove to be equivalent to the nonsingularity/negative definiteness of $\bA$ (Prop.~\ref{prp-ndef}).
We also show, by specializing this subspace condition, that if those states $s$ of interest (i.e., $\i(s) >0$) are represented by features $\phi(s)$ that are rich enough, then $\bA$ can be made negative definite, without knowledge of the model (see Cor.~\ref{cor-ndef}, Remark~\ref{rmk-seminorm2}). 
In addition, we discuss the connection of this subspace condition to seminorm projections, and show that when $\bA$ is nonsingular, the ETD($\lambda$) solution can be viewed as the solution of a projected Bellman equation involving a seminorm projection (see Remark~\ref{rmk-seminorm1}).

\subsection{Preliminaries}

First, recall that the matrix $\bA$ is said to be \emph{negative definite} if there exists $c > 0$ such that 
$$y^\top \bA y \leq - c \, \| y\|_2^2, \qquad \forall \,  y \in \rn,$$
and \emph{negative semidefinite} if $c=0$ in the preceding inequality.
The negative definiteness of $\bA$ is equivalent to that of the symmetric matrix
$$ \bA + \bA^\top = - \Fe^\top \Big( \bM (I - \PL) + (I - \PL)^\top \bM \Big) \, \Fe.$$
Similarly to \citep{Sut88,SuMW14}, our analysis will focus on the $N \times N$ symmetric matrix 
$$ \mF = \bM (I - Q) + (I - Q)^\top \bM$$ 
for the substochastic matrix $Q = \PL$ and the nonnegative diagonal matrix $\bM$ as given above.
We will use a theorem from \cite[Cor.\ 1.22, p.\ 23]{Var00}, according to which a symmetric real matrix with positive diagonal entries is positive definite if it is strictly diagonally dominant or irreducibly diagonally dominant. Note that by definition, $\mF$ is \emph{irreducibly diagonally dominant} if $\mF$ is irreducible
\footnote{A symmetric matrix $\mF$ is \emph{irreducible} if it corresponds to a connected (undirected) graph when the indices are viewed as the nodes of the graph, and the nonzero entries of $\mF$ are viewed as edges of the graph.}
and satisfies the following diagonally dominant conditions for every row of $\mF$, with strict inequality holding for at least one row:
$$   | \mF_{ss} | \geq \sum_{\bar s \not=s} | \mF_{s \bar s} |, \qquad s = 1, \ldots, N,     $$
whereas $\mF$ is \emph{strictly diagonally dominant} if it satisfies the above inequalities strictly for all rows.

We now give a proof of the fact that $\bA$ is always negative semidefinite, as mentioned at the beginning. This result is due to \citep{SuMW14}. 

Regarding notation, in the proofs below, for $v \in \re^N$, we write $v(s)$ for the $s$-th entry of $v$,
and for an expression $H$ that results in a vector in $\re^N$, we write $(H)(s)$ for the $s$-th entry of that vector. 
For an expression $H$ that results in an $N \times N$ matrix, we write $[H]_{s\bar s}$ for its $(s,\bar s)$-th element.
We write $\0$ for a zero vector in any Euclidean space.
 
\begin{prop} \label{prp-ndef0}
Let Assumption~\ref{cond-bpolicy} hold. Then $\bA$ is always negative semidefinite, and it is negative definite if $\i(s) > 0$ for all $s \in \S$.
\end{prop}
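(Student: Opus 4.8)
The plan is to reduce the claim to a positive (semi)definiteness statement about the symmetric matrix $\mF = \bM(I-Q) + (I-Q)^\top \bM$ introduced in the Preliminaries, where $Q=\PL$, and then to exhibit an explicit sum-of-squares representation of the quadratic form $v^\top \mF v$. First I would record the reduction: for $y \in \rn$ and $v = \Fe y$, the scalar $y^\top \bA y = -v^\top \bM(I-Q)v = -\tfrac12 v^\top \mF v$, since $v^\top \bM(I-Q)v$ equals its own transpose $v^\top (I-Q)^\top \bM v$ ($\bM$ being diagonal, hence symmetric). Hence $\bA$ is negative semidefinite iff $v^\top \mF v \geq 0$ for all $v$ in the range of $\Fe$, and because $\Fe$ has full column rank, $\bA$ is negative definite iff $v^\top \mF v > 0$ for every nonzero $v$ in that range. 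It therefore suffices to prove that $\mF$ is positive semidefinite on all of $\re^N$, and positive definite when $\i(s) > 0$ for all $s$.

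Next I would set up notation for the diagonal of $\bM$. Writing $m \in \re^N$ for the vector of diagonal entries of $\bM$, the defining relation $diag(\bM) = \bi^\top (I-Q)^{-1}$ means $m^\top (I-Q) = \bi^\top$, equivalently $m^\top Q = m^\top - \bi^\top$. Since $Q$ is substochastic (so all its entries are nonnegative and $(Q\1)(s) \leq 1$) and $\bi \geq \0$ (as $\d(s) > 0$ and $\i(s) \geq 0$ under Assumption~\ref{cond-bpolicy}), the Neumann series $m^\top = \bi^\top \sum_{k \geq 0} Q^k$ shows $m_s \geq 0$ for every $s$.

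The key step, which I expect to be the main source of bookkeeping, is the identity
\begin{equation*}
  v^\top \mF v = \sum_{s,\bar s} m_s\, Q_{s\bar s}\,(v_s - v_{\bar s})^2 + \sum_s \Big( m_s\big(1 - (Q\1)(s)\big) + \bi(s) \Big)\, v_s^2, \qquad v \in \re^N.
\end{equation*}
This is proved by expanding the squares in the first sum, collecting the coefficient of $v_{\bar s}^2$ into $\sum_{\bar s}(m^\top Q)(\bar s)\,v_{\bar s}^2$, and substituting $(m^\top Q)(\bar s) = m_{\bar s} - \bi(\bar s)$ from the relation above; the cross terms match $-2\,v^\top \bM Q v$ and the remaining diagonal terms collapse to $2\sum_s m_s v_s^2$, recovering $v^\top \mF v = 2\,v^\top \bM(I-Q)v$. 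The virtue of this representation is that every ingredient on the right is manifestly nonnegative: $m_s \geq 0$, $Q_{s\bar s} \geq 0$, $1 - (Q\1)(s) \geq 0$, and $\bi(s) \geq 0$. Hence $v^\top \mF v \geq 0$ for all $v$, so $\mF$ is positive semidefinite and $\bA$ is negative semidefinite.

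Finally, for negative definiteness under $\i(s) > 0$ for all $s$: then $\bi(s) = \d(s)\,\i(s) > 0$ for every $s$, so the identity gives $v^\top \mF v \geq \sum_s \bi(s)\, v_s^2 > 0$ whenever $v \neq \0$. Thus $\mF$ is positive definite, and by the reduction in the first paragraph (using that $v = \Fe y \neq \0$ for $y \neq \0$), $\bA$ is negative definite. The only genuinely delicate point is getting the coefficients in the sum-of-squares identity right; everything else reduces to sign-checking.
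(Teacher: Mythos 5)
Your proof is correct, and the key identity checks out: expanding $\sum_{s,\bar s} m_s Q_{s\bar s}(v_s-v_{\bar s})^2$ and substituting $(m^\top Q)(\bar s)=m_{\bar s}-\bi(\bar s)$ does make the $(Q\1)(s)$ and $\bi(s)$ terms cancel, leaving exactly $2\,v^\top \bM(I-Q)v = v^\top \mF v$. Your route is, however, genuinely different from the paper's. The paper works with the same symmetric matrix $\mF=\bM(I-Q)+(I-Q)^\top\bM$ but proves positive definiteness (when $\i(s)>0$ for all $s$) by showing that $\mF$ is strictly diagonally dominant with positive diagonal entries and invoking a theorem of Varga on diagonally dominant matrices; the semidefinite case is then obtained by perturbing $\i(s)$ to $\delta>0$ on the set where it vanishes and letting $\delta\to 0$. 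Your Dirichlet-form (sum-of-squares) decomposition handles both cases in one stroke, needs no external matrix theorem and no continuity argument, and yields the explicit quantitative bound $v^\top \mF v\geq \sum_s \bi(s)\,v(s)^2$, which gives a concrete modulus of negative definiteness for $\bA$ in terms of $\min_s \bi(s)$ and the smallest singular value of $\Fe$ --- something the diagonal-dominance route does not hand you directly. What the paper's approach buys in exchange is reusability: the same diagonal-dominance machinery, applied blockwise to the irreducible components of $\mF$, is what drives the sharper Prop.~\ref{prp-ndef} in Appendix~\ref{appsec-ndef}, where negative definiteness is shown to be equivalent to mere nonsingularity of $\bA$ even when $\i$ vanishes on some states; your pointwise bound degenerates there (the lower bound is zero on states with $\bi(s)=0$), so the refinement would require extra work along the lines of the paper's block decomposition. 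One small point to make explicit in a final write-up: the entrywise nonnegativity and substochasticity of $Q=\PL$ (needed for $m_s\geq 0$ and $1-(Q\1)(s)\geq 0$) is asserted in Section~\ref{sec-alg} but is itself a short computation from Eq.~(\ref{eq-bellman-def}); it should be cited or verified rather than treated as immediate.
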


\begin{proof}
We show that if $\i(s) > 0$ for all $s \in \S$, then $\mF$ is strictly diagonally dominant, and hence positive definite; and that if $\i(s) \geq 0$ for all $s \in \S$, then $\mF$ is positive semidefinite. Since $\Fe^\top \mF \Fe = - \bA - \bA^\top$, the conclusions about $\bA$ will then follow.

Let $\J = \{ s \in \S \mid \i(s) = 0 \}$. Suppose $\J = \emptyset$. By definition $\bM_{ss} = \big(\bi^\top (I - Q)^{-1} \big)(s)$. 
Using this together with the fact that $Q$ is substochastic, by a direct calculation as in \citep{SuMW14}, we have that for each $s \in \S$,
\begin{align}
  \mF_{ss} - \sum_{\bar s\not=s} | \mF_{s \bar s} | & = \bM_{ss} \cdot \left( 1 - \sum_{\bar s=1}^N Q_{s \bar s} \right)  + \sum_{\bar s=1}^N \bM_{\bar s \bar s} \cdot \big[ I - Q\big]_{\bar s  s}  \label{eq-prf-pdef0a} \\
        & \geq \, 0 +  \big( \bi^\top (I - Q)^{-1} \cdot (I - Q) \big) (s) \notag \\
         & = \, 0 + \bi(s) \label{eq-prf-pdef0b} \\
         &  > \, 0, \notag
\end{align}
where in the last strict inequality, we used the fact that $\i(s) > 0$ implies $\bi(s) > 0$ under Assumption~\ref{cond-bpolicy}(ii).
This shows that $\mF$ is strictly diagonally dominant with positive diagonal entries, and hence positive definite by \cite[Cor.\ 1.22]{Var00}.

Consider now the case $\J \not= \emptyset$. For all $s \in \J$, perturb $\i(s)$ to $\delta > 0$, and denote by $\mF_\delta$ the matrix $\mF$ corresponding to the perturbed $\i(\cdot)$. Then $\mF_\delta$ is positive definite by the preceding proof. So for the original $\mF$, by continuity, $\mF = \lim_{\delta \to 0} \mF_{\delta}$ is positive semidefinite.
\end{proof}

\subsection{Main Result}

We now give the main result of this section. It expresses the negative definiteness condition on $\bA$ explicitly in terms of a condition on the approximation subspace $E$ (the column space of $\Fe$), and it establishes the equivalence between the nonsingularity of $\bA$ and the negative definiteness of $\bA$. 

\begin{prop} \label{prp-ndef}
Let Assumption~\ref{cond-bpolicy} hold, and let $\J_0 = \{ s \in \S \mid \bM_{ss}= 0\}$. Suppose the approximation subspace $E\subset \re^{N}$ is such that 
\begin{equation} \label{cond-seminorm-a}
 v \in E \ \ \text{and} \ \  v(s) = 0,  \ \forall \, s \not\in \J_0  \qquad \Longrightarrow \qquad  v = \0.
\end{equation}  
Then the matrix $\bA$ is negative definite. Furthermore, $\bA$ is nonsingular if and only if the condition~(\ref{cond-seminorm-a}) holds.
\end{prop}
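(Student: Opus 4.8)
The plan is to reduce everything to the symmetric positive semidefinite matrix $\mF = \bM(I-Q) + (I-Q)^\top\bM$ (with $Q = \PL$), using the identity $\Fe^\top\mF\Fe = -\bA - \bA^\top$ already recorded above together with the full column rank of $\Fe$. Since $\bA$ is negative definite iff $\Fe^\top\mF\Fe$ is positive definite, and since $\Fe y$ ranges over all of $E$ as $y$ ranges over $\rn$ (with $\Fe y = \0$ only for $y=\0$), the whole statement follows once I understand the set on which $v^\top\mF v$ vanishes for $v\in E$. Because $\mF$ is positive semidefinite (Prop.~\ref{prp-ndef0}), that vanishing set is exactly $E\cap\ker\mF$, and the goal becomes to show $\ker\mF = \{v\in\re^N : v(s)=0 \text{ for all } s\notin\J_0\}$; for then condition~(\ref{cond-seminorm-a}) reads precisely $E\cap\ker\mF = \{\0\}$.

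The key computational step is a decomposition of the quadratic form into manifestly nonnegative pieces. Writing $q_s = \sum_{\bar s}Q_{s\bar s}\le 1$ and using the defining relation $diag(\bM)\,(I-Q) = \bi^\top$ (equivalently $\bM_{ss} - \sum_{\bar s}Q_{\bar s s}\bM_{\bar s\bar s} = \bi(s)$), I will expand $v^\top\mF v = 2\,v^\top\bM(I-Q)v$ and regroup the cross terms via $\sum_{s,\bar s}\bM_{ss}Q_{s\bar s}(v_s - v_{\bar s})^2$ to obtain
\[
 v^\top\mF v = \sum_s\big(\bM_{ss}(1-q_s)+\bi(s)\big)v_s^2 + \sum_{s,\bar s}\bM_{ss}Q_{s\bar s}(v_s-v_{\bar s})^2 .
\]
Each term is nonnegative, which re-proves semidefiniteness and, more importantly, pins down when the form is zero: all contributions must vanish separately.

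Next I will extract the structural facts from $\bM_{ss} = \big(\bi^\top(I-Q)^{-1}\big)(s) = \sum_{\bar s}\bi(\bar s)\sum_{k\ge0}[Q^k]_{\bar s s}$. This shows $\bM_{ss}>0$ iff $s$ is reachable under $Q$ from some ``interested'' state $\bar s$ (one with $\i(\bar s)>0$), and yields the closure property that if $s\notin\J_0$ and $Q_{s\bar s}>0$ then $\bar s\notin\J_0$. From the decomposition, $v^\top\mF v = 0$ forces $v_s=0$ whenever $\bi(s)>0$ (second summand) and $v_s = v_{\bar s}$ whenever $\bM_{ss}>0$ and $Q_{s\bar s}>0$ (third summand); propagating this equality along a $Q$-path from an interested state to any $s\notin\J_0$ then gives $v_s=0$ there. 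Conversely, if $v$ is supported on $\J_0$, the closure property makes every nonzero-coefficient term vanish, so $v\in\ker\mF$. This establishes $\ker\mF = \{v : v(s)=0\ \forall s\notin\J_0\}$ and hence the equivalence of condition~(\ref{cond-seminorm-a}) with negative definiteness of $\bA$, covering the first assertion and the ``if'' part of the nonsingularity claim.

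For the ``only if'' part I argue the contrapositive: if condition~(\ref{cond-seminorm-a}) fails, pick $v\in E$, $v\ne\0$, supported on $\J_0$, and write $v=\Fe y$ with $y\ne\0$. Then $\bM v = \0$ (each coordinate kills either $\bM_{ss}$ or $v_s$), and the closure property gives $\big((I-Q)v\big)(s)=0$ at every $s\notin\J_0$, so $\bM(I-Q)v=\0$; hence $\bA y = -\Fe^\top\bM(I-Q)v = \0$ and $\bA$ is singular. I expect the main obstacle to be exactly the kernel characterization: the diagonal-dominance reasoning of Prop.~\ref{prp-ndef0} only controls states with $\i(s)>0$, whereas in general $\J_0$ is strictly smaller than $\{s:\i(s)=0\}$, so the real work lies in the off-diagonal coupling term and the reachability/closure bookkeeping needed to propagate the zero pattern of $v$ onto all of $\S\setminus\J_0$.
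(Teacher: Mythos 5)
Your proposal is correct, but it takes a genuinely different route from the paper. The paper's proof permutes the states to put $\mF$ into block-diagonal form with irreducible blocks plus a zero block indexed by $\J_0$ (using Lemma~\ref{lma-pdef-a} to identify the zero rows/columns of $\mF$ with $\J_0$), shows $Q$ inherits a matching block lower-triangular structure, and then invokes the theorem on irreducibly diagonally dominant matrices \cite[Cor.\ 1.22]{Var00} blockwise via Lemma~\ref{lma-pdef-b}; the singularity direction is handled by the rank identity $\bA = -\Fe_1^\top \hat M (I - \hat Q)\Fe_1$. You instead prove the single sum-of-squares identity
$v^\top\mF v = \sum_s\big(\bM_{ss}(1-q_s)+\bi(s)\big)v_s^2 + \sum_{s,\bar s}\bM_{ss}Q_{s\bar s}(v_s-v_{\bar s})^2$
(which I have checked: it follows from $2v_sv_{\bar s}=v_s^2+v_{\bar s}^2-(v_s-v_{\bar s})^2$ and the column relation $\sum_{s}\bM_{ss}Q_{s\bar s}=\bM_{\bar s\bar s}-\bi(\bar s)$), and then characterize $\ker\mF$ exactly as the vectors supported on $\J_0$, using the closure property $\bM_{\bar s\bar s}\ge\bM_{ss}Q_{s\bar s}$ to propagate $v_s=0$ along $Q$-paths from states with $\bi>0$. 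This makes condition~(\ref{cond-seminorm-a}) literally the statement $E\cap\ker\mF=\{\0\}$, from which both directions of the equivalence drop out (the ``only if'' direction by exhibiting $\bM(I-Q)v=\0$ for a nonzero $v\in E$ supported on $\J_0$). Your argument is more elementary and self-contained --- it avoids the block decomposition and the appeal to \cite{Var00} entirely and yields an explicit kernel description --- while the paper's route exposes more of the combinatorial structure of $Q$ and $\bM$ (the block-triangular form), which it then reuses for the singularity half. All the steps you sketch are sound; the only point worth making explicit when you write it up is that for a symmetric positive semidefinite $\mF$ one has $v^\top\mF v=0$ if and only if $\mF v=\0$, and that positivity of $y^\top(-\bA-\bA^\top)y$ for all $y\ne\0$ upgrades to a uniform constant $c>0$ by compactness of the unit sphere.
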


The corollary below gives a sufficient condition (\ref{cond-seminorm}) for $\bA$ being negative definite, which can be fulfilled without knowledge of the model, as we will elaborate in Remark~\ref{rmk-seminorm2}. This corollary is a direct consequence of the preceding proposition, and follows from the observation that since $\i(s) > 0$ implies $\bM_{ss} > 0$, the condition (\ref{cond-seminorm}) implies the condition (\ref{cond-seminorm-a}) in Prop.~\ref{prp-ndef}. 

\begin{cor} \label{cor-ndef}
Let Assumption~\ref{cond-bpolicy} hold, and let $\J = \{ s \in \S \mid \i(s) = 0\}$. Suppose the approximation subspace $E\subset \re^{N}$ is such that 
\begin{equation} \label{cond-seminorm}
 v \in E \ \ \text{and} \ \  v(s) = 0,  \ \forall \, s \not\in \J  \qquad \Longrightarrow \qquad  v = \0.
 \end{equation}
Then the matrix $\bA$ is negative definite.
\end{cor}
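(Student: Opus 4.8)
The plan is to obtain the corollary as an immediate consequence of Proposition~\ref{prp-ndef}. That proposition already settles negative definiteness via the subspace condition~(\ref{cond-seminorm-a}), which is phrased in terms of the set $\J_0 = \{s \in \S \mid \bM_{ss} = 0\}$, whereas the corollary's hypothesis~(\ref{cond-seminorm}) is phrased in terms of $\J = \{s \in \S \mid \i(s) = 0\}$. So the entire task reduces to verifying that~(\ref{cond-seminorm}) implies~(\ref{cond-seminorm-a}); once this is shown, Prop.~\ref{prp-ndef} yields that $\bA$ is negative definite.

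The only substantive ingredient is the set inclusion $\J_0 \subseteq \J$, equivalently the implication $\i(s) > 0 \Rightarrow \bM_{ss} > 0$. First I would establish this from $diag(\bM) = \bi^\top (I - \PL)^{-1}$ together with $\bi(s) = d_{\pi^o}(s)\,\i(s)$. Since $\PL$ is substochastic and $(I - \PL)^{-1}$ exists under Assumption~\ref{cond-bpolicy}, its nonnegativity forces the spectral radius of $\PL$ below $1$, so the Neumann series gives $(I - \PL)^{-1} = \sum_{k \geq 0} (\PL)^k$, a nonnegative matrix whose diagonal entries are at least $1$. Using $\bi \geq \0$ and discarding the remaining nonnegative terms,
\[
\bM_{ss} = \big(\bi^\top (I - \PL)^{-1}\big)(s) \;\geq\; \bi(s)\,\big[(I-\PL)^{-1}\big]_{ss} \;\geq\; \bi(s) = d_{\pi^o}(s)\,\i(s),
\]
and since $d_{\pi^o}(s) > 0$ by Assumption~\ref{cond-bpolicy}(ii), any $s$ with $\i(s) > 0$ satisfies $\bM_{ss} > 0$. (This is precisely the diagonal lower bound already used at Eq.~(\ref{eq-prf-pdef0b}) in the proof of Prop.~\ref{prp-ndef0}.) Hence $\J_0 \subseteq \J$, and therefore $\S \setminus \J \subseteq \S \setminus \J_0$.

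Finally I would close the argument with a one-line set-theoretic step. Suppose~(\ref{cond-seminorm}) holds, and take any $v \in E$ with $v(s) = 0$ for all $s \not\in \J_0$. Because $\S \setminus \J \subseteq \S \setminus \J_0$, this vanishing in particular holds for all $s \not\in \J$, so $v$ meets the hypothesis of~(\ref{cond-seminorm}) and hence $v = \0$; this is exactly condition~(\ref{cond-seminorm-a}), and Prop.~\ref{prp-ndef} then gives the negative definiteness of $\bA$. I do not expect any genuine obstacle here: the reduction and the set inclusion are routine, and the single nontrivial point, the diagonal positivity $\bM_{ss} > 0$, is itself a one-line consequence of the nonnegativity of $(I - \PL)^{-1}$.
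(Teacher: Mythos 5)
Your proposal is correct and coincides with the paper's own (very short) argument: the paper likewise derives the corollary from Prop.~\ref{prp-ndef} by observing that $\i(s)>0$ implies $\bM_{ss}>0$ (via $\bM_{ss}\ge \bi(s)=d_{\pi^o}(s)\,\i(s)$ from the Neumann series for $(I-\PL)^{-1}$), so that $\J_0\subseteq\J$ and condition~(\ref{cond-seminorm}) implies condition~(\ref{cond-seminorm-a}). Your write-up just makes this one-line reduction fully explicit.
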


We now proceed to prove Prop.~\ref{prp-ndef}. Roughly speaking, the method of proof is to decompose the matrix $\mF$ into irreducible diagonal blocks and use, among others, the theorem \cite[Cor.\ 1.22, p.\ 23]{Var00} on irreducibly diagonally dominant matrices mentioned earlier.

In the two technical lemmas that follow, we let the matrix $\mF$ and the nonnegative diagonal matrix $\bM$ take a slightly more general form: 
$$ \mF = \bM (I - Q) + \big( \bM (I - Q)\big)^\top, \qquad diag(\bM) = \bi^\top \, (I - Q)^{-1}, $$
where $Q$ is a substochastic matrix (not necessarily $\PL$), and $\bi$ is a nonnegative vector (for notational simplicity, we keep using $\bi$ instead of introducing new notation). 

\begin{lem} \label{lma-pdef-a}
Suppose the matrix $(I - Q)$ is invertible. 
Then the $s$-th diagonal entry $\bM_{ss} = 0$ if and only if the $s$-th row and $s$-th column of $\mF$ contain all zeros.
\end{lem}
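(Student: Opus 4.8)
The plan is to work directly with the entries of $\mF = \bM(I-Q) + \big(\bM(I-Q)\big)^\top$, reducing both implications to the nonnegativity structure of $\bM,Q,\bi$ together with the defining identity for $diag(\bM)$. Writing $B = \bM(I-Q)$, so that $B_{s\bar s} = \bM_{ss}(\delta_{s\bar s} - Q_{s\bar s})$, the entries of $\mF = B + B^\top$ are
\begin{equation*}
 \mF_{ss} = 2\,\bM_{ss}\,(1 - Q_{ss}), \qquad \mF_{s\bar s} = -\,\bM_{ss}\,Q_{s\bar s} - \bM_{\bar s\bar s}\,Q_{\bar s s} \quad (\bar s \neq s).
\end{equation*}
First I would record the key identity obtained from $diag(\bM) = \bi^\top (I-Q)^{-1}$ by multiplying on the right by $(I-Q)$, namely $diag(\bM)(I-Q) = \bi^\top$, whose $s$-th entry reads
\begin{equation*}
 \bM_{ss} - \sum_{\bar s} \bM_{\bar s\bar s}\, Q_{\bar s s} = \bi(s) \geq 0.
\end{equation*}
I would also note that invertibility of $(I-Q)$ forces $Q_{ss} < 1$ for every $s$: if $Q_{ss}=1$, substochasticity would make the entire $s$-th row of $I-Q$ vanish, contradicting invertibility; hence $1 - Q_{ss} > 0$ throughout.

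For the ``if'' direction I would argue as follows. Assuming the $s$-th row and column of $\mF$ are identically zero, the diagonal entry in particular vanishes: $\mF_{ss} = 2\bM_{ss}(1 - Q_{ss}) = 0$. Since $1 - Q_{ss} > 0$ and $\bM_{ss}\geq 0$, this immediately gives $\bM_{ss} = 0$. This direction is routine and uses only the diagonal entry and the bound $Q_{ss}<1$.

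For the ``only if'' direction, assuming $\bM_{ss} = 0$, the displayed identity becomes $\sum_{\bar s}\bM_{\bar s\bar s}Q_{\bar s s} = -\bi(s) \leq 0$. The crux is that every summand $\bM_{\bar s\bar s}Q_{\bar s s}$ is nonnegative (both factors are), so a sum forced to be $\leq 0$ must in fact be $0$, and therefore each term vanishes: $\bM_{\bar s\bar s}Q_{\bar s s} = 0$ for all $\bar s$. Substituting $\bM_{ss} = 0$ and $\bM_{\bar s\bar s}Q_{\bar s s} = 0$ into the entry formulas yields $\mF_{ss} = 0$ and $\mF_{s\bar s} = 0$ for all $\bar s \neq s$, so the whole $s$-th row of $\mF$ is zero; by symmetry of $\mF$ the $s$-th column is zero too.

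The only genuinely non-routine step is this ``only if'' direction, where one must extract the vanishing of the \emph{entire} row (not merely the diagonal). I expect the main obstacle to be precisely the termwise-vanishing argument: combining $diag(\bM)(I-Q) = \bi^\top$ with the entrywise nonnegativity of $\bM$, $Q$ and $\bi$ to conclude that the cross terms $\bM_{\bar s\bar s}Q_{\bar s s}$ all vanish, since these are exactly the terms that appear in the off-diagonal entries $\mF_{s\bar s}$. Everything else is direct substitution into the two entry formulas above.
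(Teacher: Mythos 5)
Your proof is correct, but it takes a genuinely different route from the paper's in both directions. For the ``if'' direction, the paper argues via the whole $s$-th row of $\bM(I-Q)$: if $\bM_{ss}\neq 0$ that row is nonzero, and its entries cannot be cancelled by those of $\big(\bM(I-Q)\big)^\top$ because the diagonal contributions are both nonnegative and the off-diagonal ones both nonpositive; you instead read off only the diagonal entry $\mF_{ss}=2\bM_{ss}(1-Q_{ss})$ and use $Q_{ss}<1$, which is a shorter and equally valid argument. For the ``only if'' direction the difference is more substantial: the paper expands $(I-Q)^{-1}=\sum_{k\ge 0}Q^k$ and derives a contradiction by propagating positivity through powers of $Q$ (if $\bM_{\bar s\bar s}Q_{\bar s s}\neq 0$ then $(\bi^\top Q^{k+1})(s)>0$, forcing $\bM_{ss}>0$), whereas you multiply $diag(\bM)=\bi^\top(I-Q)^{-1}$ on the right by $(I-Q)$ to get $\bM_{ss}-\sum_{\bar s}\bM_{\bar s\bar s}Q_{\bar s s}=\bi(s)\ge 0$ and conclude termwise vanishing from nonnegativity. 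Your version avoids invoking the Neumann series (whose validity rests on $\rho(Q)<1$, which the paper uses without comment), stays entirely at the level of a single linear identity --- the same identity the paper itself uses in Eqs.~(\ref{eq-prf-pdef0a})--(\ref{eq-prf-pdef0b}) --- and delivers $\bi(s)=0$ as a free byproduct, a fact the paper has to establish separately in a footnote to the proof of Prop.~\ref{prp-ndef}. The paper's reachability-style argument, on the other hand, makes more transparent \emph{why} $\bM_{ss}=0$ forces the incoming terms $\bM_{\bar s\bar s}Q_{\bar s s}$ to vanish (no emphasis can flow into $s$ without making $\bM_{ss}$ positive). Both proofs rely on the same standing hypotheses (entrywise nonnegativity of $\bM$, $Q$, $\bi$ and invertibility of $I-Q$), and your steps are all justified.
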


\begin{proof}
We have $\mF = \bM (I - Q) + \big( \bM (I - Q)\big)^\top$.  Suppose $s$ is a state with $\bM_{ss} \not = 0$. Then the $s$-th row of the matrix $M(I-Q)$ is nonzero (because the $s$-th row of $I-Q$ is nonzero, given that $(I-Q)^{-1}$ exists). The nonzero entries of this row cannot be canceled out by the corresponding entries from the $s$-th row of $\big( \bM (I - Q)\big)^\top$, because $Q$ is a substochastic matrix and $\bM$ is nonnegative. Therefore, the $s$-th row of $\mF$ must also be nonzero. This proves the ``if'' part. 

For the ``only if'' part, suppose $s$ is a state with $\bM_{ss}=0$.  Then the $s$-th row of the matrix $\bM(I-Q)$ contains all zeros, so, since $\mF = \bM (I - Q) + \big( \bM (I - Q)\big)^\top$ and is symmetric, to prove the ``only if'' part, we only need to show that the $s$-th column of $\bM(I-Q)$ is a zero column. We prove this by contradiction. 

Suppose for some state $\bar s \not=s$, the $(\bar s, s)$-entry of the matrix $\bM(I-Q)$ is nonzero. Then using the definition of $\bM_{\bar s \bar s}$, this entry can be expressed as
$$   \M_{\bar s \bar s} \cdot \big[ I-Q \big]_{\bar s s} = -  \big( \bi^\top \, (I - Q)^{-1} \big)(\bar s) \cdot Q_{\bar s s} \not=0,$$
which, in view of the equality $(I-Q)^{-1} = \sum_{k \geq 0} Q^k$ and the nonnegativity of $Q$, implies that
$$  \big(\bi^\top \, Q^k \big) (\bar s) \cdot Q_{\bar s s} > 0 \quad \text{for some } k \geq 0.$$
This in turn implies that for the state $s$,
$$   \big(\bi^\top \, Q^k \big) (s) > 0 \quad \text{for some } k \geq 0,   $$ 
and hence 
$$\bM_{ss} = \big( \bi^\top \, (I - Q)^{-1} \big)( s) \geq \big(\bi^\top \, Q^k \big) (s) > 0,$$ 
contradicting the assumption $\bM_{ss}=0$. Thus the $s$-th column of $\bM(I-Q)$ must be a zero column. 
\end{proof}

\begin{lem} \label{lma-pdef-b}
Suppose that the matrix $(I - Q)$ is invertible and the matrix $\mF$ is irreducible. Then the diagonal entries of $\bM$ must be positive, and $\mF$ is irreducibly diagonally dominant with positive diagonal entries, and hence positive definite.
\end{lem}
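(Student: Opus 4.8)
The plan is to establish the three chained conclusions in order --- positivity of the diagonal of $\bM$, then that $\mF$ is irreducibly diagonally dominant with positive diagonal entries, and finally its positive definiteness via Varga's criterion --- reusing the diagonal-dominance computation already carried out in the proof of Prop.~\ref{prp-ndef0} together with the structural Lemma~\ref{lma-pdef-a}.

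First I would show $\bM_{ss} > 0$ for every $s$. Suppose instead $\bM_{ss} = 0$ for some $s$. By Lemma~\ref{lma-pdef-a}, the $s$-th row and column of $\mF$ would then be identically zero, in particular $\mF_{ss}=0$, so the index $s$ would be an isolated node (no incident edges) in the undirected graph associated with $\mF$. This contradicts the assumed irreducibility of $\mF$ (connectedness of that graph) whenever $N \geq 2$; the case $N=1$ is trivial. Hence $\bM_{ss}>0$ for all $s$, which is the first assertion of the lemma.

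Next I would record the diagonal entries and the diagonal-dominance margins of $\mF$. Since $\mF = \bM(I-Q) + (\bM(I-Q))^\top$, its diagonal is $\mF_{ss} = 2\bM_{ss}(1 - Q_{ss})$. The invertibility of $I - Q$ with $Q$ substochastic rules out $Q_{ss}=1$ (that would make the $s$-th row of $I-Q$ vanish), so $Q_{ss} < 1$ and therefore $\mF_{ss} > 0$ for every $s$, using $\bM_{ss}>0$ from the previous step. For the off-diagonal margins, I would reuse the computation (\ref{eq-prf-pdef0a})--(\ref{eq-prf-pdef0b}) from the proof of Prop.~\ref{prp-ndef0}; because each off-diagonal entry $\mF_{s\bar s}= -\bM_{ss}Q_{s\bar s}-\bM_{\bar s\bar s}Q_{\bar s s}\leq 0$, so $|\mF_{s\bar s}| = -\mF_{s\bar s}$, that computation yields the identity
\[
 \mF_{ss} - \sum_{\bar s \neq s} |\mF_{s\bar s}| = \bM_{ss}\Big(1 - \textstyle\sum_{\bar s} Q_{s\bar s}\Big) + \bi(s) \geq \bi(s) \geq 0 ,
\]
which already gives weak diagonal dominance in every row.

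Finally, to promote this to irreducible diagonal dominance I need strict inequality in at least one row. Here I would argue that $\bi \neq \0$: since $diag(\bM) = \bi^\top (I-Q)^{-1}$ and $(I-Q)^{-1}$ is invertible, $\bi = \0$ would force $\bM$ to be the zero matrix, contradicting $\bM_{ss}>0$ just proved. Thus $\bi(s) > 0$ for some $s$, and the displayed margin inequality is strict at that row. Combining irreducibility (hypothesis), weak dominance in all rows, strict dominance in one row, and positive diagonal entries, $\mF$ is an irreducibly diagonally dominant symmetric matrix with positive diagonal, so \cite[Cor.\ 1.22, p.\ 23]{Var00} gives that $\mF$ is positive definite. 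I expect the only genuinely delicate step to be the first one: correctly invoking Lemma~\ref{lma-pdef-a} to translate $\bM_{ss}=0$ into an isolated graph node and reconciling this with the connectedness encoded by irreducibility; the remaining steps are bookkeeping layered on top of the already-established computation from Prop.~\ref{prp-ndef0}.
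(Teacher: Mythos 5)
Your proposal is correct and follows essentially the same route as the paper: rule out $\bM_{ss}=0$ via Lemma~\ref{lma-pdef-a} and irreducibility, reuse the dominance computation (\ref{eq-prf-pdef0a})--(\ref{eq-prf-pdef0b}), and conclude with \cite[Cor.~1.22]{Var00}. The only (harmless) deviation is how you get strictness in one row: you observe $\bi \neq \0$ so that the term $\bi(s)$ is positive somewhere, whereas the paper notes that invertibility of $I-Q$ forces some row sum of $Q$ to be below $1$, making the term $\bM_{ss}\big(1-\sum_{\bar s}Q_{s\bar s}\big)$ positive somewhere; both arguments are valid.
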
 

\begin{proof}
If $s$ is a state with $\bM_{ss}=0$, by Lemma~\ref{lma-pdef-a}, the $s$-th row and $s$-th column of $\mF$ would contain all zeros, which cannot happen if $\mF$ is irreducible. Thus $\bM_{ss} > 0$ for all $s \in \S$. 

We have calculated in the proof of Prop.~\ref{prp-ndef0} [cf.\ Eqs.~(\ref{eq-prf-pdef0a})-(\ref{eq-prf-pdef0b})] that for nonnegative $\i(\cdot)$,
\begin{align}
  \mF_{ss} - \sum_{\bar s\not=s} | \mF_{s \bar s} | & = \bM_{ss} \cdot \left( 1 - \sum_{\bar s=1}^N Q_{s \bar s} \right)  + \sum_{\bar s=1}^N \bM_{\bar s \bar s} \cdot \big[ I - Q\big]_{\bar s  s} \,  \geq 0 \notag 
\end{align}
for all rows $s$. The strict inequality $\mF_{ss} - \sum_{\bar s \not=s} | \mF_{s \bar s} |  > 0$ must hold for some $s$. To see this, note that the invertibility of $(I - Q)$ implies that $ 1 - \sum_{\bar s =1}^N Q_{s \bar s}  > 0$ for some $s$, which together with $\bM_{ss} > 0$ implies that the first term in the right-hand side above, $\bM_{ss} \cdot \left( 1 - \sum_{\bar s=1}^N Q_{s \bar s} \right)$, must be positive for at least one row $s$, whereas the second term in the right-hand side above equals $\bi(s)\geq 0$ [cf.\  Eqs.~(\ref{eq-prf-pdef0a})-(\ref{eq-prf-pdef0b})]. Since $\mF$ is irreducible by assumption, this proves that $\mF$ is irreducibly diagonally dominant. 

Finally, since $Q$ is substochastic and $(I - Q)^{-1}$ exists, the diagonals of $I - Q$ must be positive. The diagonals of $\bM$ are also positive, as proved earlier. Thus the diagonal entries $\mF_{ss} >0$ for all rows $s$. It then follows from \cite[Cor.\ 1.22]{Var00} that $\mF$ is positive definite.
\end{proof}

We are now ready to prove Prop.~\ref{prp-ndef}. Regarding notation, in the proof, if $\mF_1, \mF_2, \ldots, \mF_L$ are $L$ square matrices (which can have different sizes), 
we will write $diag\big(\mF_1, \mF_2, \ldots, \mF_L\big)$ for the block-diagonal matrix that has $\mF_k$ as its $k$-th diagonal block. However, for a single square matrix $\mF_1$, we will keep using $diag(\mF_1)$ to mean the diagonal of $\mF_1$.
\medskip

\begin{proofof}{Prop.~\ref{prp-ndef}}
By Assumption~\ref{cond-bpolicy}(i), $(I - \P \Gm)^{-1}$ exists, which implies that for the substochastic matrix $Q = \PL$ [cf.\ Eq.~(\ref{eq-bellman-def})], $(I - Q)^{-1}$ also exists. So the matrices $\bM$, $\bA$ and $\mF$ are well defined. 
By reordering the states if necessary, we can arrange $\mF$ into a block-diagonal matrix with $L$ blocks, 
\begin{equation} \label{eq-dcmp-G}
  \mF = diag\Big( \mF^{(1)}, \, \ldots, \, \mF^{(L-1)}, \, \mF^{(L)} \Big) 
\end{equation}  
such that: 
\begin{enumerate}
\item[(i)] for each $\ell=1, \ldots, L-1$, the $\ell$th-block $\mF^{(\ell)}$ is irreducible; and 
\item[(ii)] the $L$-th block $\mF^{(L)}$ is a zero matrix (if $\mF$ does not have a zero block, we will treat $\mF^{(L)}$ as a matrix of size zero, and this will not affect the proof below).
\end{enumerate}
Note that by Lemma~\ref{lma-pdef-a}, 
the row/column indices associated with the zero block $\mF^{(L)}$ are exactly those in the set
\begin{equation} 
  \J_0 = \{ s \in \S \mid \bM_{ss} = 0 \}. \notag
\end{equation} 
Since the condition~(\ref{cond-seminorm-a}) rules out the case $\J_0=\S$, $\mF$ cannot be a zero matrix, so it must have at least one irreducible block.

We prove next that the matrix $Q$ has the following structure, matching the block-diagonal structure of $\mF$:
\begin{equation} \label{eq-dcmp-Q}
     Q = \left[ \begin{matrix}    
         Q^{(1)}  &   &   &  &  \\
            & Q^{(2)}   &   &  &  \\*[-2mm]
                  & &    \line(2,-1){18}  & & \\
                     & & & Q^{(L-1)} &  \\
                     * & * &  \cdots  & * \ & *
         \end{matrix}  
      \right]
\end{equation} 
where the blocks $Q^{(\ell)}$, $\ell \leq L-1$, on the diagonal correspond to the blocks $\mF^{(\ell)}$, $\ell \leq L-1$, on the diagonal of $\mF$, the unmarked blocks contain all zeros, and the $*$-blocks can have both zeros and positive entries. 

To prove Eq.~(\ref{eq-dcmp-Q}) by contradiction, suppose it does not hold. This means that there must exist two states $s \not= \bar s$ with $Q_{s \bar s} > 0$, but the entry $Q_{s \bar s}$ lies inside an unmarked block of the matrix on the right-hand side of Eq.~(\ref{eq-dcmp-Q}). This position of $Q_{s \bar s}$ implies $\mF_{s \bar s} = 0$, which is possible only if $\bM_{ss} = 0$ (otherwise, $Q_{s \bar s} \not=0$ would force $\mF_{s \bar s} \not= 0$). But if $\bM_{ss}=0$, $s \in \J_0$, which is the set of indices associated with the last zero block, as shown earlier.
Consequently, the entry $Q_{s \bar s}$ cannot lie inside an unmarked block as we assumed. This contradiction proves that Eq.~(\ref{eq-dcmp-Q}) must hold.

From the structure of $Q$ shown in~(\ref{eq-dcmp-Q}), it follows that $(I - Q)^{-1}$ has the same structure: 
\begin{equation} \label{eq-dcmp-IQ}
    \big( I - Q \big)^{-1} = \left[ \begin{matrix}    
        \big(I - Q^{(1)} \big)^{-1}   &   &   &  &  \\
            &  \big(I - Q^{(2)} \big)^{-1} &   &  &  \\*[-2mm]
                  & &    \line(2,-1){18}  & & \\
                     & & &  \big(I - Q^{(L-1)} \big)^{-1}&  \\
                     * & * &  \cdots  & * \ & *
         \end{matrix}  
      \right].
\end{equation} 
Since $\mF = \bM (I - Q) + (I - Q)^\top \bM$, 
Eqs.~(\ref{eq-dcmp-G}), (\ref{eq-dcmp-Q}) and (\ref{eq-dcmp-IQ}) together imply that for each $\ell \leq L-1$, the matrix $\mF^{(\ell)}$ can be expressed as
$$ \mF^{(\ell)} = \bM^{(\ell)} \big(I - Q^{(\ell)} \big) + \big(I - Q^{(\ell)} \big)^\top \bM^{(\ell)},$$
where $\bM^{(\ell)}$ is the  $\ell$-th diagonal block in the corresponding decomposition of $\bM$ as 
$$\bM = diag\big(\bM^{(1)}, \ldots, \bM^{(L)}  \big),$$ 
and if we decompose the vector $\bi$ similarly as $\bi = \big(\bi^{(1)}, \ldots, \bi^{(L)} \big)$, then for each $\ell \leq L - 1$, the diagonal block
$\bM^{(\ell)}$ has its diagonal entries given by
$$diag\big(\bM^{(\ell)}\big) = \big( \bi^{(\ell)} \big)^\top \big(I - Q^{(\ell)} \big)^{-1}, \qquad \ell \leq L - 1.$$ 
In the above expression, we also used the fact $\bi^{(L)} = \0$, which is implied by $\bM^{(L)}$ being a zero matrix (which we showed at the beginning of this proof).
\footnote{Using the expression $(I - Q)^{-1} = \sum_{k\geq0} Q^k$, it can be seen from the definition of $\bM_{ss}$ that $\bM_{ss} \geq \bi(s)$. Therefore, $\bM_{ss} = 0$ implies that $\bi(s) = 0$.}

We now apply Lemma~\ref{lma-pdef-b} to each irreducible block $\mF^{(\ell)}$, $\ell \leq L - 1$ (with $\bM = \bM^{(\ell)}$ and $Q=Q^{(\ell)}$, a substochastic matrix). This yields that each of these $\mF^{(\ell)}$ is positive definite, and consequently, the block-diagonal matrix
$$\hat \mF = diag \Big( \mF^{(1)}, \, \ldots, \, \mF^{(L-1)} \Big)$$ 
is positive definite.

Finally, we prove the statement of the proposition. 
For the block-diagonal decomposition of $\mF$ as $\mF = diag (\hat \mF, \mF^{(L)})$, write a point $y \in \re^{N}$ correspondingly as $y = (y_1, y_0)$. I.e., the indices of the components of $y_0$ are those in $\J_0= \{ s \in \S \mid \bM_{ss} = 0 \}$, and the dimension of $y_1$ is $\hat N = N - | \J_0 |$.

Since $\hat \mF$ is positive definite, there exists some $c > 0$ such that
\begin{equation} \label{eq-prf-pdef2}
     {y_1}^\top \hat \mF \, y_1 \geq c \, \| y_1 \|_2^2,  \qquad \forall \,   y_1 \in \re^{\hat N}.
\end{equation}
Consider a point $y =(y_1, y_0) \in E$ with $y_1 = \0$. Then $y_0=\0$ by the assumption (\ref{cond-seminorm-a}). 
Since $E$ is a subspace, this implies that there exists some constant $\delta > 0$ such that 
\begin{equation} \label{eq-prf-pdef3}
   \inf_{y \in E, \, \| y \|_2 = 1} \| y_1 \|_2  \,  \geq \delta.
\end{equation}   
Using Eqs.~(\ref{eq-prf-pdef2})-(\ref{eq-prf-pdef3}), we have
\begin{equation} \label{eq-prf-pdef4}
  \inf_{y \in E, \, \| y \|_2 = 1} y^\top \mF \, y \, = \inf_{y \in E, \, \| y \|_2 = 1} {y_1}^\top \hat \mF \, y_1 \,  \geq  \, \inf_{y \in E, \, \| y \|_2 = 1} c \,   \| y_1\|_2^2 \, \geq c \, \delta^2 > 0.
\end{equation}  
Since $E$ is the column space of $\Fe$ and $\Fe$ has linearly independent columns by definition, the inequality (\ref{eq-prf-pdef4}) establishes that the matrix $\Fe^{\top} \mF \Fe = - \bA - \bA^\top$ is positive definite, and consequently, $\bA$ is negative definite.

The preceding proof also shows that $\bA$ is nonsingular if the condition~(\ref{cond-seminorm-a}) holds. To complete the proof, let us assume that the condition~(\ref{cond-seminorm-a}) does not hold and show that $\bA$ must be singular. We will use the structure of the matrix $\bM(I - Q)$ revealed in the preceding analysis to proof this. Decompose $\Fe$ into two blocks as 
$$\Fe = \left[ \begin{matrix}  
    \Fe_1 \\
    \Fe_0
     \end{matrix}  
      \right] $$
where $\Fe_0$ consists of those rows of $\Fe$ whose indices are in $\J_0= \{ s \in \S \mid \bM_{ss} = 0 \}$. 
Denote
$$ \hat M = diag\big(\bM^{(1)}, \ldots, \bM^{(L-1)}  \big), \qquad \ \hat Q = diag\big(Q^{(1)}, \ldots, Q^{(L-1)} \big),$$
which are the sub-matrices of $\bM$ and $Q$, respectively, obtained by deleting the rows/columns whose indices are in $\J_0$.
From the structure of the matrices $Q, (I - Q)^{-1}$ and the corresponding expression of $\bM^{(\ell)}, \ell \leq L-1$, that we showed earlier, it follows that the matrix $ \bA = - \Fe^\top \bM (I -  Q)  \Fe $ is indeed given by
$$ \bA = - \Fe_1^\top \, \hat M (I - \hat Q) \, \Fe_1.$$
Now if the condition (\ref{cond-seminorm-a}) does not hold, then there exists $y =(y_1, y_0) \in E$ such that $y_1=\0$ and $y_0 \not=\0$. 
Expressing $y$ in terms of $\Fe$, we have $y_1 = \Fe_1 x = 0$ and $y_0 = \Fe_0 x \not=0$ for some nonzero $x \in \rn$. 
This implies $\text{rank}(\Fe_1) < n$, so using the preceding expression of $\bA$, we have $\text{rank}(\bA) < n$ and hence $\bA$ is singular.
\end{proofof}

Finally, we make two remarks on the conditions (\ref{cond-seminorm-a}) and (\ref{cond-seminorm}) in Prop.~\ref{prp-ndef} and Cor.~\ref{cor-ndef}.

\begin{rem}[Seminorm projection] \label{rmk-seminorm1}
\rm Using seminorm projections to formulate the projected Bellman equations associated with TD methods is introduced in \citep{yb-bellmaneq}. There, conditions of the form~(\ref{cond-seminorm-a}) or (\ref{cond-seminorm}) are used to define a projection on the approximation subspace with respect to a seminorm.
We can use this formulation here to interpret the solution of ETD($\lambda$) and ELSTD($\lambda$). 
Specifically, define a weighted Euclidean seminorm $\| \cdot \|_\bM$ on $\re^{N}$, using $diag(\bM)$ as the weights, as 
$$ \| v \|^2_{\bM} = \textstyle{\sum}_{s \in \S}  \, \bM_{ss} \cdot v(s)^2.$$ 
Condition~(\ref{cond-seminorm-a}) ensures     
that the projection $\Pi_{\bM}$ onto $E$ with respect to the seminorm $\| \cdot \|_{\bM}$ is well-defined and has the matrix representation
$$ \Pi_{\bM} = \Fe \, \big( \Fe^\top \bM \Fe \big)^{-1} \, \Fe^\top  \bM$$
\citep[cf.][Sec.\ 2.1]{yb-bellmaneq}. So by Prop.~\ref{prp-ndef} and the convergence results of this paper, when $\bA$ is nonsingular, 
ETD($\lambda$) and ELSTD($\lambda$) solve in the limit the projected Bellman equation
$$ v = \Pi_{\bM} \big( \rl + \PL \, v \big).$$
The relation between the solution $v = \Fe \w^*$ of this equation and the desired value function $v_\pi$, in particular, the approximation error, can be analyzed then, using the oblique projection viewpoint \citep{bruno-oblproj} (for details, see also \citep{yb-bellmaneq}).
\end{rem}

\begin{rem}[Equivalent conditions in terms of features] \label{rmk-seminorm2}
\rm 
The condition~(\ref{cond-seminorm-a}) can be paraphrased in terms of the features $\fe(s)$ as follows:
\begin{equation} \label{eq-feat0}
  \forall \, s \in \S \  \text{with} \ \bM_{ss} = 0, \quad \fe(s)  \in span \big\{ \fe(\bar s) \, \big| \, \bar s \in \S \ \text{and} \ \bM_{\bar s \bar s} > 0 \big\}; 
\end{equation}  
namely, from those states with positive emphasis weights $\bM_{\bar s \bar s} > 0$, $n$ linearly independent feature vectors can be found.
Similarly, the condition~(\ref{cond-seminorm}) can be paraphrased as:
\begin{equation} \label{eq-feat}
  \forall \, s \in \S \  \ \text{with} \ \i(s) = 0, \quad \fe(s)  \in span \big\{ \fe(\bar s) \, \big| \, \bar s \in \S \ \text{and} \ \i(\bar s) > 0 \big\};
\end{equation}  
namely, from the states with positive interest weights, $n$ linearly independent feature vectors can be found.
This shows that even without knowing $\P$ and $\bM$, by designing a rich enough set of features for states of interest beforehand, we can ensure the sufficient condition~(\ref{cond-seminorm}) for the nonsingularity and negative definiteness of the matrix $\bA$. 

Conditions like (\ref{eq-feat0}), (\ref{eq-feat}) [or equivalently, (\ref{cond-seminorm-a}), (\ref{cond-seminorm})] are naturally satisfied in the case where the approximate values of the policy $\pi$ at certain states (e.g., those states $s$ with $\bM_{ss} = 0$ or $\i(s) = 0$) are interpolated or extrapolated from the approximate values of $\pi$ at some other ``representative'' states, based on the ``proximity'' of the former states to the representative ones.
\end{rem}

\end{document}